\newif\ifworkversion
\newcounter{sideremark}
\newcommand\E[1]{\mathbb{E}\left[\,#1\,\right]}
\newcommand{\neighbor}{\mathcal{N}}
\renewcommand{\Pr}[1]{\mathbb{P}\left[\,#1\,\right]}
\newtheorem{theorem}{Theorem}
\newtheorem{lemma}{Lemma}
\newtheorem{definition}{Definition}
\newtheorem{observation}{Observation}
\newtheorem{problem}{Problem}
\newtheorem*{assumptions}{Assumptions}
\newtheorem{assumption}{Assumption}
\renewcommand{\epsilon}{\varepsilon}
\definecolor{darkred}{rgb}{0.5,0,0}
\definecolor{lightblue}{rgb}{0,0.4,0.8}
\definecolor{darkgreen}{rgb}{0,0.5,0}
\begin{document}
\bstctlcite{IEEEexample:BSTcontrol}
%
% paper title
% Titles are generally capitalized except for words such as a, an, and, as,
% at, but, by, for, in, nor, of, on, or, the, to and up, which are usually
% not capitalized unless they are the first or last word of the title.
% Linebreaks \\ can be used within to get better formatting as desired.
% Do not put math or special symbols in the title.
\title{%Rejecting Adversaries via Collaboration \\ with Application to Multi-Robot Flocking}
Crowd Vetting: Rejecting Adversaries via Collaboration \\ with Application to Multi-Robot Flocking}
%
%
% author names and IEEE memberships
% note positions of commas and nonbreaking spaces ( ~ ) LaTeX will not break
% a structure at a ~ so this keeps an author's name from being broken across
% two lines.
% use \thanks{} to gain access to the first footnote area
% a separate \thanks must be used for each paragraph as LaTeX2e's \thanks
% was not built to handle multiple paragraphs
%

\author{Frederik Mallmann-Trenn, Matthew Cavorsi, and Stephanie Gil
\thanks{Frederik Mallmann-Trenn is with the Department
of Informatics, King's College London}% <-this % stops a space
\thanks{M. Cavorsi and S. Gil are with the School of Engineering and Applied Sciences at Harvard University as of July, 2020 (previously at Arizona State University).}% <-this % stops a space
\thanks{Manuscript submitted July, 2020.}}

% The paper headers
%\markboth{Journal of \LaTeX\ Class Files,~Vol.~14, No.~8, August~2015}%
%{Shell \MakeLowercase{\textit{et al.}}: Bare Demo of IEEEtran.cls for IEEE Journals}
% The only time the second header will appear is for the odd numbered pages
% after the title page when using the twoside option.
% 

% make the title area
\maketitle

% As a general rule, do not put math, special symbols or citations
% in the abstract or keywords.
\begin{abstract}
We characterize the advantage of using a robot's neighborhood to find and eliminate adversarial robots in the presence of a Sybil attack. We show that by leveraging the opinions of its neighbors on the trustworthiness of transmitted data, robots can detect adversaries with high probability. We characterize a number of communication rounds required to achieve this result to be a function of the communication quality and the proportion of legitimate to malicious robots. This result enables increased resiliency of many multi-robot algorithms. Because our results are finite time and not asymptotic, they are particularly well-suited for problems with a time critical nature. We develop two algorithms, \emph{FindSpoofedRobots} that determines trusted neighbors with high probability, and \emph{FindResilientAdjacencyMatrix} that enables distributed computation of graph properties in an adversarial setting. We apply our methods to a flocking problem where a team of robots must track a moving target in the presence of adversarial robots. We show that by using our algorithms, the team of robots are able to maintain tracking ability of the dynamic target.
\end{abstract}

% Note that keywords are not normally used for peerreview papers.
\begin{IEEEkeywords}
Multi-Robot Systems, Distributed Robot Systems, Networked Robots, Resilient Coordination
\end{IEEEkeywords}

% For peer review papers, you can put extra information on the cover
% page as needed:
% \ifCLASSOPTIONpeerreview
% \begin{center} \bfseries EDICS Category: 3-BBND \end{center}
% \fi
%
% For peerreview papers, this IEEEtran command inserts a page break and
% creates the second title. It will be ignored for other modes.
\IEEEpeerreviewmaketitle

\section{Introduction}
% The very first letter is a 2 line initial drop letter followed
% by the rest of the first word in caps.
% 
% form to use if the first word consists of a single letter:
% \IEEEPARstart{A}{demo} file is ....
% 
% form to use if you need the single drop letter followed by
% normal text (unknown if ever used by the IEEE):
% \IEEEPARstart{A}{}demo file is ....
% 
% Some journals put the first two words in caps:
% \IEEEPARstart{T}{his demo} file is ....
% 
%IEEEPARstart
\lettrine{I}{n this} paper we study the problem of computing graph properties, such as trust among neighbors and the adjacency matrix, in the presence of adversaries with application to multi-robot flocking and tracking. Robotic aerial vehicles can assist in target tracking \cite{zhou2018resilient}, search and rescue in emergency situations \cite{lewis2019developing}, and surveillance of cities or airspace for security management \cite{1678135, FAA}. Large flocks of unmanned aerial vehicles can potentially survey an airspace and keep track of other bodies in the space such as commercial drones whose presence may be unknown to the security agencies. This is gaining increasing attention as more drones are being used by industries, hobbyists, and potentially others, in a largely unregulated way. This expands the need for management of the airspace to ensure all aerial entities can coexist safely \cite{8424544, 6815901, 664154}. Thus algorithms for cooperatively tracking dynamic, and potentially uncooperative vehicles are of particular interest in this case which motivates the current paper.

\begin{figure}[b!]
    \centering
    \includegraphics[scale=0.09]{./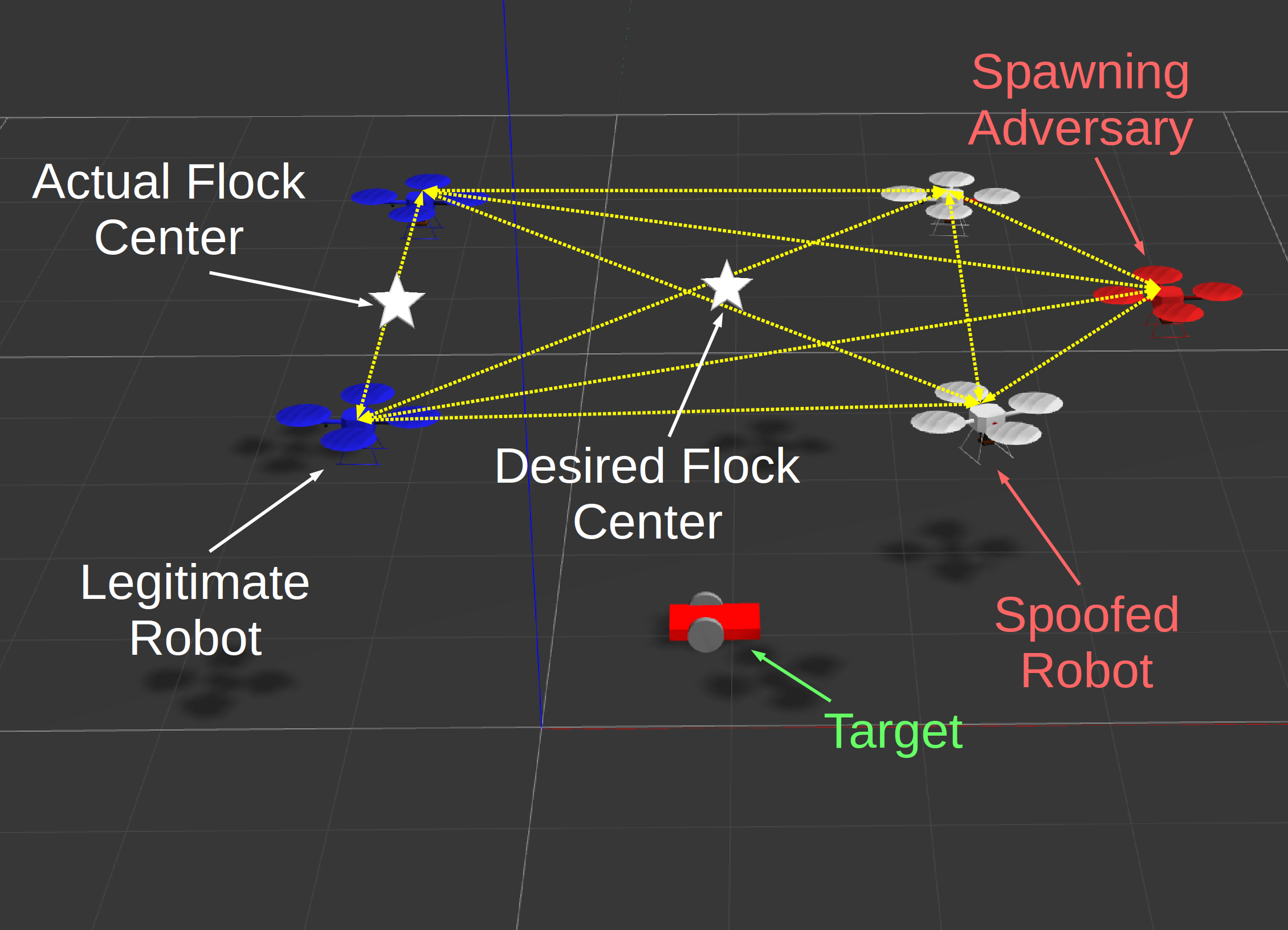}
    \caption{Depiction of a multi-robot system flocking and tracking a target in the presence of a Sybil Attack. The system contains legitimate robots, spawning adversaries, and spoofed robots. As long as the set of malicious robots is unknown, the flock is not able to accurately track the target. Communication links between robots are shown in yellow.}
    \label{fig:robots}
\end{figure}

Although multi-robot cooperation shows great promise in these domains, this cooperation is vulnerable to untrusted data when making decisions. In distributed robot teams, all members make informed decisions according to the information they have gathered from the environment and from other team members. When all the information gathered is accurate, this strategy works well, but misinformation in the system can quickly disrupt the team \cite{10.5555/3306127.3331683}.

Many works \cite{WMSR, Bullo, Sundaram}, have looked at ways to separate good information from bad information in order to regain system functionality in the face of malicious or corrupted data. These approaches however, often require underlying knowledge of the network to ensure proper dissemination of good information to counteract the negative effects of misinformation. With some underlying knowledge of the network readily available, less resilient networks can even be driven towards more resilient topologies \cite{7822915, SGcontroller}. However, in distributed systems, underlying network properties are not easily accessible and must be estimated. This estimation process is ironically very vulnerable to the same misinformation from adversaries that the aforementioned works intend to protect against. Thus, the problem of estimating network properties in the presence of adversarial influence, and as a result attaining true resilience in distributed multi-robot coordination, is a very challenging problem.

Ideally, it would be possible to understand which robots are transmitting trustworthy data without requiring all robots to be cooperative. In this way each robot can understand which subset of their neighborhood is actually trustworthy, and could therefore compute accurate local graph properties that would then be shared only with their trusted neighbors in order to construct a global understanding of graph properties such as true adjacency matrices.

For the problem of assessing the trustworthiness of data, many recent works look to the possibility of leveraging the \emph{cyberphysical} nature of multi-robot systems (such as robotic systems). If a system has access to physical information such as the locations of other robots \cite{180296}, physical sensor data such as cameras and lidars~\cite{Candell_PhysicsBasedDetection,dataVeracityCPS_entropyMethods}, or watermarking or other data extraction from the communication signals themselves~\cite{Sinopoli_Watermark,pappasSecretChannelCodes,AURO}, then this physical data can be combined with the transmitted (or cyber) data to provide more avenues for verification of the data that cannot be easily manipulated by an adversary~\cite{modelingDependability_CPS,sastryCPS_survivability,trustandRobotsSycara,trustEstimationRobots,robotTrustSchwager,spoofResilientCoordinationusingFingerprints}.  For example, previous work shows that physical information in the network can be used to detect trustworthiness in the presence of certain attacks such as a Sybil Attack \cite{AURO, GilLCSS, doi:10.1177/0278364914567793} where an adversary ``spoofs", or spawns, another robot into the system to gain a disproportionate influence on the group decision \cite{1307346}.

Our previous work~\cite{AURO,GilLCSS} builds the framework for extracting uniqueness information from wireless communication signals, allowing robots to form their own opinions of the trustworthiness of other robots through direct observation. However this previous work does not leverage the opinion of the robot's neighbors. Prior work in opinion dynamics shows that the opinion of a neighborhood can be influential in reaching a common value. In social networks, individuals with different views can eventually come to an agreement after exchanging information \cite{Acemoglu2011}. This can be a powerful tool for converging on \emph{a global understanding} of the true underlying graph structure of the network, i.e. who is trustworthy and who is not. This is the goal of the current paper.

In this work we utilize a combination of observations over wireless signals and social learning to provably converge quickly on the correct trust values (with high probability) for all robots in a team in the face of an attack. All robots will develop their own opinion about their trust of the network by observing the messages sent over the network. They then compare their opinions with that of their neighbors to reach a consensus regarding the truth on which robots can be trusted. By leveraging their neighbors opinions, each robot effectively increases the number of observations of the network available to them and through cross-verification can eliminate those messages originating from malicious robots with high probability. We show that a global consensus, where all robots agree on the trustworthiness of their neighborhoods, is possible within a finite number of communication rounds.  We also derive the probability that the converged opinion of trust is accurate.
We demonstrate our findings in a flocking scenario with a team of aerial robots tracking a dynamic object of interest, such as an aerial or ground vehicle, or another robot. Here, malicious robots feed misinformation into the system to disrupt the flock and create an escape corridor for the target. The team must be reactive enough to realize that there is a malicious influence, and resilient enough to discover the sources of that influence in order to reject them before failing their tracking objective. 

\subsection{Contributions}
We focus on the problem of coordination in the presence of a Sybil Attack where malicious robots can transmit information over the network under unique IDs (i.e. \emph{spawn} robots in the network) in order to disproportionately influence the coordination outcome. We refer to these spawned transmissions as ``spoofed'' robots and we refer to cooperating robots as \emph{legitimate}. Extending from the findings in \cite{AURO}, we assume the ability to detect spoofed and spawning robots by observing their communication signatures.
% A spoofed robot and the robot spawning it will have correlated signatures.
Specifically, we model the communication signature as a stochastic variable $X_j^t\in[0,1]$, where a value close to $1$ indicates a truthful communication from robot $j$ at time-step $t$, and a value close to $0$ indicates a fraudulent transmission (see \cref{sec:threat}).
% We require these variables to be indpendent among different time steps, however, among the same time step we allow these variables to be arbitriarly correlated. 
Our algorithm, \emph{FindSpoofedRobots}, has three key-parts. 

First, all robots broadcast a few messages. 
Based on the communication signatures $(X_j^t(i))$, each robot $i$, establishes an interim trust vector -- whom of the other robots it is inclined to trust. 

Second, each robot shares its interim trust vectors (whom they trust) with its neighbors. 

Finally, each robot $i$ establishes a final trust vector by doing the following.
For each of its neighbors, robot $i$ uses the interim trust vectors from part 2, in order to determine whether $i$ trusts neighbor $j$.
This is done by taking the majority vote among all neighboring robots $k$ that $i$ trusts and that are also neighbors of $j$.
Our results are as follows
\begin{enumerate}
\item
We prove bounds (upper and lower bounds: \cref{thm:one} and \cref{lem:lowermaj}) on the required number of rounds, where a round is the $1$-hop communication time for all robots to synchronously communicate with their neighbors. We complement our theoretical results with extensive simulation results.
Compared to previous work \cite{AURO},
our algorithm is in many cases significantly more efficient by leveraging the opinions of the neighborhoods. In particular, for some cases where the ratio between
spoofed robots to legitimate robots is fixed, say $10$ to $1$, 
the number of rounds we need is a small constant independent on the number of robots. In contrast, we show that approaches that do not consult with their neighborhoods (such as \cite{AURO}), i.e. other robots' trust vectors, require a number of rounds that depends on the total number of robots $n$.
%
% \item
% We also show that the assumptions we make are necessary. Namely the ability to detect spoofed \cref{def:observations} and spawning robots as well as connectivity constraints \cref{ass:graph}. 
\item
We then show how our algorithm can be used as a black-box to 
render any multi-robot algorithm resilient against detectable adversaries.
In particular, we show how it can be applied to computing the global adjacency matrix as well as to target tracking.
Our bounds on required communication rounds are crucial in the highly dynamic test scenario of tracking a moving target.
\item Finally, we provide extensive theoretical analysis and simulation studies to support all of our claims in the paper and provide additional intuition.

\end{enumerate}

\section{Related Work}

Multi-robot systems that perform distributed decision making are very vulnerable to attack and hard to secure \cite{10.1145/2500423.2500444, 6870484}. An adversary can attack a distributed system to try to cause it to fail its coordination goal. This paper considers adversaries to be any robot who does not cooperate with the same update rule as the rest of the team or feeds misinformation to other robots.

A common approach employed by resilient coordination methods is to use certain connectivity or topology requirements in the network to achieve convergence in the presence of misinformation or a malicious attack.  There exists substantial literature that deals with adding resilience to adversaries for consensus and flocking systems~\cite{Bullo, 7822915,7820067,10.1145/2185505.2185507}. For example, in \cite{Bullo} it is shown that most adversarial attacks can be detected by a robot as long as the connectivity of $W$ is $2m+1$. The authors in \cite{WMSR} present the Weighted-Mean Subsequence Reduced (W-MSR) algorithm to provide resilience to malicious robots by rejecting outliers from the consensus system values. They prove that the system can reach consensus using the W-MSR algorithm with up to $m$ adversaries, as long as the network is $(m+1,m+1)$-robust. Given a nonempty subset of robots, the set is said to be ($a_1,a_2$)-reachable if there are at least $a_2$ nodes in the set, each of which has at least $a_1$ neighbors outside of the set (see \cite{WMSR} for a more in-depth discussion on reachable sets and robust networks). In \cite{Sundaram} it is shown that a consensus system can be more generally described as each robot performing a distributed function calculation of the system initial values $\frac{1}{n}\sum_{j = 1}^n x_j[0]$. The authors proceed to provide resilience to up to $m$ adversaries by determining the system initial values using the observability matrix of each robot, assuming the robot has at least $2m + 1$ internally vertex-disjoint paths to each robot outside of its neighborhood.

Each of these existing adversary defense strategies require that the network topology have a certain level of connectivity. How network topology can impact the resiliency of the system is analyzed in more detail in \cite{7822915,7820067,1272911}. However, for robots to determine the level of connectivity of the network, such as the adjacency matrix or algebraic connectivity of a graph, they must rely on distributed information passed from their neighbors which could itself be manipulated or corrupted~\cite{Sundaram}. Robots currently have no way of determining their level of resilience to adversaries while also discerning between good and bad information that they receive from the other robots. Therefore the problem of distributed estimation of graph properties in an adversarial setting is very challenging and no viable solutions to protect against these adversaries have been presented in the literature to the best of our knowledge.

Works such as \cite{7822915} and \cite{SGcontroller} design supergradient controllers to drive a network to a more connected and resilient state, but these too rely on optimizing graph properties, like the algebraic connectivity, which must be determined in a distributed fashion.  Therefore, these methods are also inherently susceptible to malicious influence.

\emph{In this paper we study the problem of constructing a global understanding of true underlying graph properties in the presence of adversarial (non-cooperative) agents and misinformation.}

One way to protect against the damage of misinformation, that we will employ in this paper, is by understanding trust for certain robots in the network \cite{6491281}. An example of how this can be achieved is by sensing anomalies in the information received by malicious robots \cite{6315661}. Because the information is anomalous, it is possible to flag the anomalous data and reject it. However, smart adversaries can inject ``fake" good information, that looks believable to other legitimate robots, but are still forged to negatively affect the system. An example of this would be to inject information that does not appear anomalous to gain false trust in the network \cite{10.1145/2185505.2185507, 6032057, 5399524, 8250942}.

In most systems, all robots communicate some private value to one another. This value could be a sensor measurement, a vote, etc \cite{10.1145/2185505.2185507}. Without the presence of adversaries, it is reasonable that all robots will communicate the value to each other and an agreement will eventually be reached. However, an adversary can communicate incorrect values to try to sway the consensus. The adversary would most likely communicate a value much different than what the others are streaming to cause a disagreement. The W-MSR algorithm \cite{WMSR} as well as others  \cite{4543196,6481629,Pierson13adaptiveinter-robot} allow a robot to attain resilient consensus on a value by comparing its value with the other values it receives, and rejecting outliers.
%The methods determine an upper bound, $F$, of allowable malicious robots in the neighborhood to still achieve consensus, and look to reject at-most $F$ outliers from the values it receives. Rejecting these outliers from consideration in the update step drives the group toward agreement. 
These methods have proven to be very successful in reaching consensus in the presence of malicious robots, but could be susceptible to more powerful adversaries. For example, if an adversary has information on what the other robots are communicating, it can tailor its messages and send values that wouldn't be ignored by the algorithms \cite{10.1145/2185505.2185507, 8250942, booker2018effects}. Also, in the case that the true connectivity of a network can not be ascertained due to the presence of non-cooperative robots, such algorithms are difficult to execute \cite{Sundaram}.

Another way of attacking a system is through a Sybil Attack, where a single malicious robot can fake multiple identities to gain a higher influence on the network \cite{1307346,origByz,douceur2002sybil,lamport}. This attack is easy to implement and potentially detrimental to multi-robot coordination \cite{douceur2002sybil}. This type of attack will be a main focus of this work. One way that has been shown to identify these adversarial robots in a network is to analyze the transmitted message signatures themselves~\cite{AURO,GilLCSS}. In \cite{10.1145/2500423.2500444}, the authors divulge the potential of using communication channels to tell when there is a problem in the network. SecureArray \cite{10.1145/2500423.2500444} utilizes multi-antenna access points to better determine the origin of a signal since the angle of arrival of the signal can be used to tell if a signal is a threat. However, this method is not a good solution for mobile robot systems due to the bulky multi-antennas needed. Inspired by this idea, \cite{AURO} and \cite{GilLCSS} developed consensus algorithms for rejecting malicious robots based on their WiFi communication angles of arrival. Synthetic Aperture Radar (SAR) \cite{doi:10.1177/0278364914567793, fitch2012synthetic} can be used to emulate a multi-antenna array by utilizing small robotic motions, such as spinning in-place, to gather information on the wireless signal from multiple angles \cite{AURO}. SAR has been used for radar positioning \cite{4132698} and indoor localization \cite{10.1145/2639108.2639142}. \cite{AURO} uses SAR to gather a client fingerprint, measure the variance of peak locations using their Signal-to-Noise Ratio, and obtain a confidence metric about the integrity of the signal. We refer to this as \emph{observations of the trustworthiness of other robots} which can be used to form an ``opinion'' about the legitimacy of other robots operating in the network.

The current paper will take this idea a step further and utilize the opinions of trusted neighbors to fortify the rejection of malicious robots and reach a consensus on trust in the network more quickly. The idea of asking for help from neighbors is inspired by opinion dynamics in social networks. Studies in opinion dynamics in social networks have shown that social learning can help lead a group to agreement even among individuals starting with different views \cite{Acemoglu2011, hegselmann2002opinion}. Assuming individuals can develop similar opinions about the legitimacy of the robots in the system, the group can then effectively aggregate information and reject the incorrect beliefs \cite{Acemoglu2011}. 

We study applications of adversary rejection for a multi-robot flocking scenario with the goal of tracking a dynamic object. Many different methods for controlling a swarm of robots have been considered in the literature. This paper utilizes flocking-based swarm control similar to \cite{8424838, 8991431, 1205192, 1605401, 10.1145/37401.37406} that was inspired by the behavior of natural swarms like flocks of birds or schools of fish. In flocking control each robot looks to satisfy three main objectives: collision avoidance, velocity matching, and flock centering. Through satisfying these goals, the flock naturally stays uniformly spaced, but still dense, moves as a unit, and has the ability to follow references which is good for object tracking.

\section{Problem Formulation}\label{sec:problem}

Consider a multi-robot coordination problem where $n$ robots must arrange themselves around a target object to track it such as in \cref{fig:robots}. Let $P := \{ p_1[t],...,p_n[t] \} \in \mathbb{R}^3$ denote the positions of each robot in the system at time-step $t$, and $\text{VEL} := \{ \text{vel}_1[t],...,\text{vel}_n[t] \} \in \mathbb{R}^3$ denote the velocities of each robot in the system at time-step $t$, for $n$ robots.  We assume that robots have a finite (arbitrary) maximum velocity of $\text{vel}_\text{max}$. The robots in the system communicate over a network, which is represented by an undirected graph, $\mathbb{G}=(\mathbb{V},\mathbb{E})$,
where $\mathbb{V}=\{1,\hdots,n\}$ denotes the set of node indices for the robots and $\mathbb{E}\subset\mathbb{V}\times\mathbb{V}$ 
denotes the set of undirected edges. We use $\{i,j\}$ to denote 
the edge connecting robots $i$ and $j$. The graph and set of edges is given.
The set of \emph{neighbors} of robot $i$ is denoted by $\neighbor_i= \{j\in\mathbb{V}\mid \{i,j\}\in\mathbb{E}\} \cup \{ i\}$.
Note that by this definition if $i$ is a neighbor of $j$, then $j$ is also a neighbor of $i$. The neighborhood of every robot $i$ is also self-inclusive, i.e., $i \in \neighbor_i$.

We are interested in the problem where an unknown subset of the robots is malicious.  In other words, the set $\mathbb{V}$ will have the subsets $L \subset \mathbb{V}$ of \emph{legitimate robots} and $M \subset \mathbb{V}$ of \emph{malicious robots}. A \emph{legitimate robot} is any robot that fully cooperates with the other robots, with $l = |L|$, and $M$ is the set of malicious robots, with $m = |M|$. We consider three types of malicious robots:

\begin{definition}[Spawning Adversary] \label{def:spawning_adversaries}
A robot is considered a \emph{spawning adversary}, $A_{spawn} \subset M$, if it is spawning other robots, i.e., broadcasting messages into the system under several unique identities.
\end{definition}

\begin{definition}[Spoofed Robot]\label{def:spoofed}
A robot is considered a \emph{spoofed robot}, $A_{spoof} \subset M$, if it is spawned, and is influencing the system under a fake (but unique) identity.
\end{definition}

\begin{definition}[Hidden Adversary] \label{def:hidden_adversaries}
A robot is considered a \emph{hidden adversary}, $A_{hid} \subset M$, if it does not spawn any spoofed robots. Since they do not spawn any robots, their messages appear to be trustworthy using our model of observations over the network (\cref{assumption:expectation}). However, they can relay incorrect or random trust vectors as a way to inject more noise into the system and try to sway the trust of others away from the truth.
\end{definition}

We say that the robots $S= A_{spoof} \cup A_{spawn}$ are \emph{detectable}. These robots can be detected through the physical characteristics of their signals (see \cref{sec:threat} for Threat Detection Model). Define $s=|S|=|A_{spawn}|+|A_{spoof}|$.

Assume the entire sets $P$ and $V$ are known, but the subsets $M$ and $L$ are unknown. We assume that the subsets $M$ and $L$ do not change throughout the rounds of execution of the algorithm.
We assume a broadcast model where messages are broadcast to other robots and we divide time into \emph{rounds}.

\begin{definition}[Round] \label{def:round}
A (synchronous) \emph{round} is the time needed to relay a message from all robots $i\in\mathbb{V}$ to their one-hop neighbors, $\mathcal{N}_i$, in the system.
\end{definition}

We use the following notation for  legitimate robot $i$. 
Let $l_{i,j}$ be the cardinality of the set $\neighbor_i \cap
\neighbor_j \cap L$ such that  $l_{i,j} = |\neighbor_i \cap
\neighbor_j \cap L|$, or, the number of common neighbors between robots $i$ and $j$ that are legitimate. Let $s_{i,j}$ be the cardinality of the set $\neighbor_i \cap \neighbor_j\cap S$ such that $s_{i,j} = |\neighbor_i \cap \neighbor_j\cap S|$, or, the number of common neighbors between robots $i$ and $j$ that are either spawned or spoofed. And let $h_{i,j}$ be the cardinality of the set $\neighbor_i \cap
\neighbor_j \cap A_{hid}$ such that $h_{i,j}=|\neighbor_i \cap
\neighbor_j \cap A_{hid}|$, or, the number of common neighbors between robots $i$ and $j$  that are hidden adversaries. Let $\tau_{i,j}$ be the gap between legitimate and hidden adversaries between robots $i$ and $j$, i.e., $\tau_{i,j}= l_{i,j}-h_{i,j}$. These values characterize the number of \emph{common} neighbors that are malicious or legitimate in the network and will be important quantities for understanding the effectiveness of leveraging neighbors trust values to converge on a global understanding of trust (see \cref{prob:trust}). Along these lines, we introduce a new concept of connectivity, $\tau$-triangular connectivity, that will be critical in the development and analysis of our results.
\begin{definition}[$\tau$-triangular]
A graph is called \emph{$\tau$-triangular} if for every legitimate robot $i \in L$ and every $j \in \neighbor_{i}$ we have that $\tau_{i,j}=l_{i,j}-h_{i,j} \geq \tau$, or equivalently, $\tau = \min_{i,j}\{ \tau_{i,j}\}$. If there are no hidden adversaries, this simplifies to $l_{i,j} \geq \tau$ meaning that $i\in L$ has for every neighbor $j$ at least $\tau$ robots in its neighborhood that are connected to both $i$ and $j$. 
Note that without adversaries
any graph in which all legitimate robots form a connected graph, $\tau \geq 2$,
since the definition of $l_{i,j}$ includes self-loops. 
\end{definition}
This concept of connectivity captures how well-connected the \emph{neighborhood} of each robot is and in particular quantifies the number of \emph{common} neighbors between robots. The definition depends on whether hidden adversaries are present. \cref{fig:minTau} demonstrates this $\tau$-triangular concept for a subgraph around two robots. 
\begin{figure}[h!]
    \centering
    \includegraphics[width = 0.45\textwidth]{./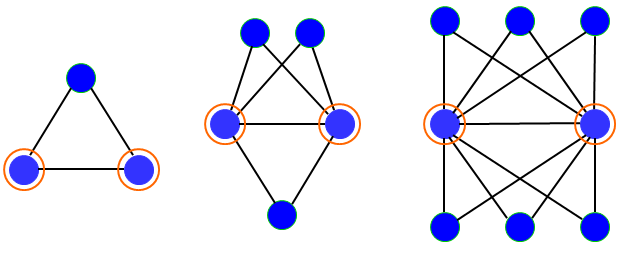}
    \caption{Consider the blue circles to be legitimate robots in a network, with the ones circled in orange to be the robot pair $\{i,j\}$. The left-most network shows a $\tau_{i,j}$ of 3 since robots $i$ and $j$ have one common neighbor and are their own common neighbors. The middle network shows a $\tau_{i,j} = 5$ and the right-most network shows a $\tau_{i,j} = 8$.}
    \label{fig:minTau}
\end{figure}

This paper does not require a high level of $\tau$-connectivity, but a high $\tau$-connectivity does capture the additional advantage that could be attained by leveraging neighbors observations in addition to each robot's direct observations of trust. Thus many of the results derived in this paper will be parameterized in terms of $\tau$.  We do require a mild assumption on connectivity however, which simply states that the graph over legitimate robots remains connected even if malicious robots were removed from the network.

\begin{definition}[Sufficiently Connected Graph] \label{def:suff_connected}
For a graph to be considered \emph{sufficiently connected}, the legitimate robots must remain connected if all other malicious robots were to be removed from the graph. This requirement is equivalent to $\tau\geq 2$.
\end{definition}

%\SG{We note that requiring sufficient connectivity is a milder assumption on connectivity than requiring a certain $k$-connectivity of the graph.  This assumption is equivalent to a $\tau$-connectivity of 1 and a $k$-connectivity of 1.\snote{Frederik please check this statement?}} \mnote{A k-connectivity of 1 would have a $\tau$-connectivity of 2}
% \subsection{Definition of the Threat Model}
% \label{sec:threat}
% We consider a threat model with one or more malicious robots. Malicious robots can perform many different attacks on the system. One type of attack we consider is the ``Sybil Attack" \cite{GilLCSS}. In a ``Sybil Attack" malicious robots control the values of one or more ``spoofed" robots in the network by sending various messages over the network with unique IDs in order to gain a disproportionate influence in the network. With a higher influence in the network a malicious robot can inflict more damage on the coordination performance. An example of a Sybil Attack on flocking is illustrated in \cref{fig:attack}.

\subsection{Model of Threat Detection}
\label{sec:threat}
We assume that every message received by a robot contains data, i.e., current position and velocity of the robot, and can be tagged with a stochastic variable $X_j^t(i) \in [0,1]$: This value, or \emph{observation over the wireless channel}, relates to the uniqueness of the message sender and thus can be used to reason probabilistically about the presence of a Sybil Attack for example (see \cref{fig:attack} for an example of a Sybil Attack on flocking and tracking). More precisely,
a value of $X_j^t(i)$ close to $1$ indicates that the message $i$ receives from $j$ at time $t$ has a high likelihood of being unique. Likewise, a value close to $0$ has a high likelihood of being spoofed. These observation values have been derived in previous work and important properties about these observations for the current work are summarized below: 
 
\begin{definition}[Observation Over the Wireless Channel $X_j^t(i)$~\cite{AURO}]\label{def:observations} 
An observation $X_j^t(i) \in [0,1]$ over the wireless channel is assumed to be available. This observation denoted  $X_j^t(i) \in [0,1]$ is a random variable,  where $X_j^t(i)=0$ indicates a high likelihood of a spoofed transmission and $X_j^t(i)=1$ indicates a high likelihood of a legitimate transmission. 
\end{definition}

\begin{assumption}
\label{assumption:expectation}
We assume that  for all $i, j, t$ the $X_j^t(i)$ have  the property that
\begin{align}\label{eq:e1} \E{X_j^t(i)= 1 ~|~  \text{$j\in L \cup A_{hid} $} } \geq 1/2+\varepsilon \end{align}
	and 
	\begin{align}\label{eq:e2} \E{X_j^t(i)= 0 ~|~  \text{$j \in S$ } } \geq 1/2+\varepsilon \end{align}
where $\varepsilon \in (0,1/2)$ is related to the signal-to-noise ratio of the wireless signal and indicates the quality of the observation. A high quality observation ($\varepsilon \rightarrow 1/2$) is more likely to result in the accurate sensing of a spoofed transmission. This result is derived in \cite{AURO}, and the bounds on its expectation used here are proven as \cite[Theorem 5.4]{AURO}.
\end{assumption}
We also make the following independence assumption on the observations $X_j^t(i)$.

\begin{assumption}[Independence of the $X_j^t(i)$]\label{ass:independence}
 We assume, that the $X_j^t(i)$ are independent for all $i, j, t$.
\end{assumption}

Using these properties of the wireless signals, robots can develop a binary opinion about the other robots. An opinion of $1$ implies trust of the robot in question, while an opinion of $0$ insists the robot may be spoofed. Once robots develop opinions about the legitimacy of the other robots in the system, they can begin to share those opinions with trusted neighbors.

\begin{observation}\label{obs1}
Consider a legitimate robot $i$. Let $l_i$ denote the number of $i$'s legitimate neighbors and let $s_i$ denote the number of spoofed and spawned neighbors.
Assume $s_i > l_i$.
Without assumption \cref{eq:e1} and \cref{eq:e2}, robot $i$ cannot differentiate between legitimate and spawning of spoofed robots. 
\end{observation}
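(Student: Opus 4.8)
The plan is to prove this as an \emph{indistinguishability} statement. Without \cref{eq:e1} and \cref{eq:e2}, I would exhibit two executions of the system that induce exactly the same distribution over everything robot $i$ ever observes, yet disagree on which of $i$'s neighbors are legitimate and which are spoofed. Any decision procedure $i$ applies to its observations then produces the same output (in distribution) in both executions, so it must be wrong in at least one of them; hence $i$ cannot differentiate legitimate from spawned/spoofed neighbors.

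First I would isolate what robot $i$ actually learns about a neighbor $j$: the sequence of channel observations $(X_j^t(i))_t$, the transmitted data (the reported positions and velocities), and the identifier of $j$. Next I would record the adversary's degrees of freedom once the assumptions are dropped. With \cref{eq:e1} and \cref{eq:e2} removed there is no constraint pulling a spoofed robot's observations toward $0$, nor pulling a legitimate robot's observations toward $1$; in particular there is a single law on $[0,1]$ — take the point mass at $1$ for concreteness — that is simultaneously an admissible law for a legitimate neighbor's observations and one that a spawning adversary can impose on each of its spoofed transmissions. Likewise, since spoofed messages are generated by the adversary, the reported positions and velocities and the IDs of spoofed robots can be made to follow exactly the same generative model a legitimate robot would use (using the standing assumption that $P$ and $V$ are globally consistent physical quantities).

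Then I would build the two executions. In the first, $i$'s neighborhood has its true composition: a set $A$ of $l_i$ legitimate neighbors and a set $B$ of $s_i$ spoofed ones, with $s_i > l_i$. In the second, I keep the same set $A \cup B$ of neighbor identities but relabel them with a different partition into $l_i$ ``legitimate'' and $s_i$ ``spoofed'' robots; this is possible precisely because $s_i > l_i$ leaves room to move at least one element of $A$ into the spoofed group (swapping it with an element of $B$) while preserving the neighbor counts, which also closes off the heuristic ``assume most of my neighbors are honest.'' In both executions I let every neighbor in $A \cup B$ emit i.i.d.\ observations equal to $1$ and broadcast data from the same model; by the previous paragraph this is consistent with each execution. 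A short coupling argument then shows that the joint distribution of $\{(X_j^t(i))_t : j \in \neighbor_i\}$ together with all messages received by $i$ is identical across the two executions, so for every neighbor $j$ and every (possibly randomized) classifier run by $i$, the probability that $i$ labels $j$ legitimate is the same in both; since some $j \in A$ is legitimate in the first execution and spoofed in the second, $i$ is necessarily wrong about $j$ in one of them.

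The only subtle point — and the step I would be most careful about — is justifying that a single observation law and a single data-generation model can serve simultaneously for a legitimate and for an adversarial neighbor once \cref{eq:e1} and \cref{eq:e2} are dropped; everything after that is the routine ``equal views imply equal outputs'' coupling. I would settle it by choosing the degenerate law (all observations $\equiv 1$), which is trivially admissible for legitimate robots in the unconstrained model and trivially realizable by the adversary for spoofed ones. The role of the hypothesis $s_i > l_i$ is exactly to guarantee that the relabeling in the second execution can keep the neighbor counts fixed, removing the last way $i$ might have hoped to break the symmetry.
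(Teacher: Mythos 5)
Your proposal is correct in its core and is in fact considerably more formal than what the paper offers: the paper never proves \cref{obs1} explicitly, and its only justification is the one-sentence argument in \cref{sec:neccessary} that without \cref{eq:e1} and \cref{eq:e2} ``spawning and spoofed robots are now undetectable and since the legitimate neighbors are in the minority, communication becomes futile,'' so robot $i$ can only guess. Your two-execution coupling (choose a single observation law, e.g.\ the point mass at $1$, admissible for both classes once the assumptions are dropped; swap one legitimate and one spoofed identity; conclude that any classifier has the same output distribution in both worlds and hence errs in one) is the standard way to make that sentence rigorous, and the construction itself is sound --- including the care you take that spoofed transmissions can mimic the legitimate data-generation model for positions, velocities, and IDs.

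The one point where your reasoning diverges from what is actually true is the role you assign to $s_i > l_i$. Your swap preserves the counts $(l_i, s_i)$ whenever both groups are nonempty, so the strict inequality is \emph{not} needed to ``leave room'' for the relabeling; your indistinguishability argument goes through verbatim for any $l_i, s_i \geq 1$. The hypothesis $s_i > l_i$ is doing different work in the paper: it is what forecloses the remaining escape route of majority-based recovery --- outlier rejection on the transmitted data, or aggregation of neighbors' opinions as in the second phase of \emph{FindSpoofedRobots} --- because the adversarial transmissions control the majority of $i$'s neighborhood. You gesture at this (``closes off the heuristic `assume most of my neighbors are honest'\thinspace''), but you present it as a side benefit rather than as the reason the hypothesis is stated, and your primary justification for it (count preservation) is incorrect. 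This does not invalidate the proof of the observation as literally stated, but you should either drop the claim that $s_i > l_i$ is needed for the relabeling, or extend the coupling to cover the messages legitimate neighbors relay (which, being themselves based on uninformative observations, carry no signal) so that the majority argument is where the inequality genuinely enters.
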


Our objective in this paper is to allow all robots to converge on a \emph{trust vector}, or opinion on which other robots in their neighborhood they can trust.

\begin{definition}[Trust vector $\mathbf{v}_i^*$]
Every robot wants to form a \emph{trust vector} $\mathbf{v}_i^*=[v_{i,1}^*,v_{i,2}^*,\hdots,v_{i,n}^*]$ where the entries $v_{i,j}^*\in\{0,1\}$ for $j\in\neighbor_i$ 
has the following meaning:
% have a high probability of being the true trust value for the transmission between robots $i$ and $j$. 
a $1$ denotes that robot $i$ trusts robot $j$ and a $0$ denotes that $i$ does not trust $j$.
The value $v_{i,j}^*$ is left undecided, denoted as $-$, if $j \notin \neighbor_i$.
\end{definition}

\begin{definition}[Convergence of a trust vector]\label{def:convergence}
An algorithm achieves \emph{convergence of a trust vector} if every legitimate robot correctly identifies all detectable malicious robots (i.e., the robots in the sets $A_{spoof}$ and $A_{spawn}$). 
% 	An algorithm achieves \emph{consensus} if the following holds
% \begin{enumerate}
% 	\item Termination: Eventually, every honest robot decides some value.
% 	\item Integrity: If all the honest robots proposed the same value $v$, then any honest robot must decide $v$.
%     \item Agreement: Every honest robot must agree on the same value.
% \end{enumerate}
\end{definition}
Intuitively speaking, our goal is for all legitimate robots to find out whom of their neighbors are spoofed or spawning robots. We summarize this objective below as Problem~\ref{prob:trust}.
\begin{problem}[Finding the final trust vector $\mathbf{v}_i^*$]
\label{prob:trust}
 We wish to derive an algorithm, that takes as input all observations $\{X_j^t(i)\}_{t\geq 0}$ for robot $i$ and its neighbors $j\in\neighbor_i$ and returns a final trust vector $\mathbf{v}^*_i=[v^*_{i,1},v^*_{i,2},\hdots,v^*_{i,n}]$. 
%\textcolor{red}{\st{where the value $v^*_{i,j}\in \{ 0,1\}$ for $j\in\neighbor_i$ has a high probability of being the \emph{true} trust value for the transmission between robots $i$ and $j$. The value $v^*_{i,j}$ is left undecided, denoted as $-$, if $j \notin \neighbor_i$.}} 
We further wish to characterize:
\begin{enumerate}
    \item \textbf{Number of rounds $r^*$} - the number of communication rounds $r^*$ required to reach consensus amongst all robots $i\in L$ on the trust vector $\mathbf{v}^*_i$.
    \item \textbf{Probability of success} - if the probability of success is $1-\delta$, where success is defined as the converged $\mathbf{v}^*_i$ correctly identifying all malicious robots in the network for all $i\in L$, then we wish to characterize the relationship between $\delta$ and $r^*$. 
\end{enumerate}
\end{problem}

\subsection{Distributed Estimation of Graph Properties in an Adversarial Setting} \label{sec:flocking_consensus}

For multi-robot coordination tasks it is often desirable for the robots of a team to communicate with each other and come to an agreement or consensus. This information can be an agreed upon goal or a position or velocity among other things. To keep analysis and results general, we will say that the information the robots want to agree upon is captured as a value, $x[t]$. Generally, robots can reach agreement by using an update rule of the form:
\begin{equation}
    x[t+1] = Wx[t].
\end{equation}
The matrix $W$ is called the weight matrix and contains a nonzero value in element $W(i,j)$ if $j \in \neighbor_i$, and contains zeros elsewhere. The weight matrix is called a \emph{consensus matrix} if it is row stochastic. The properties of this weight matrix determine the ability of robots to successfully coordinate and this in turn is often related to the connectivity of the graph through the adjacency matrix or Laplacian. However, knowing the adjacency matrix of the underlying graph must be achieved through distributed estimation processes that are especially difficult to achieve in adversarial settings such as the case studied in this paper.  However, if we could use the trust values learned by leveraging each robots' neighborhood as described in \cref{prob:trust}, then we could provide additional resilience to the distributed estimation process of several graph properties including the adjacency matrix.

\begin{problem}[Distributed estimation in adversarial settings]
\label{prob:distEstimation}
We wish to derive a distributed algorithm, that takes as input the final trust vectors, $\mathbf{v}_i^*$, and returns the correct adjacency matrix for the graph with  probability $1-\delta$ in unreliable networks under our Threat Model as defined in \cref{sec:threat}. With the correct adjacency matrix, other graph properties such as the algebraic connectivity and observability matrix can be determined.
\end{problem}

\subsection{Flocking Control in Adversarial Settings}
We wish to apply our results in determining inter-robot trust to the multi-robot coordination problem of flocking and target tracking.  In this problem, consensus on the position of the target to be tracked must be reached, and successful tracking is accomplished if the robot team manages to form a distributed formation around the target and move along with it.  Note that an important aspect of this problem is its dynamic nature where any momentary breaks in the formation due to adversarial activity must be recovered quickly in order to maintain tracking of the target.

\begin{definition}[Convergence Range for Flocking] \label{def:convergence_range}
A target is considered tracked by the flock if the centroid of the legitimate robots in the flock is within a desired radius of the target. From within this desired radius, the flock can converge exponentially on the target. This range is defined as the \emph{convergence range} and is derived in \cref{sec:conv_results}, \cref{eq:exp_conv_range}.
\end{definition}

Assume that the intended center of the formation is the position of the target. Denote the position and velocity of the target to be $p_\text{cen}[t]$, and $\text{vel}_\text{cen}[t]$, respectively. All robots have access to $p_\text{cen}[t]$ and $\text{vel}_\text{cen}[t]$ for all time-steps $t \geq 0$ while the target is tracked. Then following the development from~\cite{8991431} we have a Reynold's-based flocking controller for the velocity of each robot as follows:
\begin{equation}
    u_i[t] = u_i^{ref}[t] + u_i^{avoid}[t] + u_i^{match}[t],
    \label{eq:reynolds}
\end{equation}
where
\begin{equation}
    u_i^{ref}[t] = k_{ref}(p_\text{cen}[t] - p_i[t]),
\end{equation}
\begin{equation}
    u_i^{avoid}[t] = \sum_{j\in\neighbor_i} \frac{k_{avoid}(p_i[t] - p_j[t])}{\|p_i[t] - p_j[t]\|^2},
\end{equation}
\begin{equation}
    u_i^{match}[t] = \sum_{j\in\neighbor_i} \frac{k_{match}(\text{vel}_j[t] - \text{vel}_i[t])}{\|p_i[t] - p_j[t]\|}.
\end{equation}
\begin{figure}[h!]
    \centering
    \includegraphics[scale=0.1]{./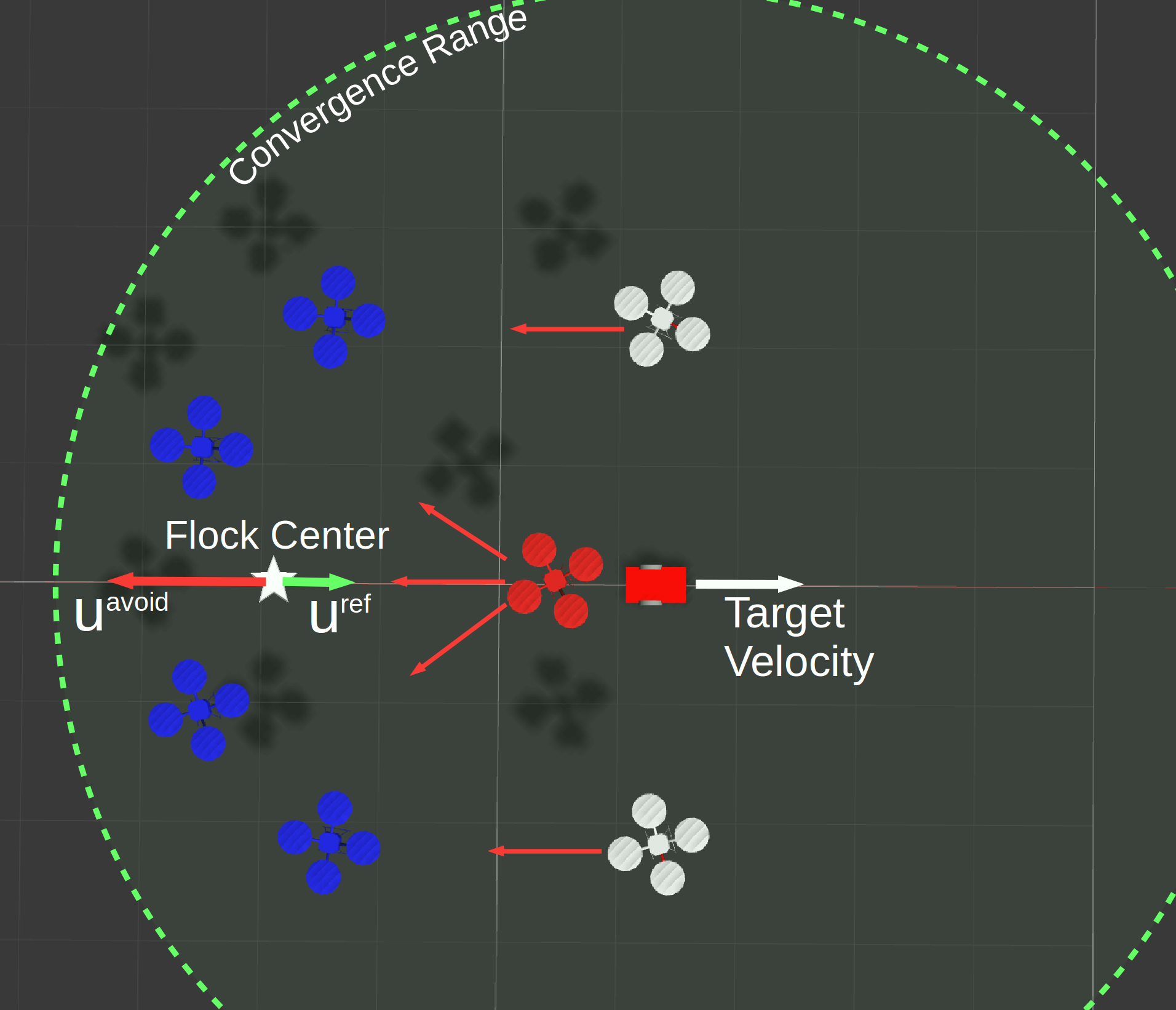}
    \caption{Adversaries attacking the flock by trying to push the legitimate robots out of the desired convergence range of the target}
    \label{fig:attack}
\end{figure}
% \begin{figure}[t!]
%     \centering
%     \includegraphics[scale=0.1]{./}
%     \caption{Each robot is driven to follow the target by the reference input while also keeping a safe distance from other robots because of the avoidance term.}
%     \label{fig:inputs}
% \end{figure}
The $u_i^{ref}$ term in the flocking controller is designed to drive each robot toward the target. The $u_i^{avoid}$ term incorporates collision avoidance between robots and encourages uniform spacing of the robots around the target center. The final term, $u_i^{match}$, integrates speed matching among robots which promotes flock velocity cohesion. The values $k_{ref}$, $k_{avoid}$, and $k_{match}$ are constants denoting the weight put on each respective term. This flocking-based controller can ensure the convergence of all cooperating robots to a formation around any stationary or moving reference point.

% The effect of the avoid and reference terms are visually represented in \cref{fig:inputs}. 

\begin{problem}[Flocking in Adversarial Settings]
\label{prob:flocking}
We wish to characterize a time window $\Delta$, given a particular robot formation $P$, max robot velocities $\text{vel}_\text{max}$, and target velocity $\text{vel}_\text{cen}$, where if malicious robots launch an attack at time $t=t[0]$, then robots can successfully recover tracking if malicious robots are correctly identified and rejected by time $t=t[0]+\Delta$. By comparing $\Delta$ with $r^*$ from \cref{prob:trust} we determine the conditions under which we can successfully recover tracking, i.e. remain within \emph{Convergence Range} of the target, with  probability $1-\delta$.
\end{problem}

Finally, for ease of reference we summarize key assumptions made throughout the paper.

\begin{assumptions}\label{ass:graph}
\begin{enumerate}
\item We assume a sufficiently connected graph topology over legitimate robots. Furthermore, we will assume that $\tau_{i,j} >0$, since we argue in  \cref{sec:neccessary} that when $\tau_{i,j} < 0$ reaching consensus is impossible.
\item We also assume that observations, $X_j^t(i)$, are independent for all $i,j,t$.
\item We assume that the observation quality, $\varepsilon > 0$, which means that detection of malicious robots is possible (c.f. \cref{assumption:expectation}). 
\item We will assume that all robots and spoofed identities have unique identities. We will assume that all rounds of communication happen synchronously, meaning all robots broadcast to their neighbors simultaneously.
\end{enumerate}
\end{assumptions}

A table with all notation used throughout this paper can be found at the end of the Appendix in \cref{tab:notation}.

\ifworkversion
\onecolumn
\fi

\section{Algorithm}
In this section, we describe the algorithm \emph{FindSpoofedRobots} which returns a converged trust vector for robot $i$ as described in \cref{prob:trust}. Pseudocode for \emph{FindSpoofedRobots} is shown in~\cref{FSN} and the accompanying analysis characterizing its correctness is in~\cref{sec:analysis}.

Each robot $i$ executes the same algorithm, which consists of the following two parts. In the first part (lines 1 and 2), robot $i$ computes the \emph{interim trust vector}, which indicates which robots $i$ believes to be legitimate (the $j$th entry is $1$  if $i$ trusts $j$, $0$ if it does not trust $j$ and ``$-$" if they are not connected).
In order for robot $i$ to establish for each other robot $j$, whether it will trust $j$, each robot broadcasts $r$ messages. To determine whether robot $i$ trusts robot $j$, $i$ simply adds up $X_j^t(i)$ over all rounds $r$. If this value is greater than or equal to $r/2$, then robot $i$ temporarily trusts robot $j$. The number of messages, $r$, that each robot broadcasts is determined by running another algorithm, \cref{FSN2}. The  number of rounds needed $r^*$ can be characterized as a function of the number of legitimate robots, malicious robots, hidden adversaries, and the quality of observations that have been received (c.f. \cref{thm:one}). However since these quantities are often unknown, \cref{FSN2} provides an anytime approach that runs continuously in the background, feeding \emph{FindSpoofedRobots} more rounds to broadcast each time. The idea is that the longer the robots are allowed to run the algorithms, the higher the probability that they output the correct trust vectors. \cref{thm:one} and \cref{lem:lowermaj} establish the relationship between the number of rounds $r$ completed by \cref{FSN2} and the probability $\delta$ that the returned trust vector is correct. Each time a new trust vector is generated, if it is different than the one before, the new trust vector is adapted as the best current solution, since it has the highest probability of having the correct trust. 

For a more detailed explanation of the process of determining the number of messages, $r$, that each robot broadcasts, see \cref{sec:alg2_explanation}. Each time a new trust vector is adapted as the best current solution, that trust vector can be fed into any coordination problem algorithm that utilizes trust vectors. This is done as an example with an algorithm we introduce, \emph{FindResilientAdjacencyMatrix} (\cref{alg:FRAM}), which is used for distributed estimation of graph properties in the presence of adversaries. \cref{alg:FRAM} and \cref{FSN2} are presented in \cref{sec:dist_estimation}.

In the second part of the \emph{FindSpoofedRobots} algorithm (lines 3-6), the robots share their interim trust vectors with their trusted neighbors. Based on this, every robot $i$ makes a final assessment of whether it should trust $j$, the robot in question, by looking at all other robots $k$ in its neighborhood and their trust assessments from the first part (interim trust vectors). Then, robot $i$ simply takes the majority opinion among the robots it trusted from the first part as indicated by its \emph{interim trust vector}. The result of the final assessment is summarized in the \emph{final trust vector}. From there on, each robot $i$ will upon receiving a message from a robot $j$ check the $j$th entry in the final trust vector and if robot $i$ believes that robot $j$ is not legitimate, the message is simply ignored. \cref{fig:alg_schematic} provides a visual depiction of the process run by \emph{FindSpoofedRobots} and \cref{sec:proofs} provides analysis of the performance of this algorithm both in terms of rounds $r$ and the probability $\delta$ that the returned trust vector is global (such that all robots agree on the final trust vector) and correct.

\begin{algorithm}[h!]
\caption{FindSpoofedRobots at robot $i$%\phantom{xxxxxxxxxxxxx}
\\
Input: number of rounds $r$, opinion $o$
\\
Output: trust vector $\mathbf{v}_i^*$.}\label{FSN}
\begin{algorithmic}[1]

\State Broadcast  for $r $ rounds the opinion $o$.

\State At the end of round $r$, compute the \emph{interim trust vector} $\mathbf{v}_i  \in \{0,1,- \}^n$, with \[ \mathbf{v}_i := [v_{i,1}, v_{i,2}, \dots, v_{i,n}]^\intercal \] where the $k_{th}$ entry, $v_{i,k}$, is $1$ if $i$ believes that $k$ is legitimate, that is $v_{i,k}=1$ if $\sum_{t=1}^{r} X^{t}_k(i) \geq r/2$ and $v_{i,k}=0$ if it does not trust $k$. For all robots $k$ that are not neighbors of robot $i$, the symbol `$-$' is used. Set $v_{i,i}=1$.

% by checking if the the majority of the $X$ values were larger than $1/2$. Recall that each broadcasted message has an associated $\alpha$ value.

\State In round $r+1$ broadcast the vector $\mathbf{v}_i$ to all neighboring robots.

\State Let $\mathbf{v}:= (\mathbf{v}_1 , \mathbf{v}_2 , \dots \mathbf{v}_n )$ be a $n\times n$ matrix where the $i_{th}$ column is the vector $\mathbf{v}_i$. 
%If $i$ and $j$ aren't connected, then $\math$

\State Evaluate for every neighboring $j$ whether $i$ believes
$j$ is legitimate and stores this in $ v_{i,j}^*$. The reevaluation is done by aggregating the information of all robots that $i$ currently believes are legitimate.
% using the majority opinion among the robots $\{ w \colon v_{u,w}=1\}$ for whether $v$ should trust robot $i$; 

\[  v_{i,j}^* = \begin{cases}
1 & \begin{split} \text{ if } |\{ k \colon  v_{i,k}=1 \text{ and } v_{k,j}=1 \}| \\ \geq |\{ k \colon v_{i,k}=1 \text{ and } v_{k,j}=0 \}| \end{split} \\
0& \text{ otherwise}
\end{cases}.\] 

Let \[ \mathbf{v}_i^* := [v_{i,1}^*, v_{i,2}^*, \dots, v_{i,n}^*]^\intercal \]
be the corresponding \emph{final trust vector} vector.

\State Output $\mathbf{v}_i^*$
\end{algorithmic}
\end{algorithm}

\vspace{-3mm}

\subsection{Algorithm Intuition}
Recall that in the first phase, robot $i$ temporarily trusts $j$ if and only if 
$\sum_{t=1}^r X_j^t(i) \geq r/2$. This is possible due to our assumption in
\cref{eq:e1} and \cref{eq:e2} saying that in expectation $X_j^t(i)$ is strictly larger than $1/2$ for legitimate robots and strictly smaller than $1/2$ for spoofed robots.
To prove that each interim trust vector is sufficiently good, we rely on  Azuma-Hoeffding
concentration bounds (\cref{thm:Azuma}) that allows us to prove concentration for non-binary variables. 

After the first part of the algorithm is completed (Lines 1-2), we are in a setting where with high probability, each legitimate robot classifies a sufficient number of robots correctly, but any given robot is very unlikely to classify all robots correctly. This is exemplified in \cref{fig:comp_to_BM}.
However, using  strong concentration bounds for binomially distributed random variables with very small expected value (\cref{thm:bounds-binomial-distribution}), we can show that after asking all trusted robots for their opinion (Line 5), all legitimate robots find all the spawning and spoofed malicious robots with probability $1-\delta$.

\begin{figure}[h!]
    \centering
    \includegraphics[width = 0.35\textwidth]{./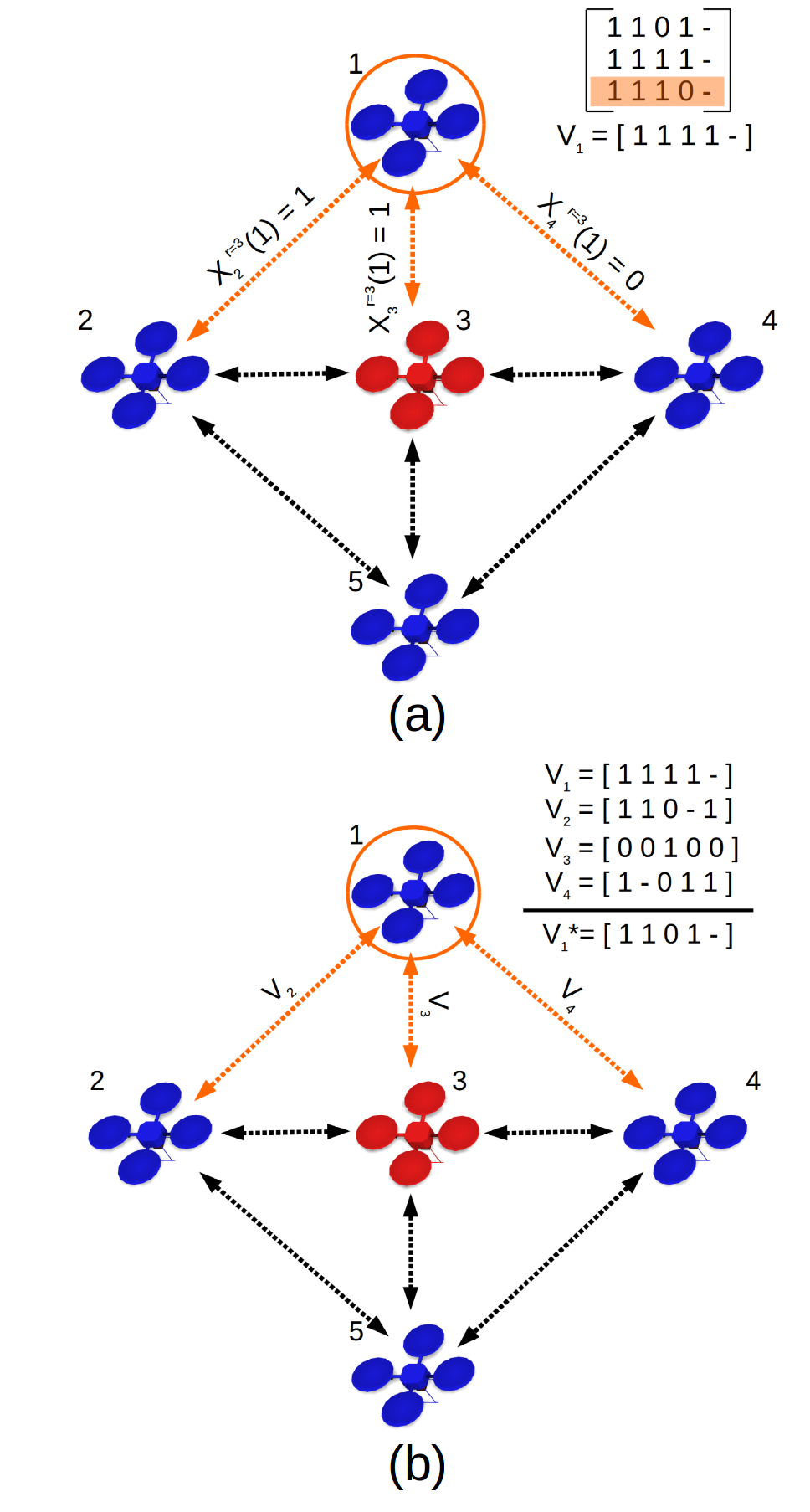}
    \caption{Visual depiction of \emph{FindSpoofedRobots} algorithm for a small network of 4 robots with 1 malicious robot that is spoofed. Part (a) shows Lines 1-2 of the algorithm where robot 1 communicates with its neighbors, robots 2, 3, and 4, for $r = 3$ rounds, with the third round of communicated data shown. The robot then forms $\mathbf{v}_1$ as in line 2 of \cref{FSN}. In (b) robot 1 receives $\mathbf{v}_2$, $\mathbf{v}_3$, and $\mathbf{v}_4$ from its trusted neighbors and determines a final trust vector, $\mathbf{v}_1^*$, as in line 5. Note that neighboring robots 2 and 4 help convince robot 1 that robot 3 is actually malicious.}
    \label{fig:alg_schematic}
\end{figure}

\section{Theoretical Analysis}
\label{sec:analysis}
In this section we prove certain desirable properties of the algorithm \emph{FindSpoofedRobots}. Namely, that 1) robots can converge on a global opinion of which other robots in the network can be trusted, 2) that this convergence happens in a finite number of rounds $r^*$ that we characterize, and 3) that the converged trust vector accurately identifies spoofed and spawning malicious robots with probability of at least $1-\delta$ where $\delta$ depends on several parameters such as the number of rounds $r$ and the quality of the observations $X_i^j(t)$.

\subsection{Trust Analysis}

Our objective in this section is to use observations $X_j^t(i)$ for each robot and its neighbors in order to arrive at a consensus on which robots' transmissions can be trusted as quickly as possible. In addition, we characterize the number of communication rounds needed to achieve this as a function of the quality $\epsilon$ of the observations, $X_j^t(i)$, and the number of legitimate and malicious robots.

Recall that $\varepsilon$ governs the probability of correctly identifying a transmission as illegitimate (see \cref{eq:e1}). 
In order for \cref{FSN} to yield the correct final trust vectors, the number of rounds, parameter $r$, needs to be larger than $r^*$, the optimal number of rounds. The value $r^*$  depends on
$n, \varepsilon, \text{ and } \tau $,
however, we stress that the algorithm does not have to know these quantities.
They are purely for analysis purposes. Instead, the algorithm uses \cref{FSN2} to probe upper bounds and will eventually find a suitable $r^*$. 
% We do, however, require that each legitimate robot has more legitimate robot neighbors than hidden adversaries, since reaching consensus is impossible otherwise (see \cref{obs1}).

\begin{theorem} \label{thm:one}
Fix a legitimate robot $i$.
Assume $\tau > 0$. Let
$\delta >0$ and
 \[r^* :=  \left\lceil \frac{\log\left(\frac{ 2 l n}{\delta \varepsilon^2 \tau}\right)}{\tau \varepsilon^2} +\frac{\log\left(\frac{2e^{e} d_{L}}{\tau}\right)}{\varepsilon^2} \right\rceil+ 1\] rounds, where  and $e$ is the Euler constant and $d_{L}$ is the maximum degree among legitimate robots.

Then, \emph{FindSpoofedRobots} achieves that robot $i$'s trust vector is correct   w.p. at least $1-\delta$ 
for all rounds $r\geq r^*$.

% Moreover, every honest robot learns the identity of all legitimate robots, spoofed robots and spawning robots.
\end{theorem}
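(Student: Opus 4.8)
The plan is to split the analysis into the two phases of \emph{FindSpoofedRobots} and bound the failure probability of each, then union-bound over all robots. For the first phase, fix the legitimate robot $i$ and any neighbor $k$. The interim decision $v_{i,k}$ is wrong precisely when $\sum_{t=1}^r X_k^t(i)$ falls on the wrong side of $r/2$. By \cref{assumption:expectation}, the expectation of this sum is at least $(1/2+\varepsilon)r$ if $k$ is legitimate (or hidden) and at most $(1/2-\varepsilon)r$ if $k \in S$, so in either case the ``bad'' event requires a deviation of at least $\varepsilon r$ from the mean. Since the $X_k^t(i)$ are bounded in $[0,1]$ and independent (\cref{ass:independence}), Azuma--Hoeffding (\cref{thm:Azuma}) gives $\Pr{v_{i,k}\text{ wrong}} \le \exp(-2\varepsilon^2 r)$. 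I would actually need this for \emph{all} robots $k$ and all legitimate robots $i'$ whose opinions get consulted in phase two, i.e.\ for all pairs $(i',k)$ with $k\in\neighbor_{i'}$; there are at most $l n$ such ordered pairs (each of the $l$ legitimate robots has at most $n$ neighbors). So with the first term of $r^*$ chosen so that $l n \cdot \exp(-2\varepsilon^2 r) \le \delta/2$ — roughly $r \gtrsim \frac{1}{2\varepsilon^2}\log(2ln/\delta)$, which is dominated by the first summand in the stated $r^*$ once the extra $\tau$ and $\log(1/\varepsilon^2\tau)$ factors are absorbed — every interim entry that will ever be used is simultaneously correct. Wait: that's not quite what we want, because phase two is \emph{supposed} to tolerate some wrong interim entries. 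The sharper route is not to demand all interim entries correct, but only that \emph{few} are wrong per robot; I will come back to this.

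For the second phase, fix $i$ and a neighbor $j$ that is malicious-detectable ($j\in S$) — these are the entries we must get right for convergence (\cref{def:convergence}). The final value $v_{i,j}^*$ is determined by a majority vote over the set $K_{i,j} = \{k : v_{i,k}=1\}$ restricted to neighbors of $j$, using the values $v_{k,j}$. The key structural fact is $\tau$-triangularity: $i$ and $j$ have at least $l_{i,j}$ legitimate common neighbors and at most $h_{i,j}$ hidden-adversary common neighbors with $l_{i,j}-h_{i,j}\ge\tau$. For a legitimate common neighbor $k$, $v_{k,j}=0$ with high probability (correct, since $j\in S$); for a hidden-adversary $k$, the vote $v_{k,j}$ is adversarial, hence worst-case $1$; and for $k$ a legitimate common neighbor, we also need $v_{i,k}=1$ (so that $k$ is actually consulted). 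Conditioning on the phase-one guarantee that $v_{i,k}$ is correct for all $k$ — so all $l_{i,j}$ legitimate common neighbors are in $K_{i,j}$ and none of the spoofed common neighbors are — the vote on $j$ sees at least $l_{i,j}$ ``$0$'' votes from correct legitimate neighbors (minus those who erred in phase one) versus at most $h_{i,j}$ ``$1$'' votes from hidden adversaries plus the phase-one errors among the legitimate ones. The majority goes the right way as long as the number of phase-one errors among the relevant $\le d_L$ legitimate common neighbors is less than $\tau_{i,j}/2 \ge \tau/2$. This is where the second term of $r^*$, namely $\frac{1}{\varepsilon^2}\log(2e^e d_L/\tau)$, comes in: the number of erroneous interim entries among $\le d_L$ candidates, each wrong independently with probability $p := \exp(-2\varepsilon^2 r)$ (or the relevant one-sided bound), is stochastically dominated by $\mathrm{Bin}(d_L, p)$, and we need $\Pr{\mathrm{Bin}(d_L,p) \ge \tau/2}$ to be tiny. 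Using the strong small-mean binomial tail (\cref{thm:bounds-binomial-distribution}), $\Pr{\mathrm{Bin}(d_L,p)\ge \tau/2} \le \binom{d_L}{\tau/2} p^{\tau/2} \le (e d_L p / (\tau/2))^{\tau/2}$, and forcing this below $\delta/(2n)$ (to union-bound over the $\le n$ dangerous neighbors $j$ of $i$, or over all $ln$ legitimate--neighbor pairs) gives a condition of the form $p \lesssim \tau/(e d_L) \cdot (\delta/n)^{2/\tau}$, i.e.\ $2\varepsilon^2 r \gtrsim \log(e d_L/\tau) + \frac{2}{\tau}\log(n/\delta)$. Dividing by $2\varepsilon^2$ and splitting the two terms reproduces exactly the two summands of $r^*$ (the $e^e$ and the $2l$ vs.\ $2$ and the $\tau$ in the log are slack picked up in the constants of the binomial bound and in choosing how to split $\delta$). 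Finally, a union bound over the $\le n$ choices of $j$ and accounting for both phases' failure budgets of $\delta/2$ each yields the claim for robot $i$.

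Concretely, the steps in order: (1) state the per-entry phase-one Azuma bound $\Pr{v_{i,k}\text{ wrong}}\le e^{-2\varepsilon^2 r}=:p$; (2) for a fixed dangerous neighbor $j\in S$ of $i$, identify the vote composition using $\tau$-triangularity — at least $l_{i,j}$ trustworthy ``$0$''-votes, at most $h_{i,j}$ adversarial ``$1$''-votes, plus at most (errors in phase one) corruptions in either direction, and observe the vote is correct whenever the phase-one error count among the $\le d_L$ relevant legitimate neighbors is $<\tau_{i,j}/2$; (3) bound that error count by $\mathrm{Bin}(d_L,p)$ and apply \cref{thm:bounds-binomial-distribution} to get $\Pr{\text{vote on }j\text{ wrong}}\le (2ed_L p/\tau)^{\tau/2}$; (4) choose $r\ge r^*$ so that this is $\le \delta/(2n)$ and, separately, so that the direct phase-one misclassifications of spoofed/spawned robots that are themselves neighbors of $i$ are controlled; (5) union-bound over all $j$ and over both phases. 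Because the bound is monotone in $r$, the conclusion holds for every $r\ge r^*$.

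The main obstacle I expect is the bookkeeping in step (2)--(3): correctly disentangling which phase-one errors actually flip the phase-two majority. One has to be careful that an error on $v_{i,k}$ (wrongly distrusting a good $k$, or wrongly trusting a spoofed $k$) and an error on $v_{k,j}$ are \emph{different} events, that they concern potentially different sets of robots, and that the adversarial hidden-robot votes must be treated worst-case rather than probabilistically. Getting the constant right so that the two separate logarithmic contributions land as the clean two-term $r^*$ — in particular, justifying the $(d_L/\tau)$ factor inside the log and the exact split of the $\log(1/\delta)$ between a $1/\tau\varepsilon^2$-weighted term and an unweighted one — is the delicate part; the rest is standard concentration plus union bounds.
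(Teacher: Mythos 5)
Your proposal follows essentially the same route as the paper's proof: a per-entry Azuma--Hoeffding bound for the interim trust vector (this is exactly \cref{lem:maj}), then a majority-vote analysis for the final vector in which the number of erroneous interim entries among the common neighbors is stochastically dominated by a small-mean binomial and controlled via \cref{thm:bounds-binomial-distribution}, with the hidden adversaries charged worst-case against the $\tau$-gap $l_{i,j}-h_{i,j}\ge\tau$, and union bounds over the $\le ln$ pairs. Your self-correction (tolerating some wrong interim entries rather than demanding all correct) is precisely the paper's decomposition into the variables $\mathcal{X}$ (legitimate voters lost) and $\mathcal{Y}$ (spoofed voters wrongly admitted), each allowed to erode at most $\tau/2$ of the margin. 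The one place you diverge is the final clause ``for all $r\ge r^*$'': you dismiss it by monotonicity, which only gives a per-round guarantee, whereas the paper takes a union bound over all rounds $r\ge r^*$, observing that the failure probability decays geometrically in $r$ so that the sum over rounds is still $\delta$; if the claim is read as simultaneous correctness over all later rounds (as the anytime usage in \cref{FSN2} suggests), you need that geometric-series step.
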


In \cref{sec:neccessary} we argue that our assumptions are indeed necessary. To get a better understanding of how we improve upon the algorithm from our previous work~\cite{AURO}, which we will refer to as the \emph{Baseline algorithm}, we evaluate \emph{FindSpoofedRobots} on the example of a complete graph. For other more general graphs, see \cref{fig:with_spoofing} and \cref{fig:without_spoofing}.
\begin{definition}[Baseline algorithm \cite{AURO}]
\emph{Baseline algorithm} is the algorithm that uses direct observations over the wireless channel to determine trust of neighboring robots. It does not take advantage of trusted neighbors' opinions to improve results.
\end{definition}

\subsection{Example on the Complete Graph}

In particular, consider the following example on the complete communication graph $G$, with $l$ legitimate robots, $h=l/2$ hidden adversaries, $s=10\cdot l$ detectable adversaries and $\epsilon = 1/3$.

\FloatBarrier
\renewcommand{\arraystretch}{1.2}

\begin{table}[ht!]
    \begin{center}
    \begin{small}
   % \begin{sc}
  %  \scalebox{0.9}{%
    \begin{tabular}{|l|ccc|}
    \hline
    \diagbox{Required rounds}{Legitimate robots $l$} & 10 & 100 & 1000 \\
     \hline
     Baseline $r$ & 22 & 32 & 43\\
     \hline
    FindSpoofedRobots $r^*$ & 10 & 7 & 6\\%ADD 1!!!!
    \hline
    \end{tabular}
       \caption{Empirically obtained bound on the number of rounds $r/r^*$ that are necessary  to obtain the correct trust vectors w.p. at least $1/2$ for a fully connected graph $G$.
}
    % \vskip 0.15in
    \label{tab:examplevalues}
    %}
   % \end{sc}
    \end{small}
    \end{center}
    \vskip -0.1in
\end{table}
\FloatBarrier

The required number of rounds is given in \cref{tab:examplevalues}. Note the reduction in the number of rounds $r^*$ as $l$ grows. 
This seems counter-intuitive, but can be understood by making a few key observations. 
First, note that the term $r^*$ becomes $\left(c_1\log(l)/l\right)+c_2$ for suitable constants $c_1$ and $c_2$:
to see this, note that $\tau = l/2$ and $d_L = l$. This means as $l$ grows, the first term , $\left(c_1 \log(l)/l\right)$, becomes negligible and hence the required number of rounds, $r^*$ is bounded by a constant. Note that this is in stark contrast to the Baseline, which requires at least $c \log(n)$ rounds, as we prove in \cref{lem:lowermaj}, where $c$ is a suitable constant.

% $\tau=l/2$ in the above setups and hence the first additive term $r^*$ decreases as $l$ increases since the log increases much slower than than the $\tau$ by which it is divided. The second term in $r^*$ stays constant as $l$ increases since $d_{L}$ and $\tau$ increase at the same rate.
% Our bound becomes
% $c_1\log(n)/(\tau\epsilon^2) + c_2$ for some constant $c_1$ and $c_2$. 
% Assuming that $\epsilon$ is a constant independent of $l$, we get that our run time is constant in this setup (regardless of $l$).

% On the other hand, baseline required $c_3 0.5 \log(n)/eps^2$ rounds for some constant $c_3>0$ (see \cref{lem:lowermaj}).

% Empirically, our simulations show that $c_3 \approx 0.5$.

% \footnote{We omitted the $1/\epsilon^2$ term inside the log, as we only need it to achieve the much stronger goal of correctness for all $r\geq r^*$ instead of just for a fixed $r$. }

Note that this example was for a fully connected graph $G$, and in general it depends on various parameters of the graph topology.

\subsection{Lower Bound For the Baseline Algorithm}

We now show that sharing trust information among each robot's neighborhood yields a tremendous advantage. To this end, we show that Baseline, or more general, \emph{any algorithm that does not communicate shared trust values}, requires a large number of rounds. We do this by showing tight bounds on the number of trust observations required (\cref{lem:lowermaj} and \cref{lem:maj}).
\begin{lemma}[Lower bound]\label{lem:lowermaj}
There exists a universal constant $c>0$ such that for any algorithm $\mathcal{A}$ that does not share observations of trust with its neighbors results in the following.
In order to obtain the correct trust vector for {\bf a given} robot w.p. at least $1-\delta$ requires at least
$ c \log(1/\delta)/\epsilon^2$ observations $X_i^t(j)$.

Moreover,
in order to obtain the correct trust vector for {\bf all} robots w.p. at least $1/2$ requires at least
$ c \log(n)/\epsilon^2$ observations.
\end{lemma}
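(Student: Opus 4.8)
The plan is to reduce the problem to a statistical hypothesis-testing lower bound. Fix a single robot $i$ and a single neighbor $j$. From $i$'s perspective, the only information relevant to deciding whether $j$ is legitimate or spoofed is the sequence of observations $X_j^1(i), X_j^2(i), \dots, X_j^r(i)$, which by Assumption~\ref{ass:independence} are i.i.d. given the status of $j$. By Assumption~\ref{assumption:expectation}, these observations are drawn from one of two distributions: a ``legitimate'' distribution $\mathcal{D}_L$ with $\E{X} \geq 1/2 + \epsilon$, or a ``spoofed'' distribution $\mathcal{D}_S$ with $\E{X} \leq 1/2 - \epsilon$. To make the lower bound as strong as possible (i.e.\ to make the adversary's life easy), I would pick the \emph{worst-case} pair of distributions consistent with the assumption — e.g.\ Bernoulli-type distributions on $\{0,1\}$ with means exactly $1/2+\epsilon$ and $1/2-\epsilon$ — so the two hypotheses are as hard to distinguish as the model permits. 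Any algorithm $\mathcal{A}$ that does not consult neighbors' trust values is, restricted to the pair $(i,j)$, just a (possibly randomized) test between these two product distributions based on $r$ samples.

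The core step is a standard information-theoretic bound. The KL divergence between a single sample from $\mathcal{D}_L$ and one from $\mathcal{D}_S$ is $O(\epsilon^2)$ for $\epsilon$ bounded away from $1/2$ (this is the usual quadratic behavior of KL between two Bernoulli$(1/2\pm\epsilon)$ laws; a short Taylor/Pinsker-type estimate gives $\mathrm{KL}(\mathcal{D}_L \| \mathcal{D}_S) \le c' \epsilon^2$). By tensorization, the KL divergence between the $r$-fold products is at most $c' r \epsilon^2$. By Pinsker's inequality (or the standard Le Cam two-point argument), any test distinguishing the two hypotheses with error probability at most $\delta$ needs the total variation distance between the product laws to be at least $1 - 2\delta$, hence $c' r \epsilon^2 \gtrsim \mathrm{KL} \gtrsim (1-2\delta)^2 \gtrsim \log(1/\delta)$ after the right manipulation — more cleanly, the classical bound states that error probability at least $\tfrac14 e^{-\mathrm{KL}}$ forces $r \ge c \log(1/\delta)/\epsilon^2$ for a universal constant $c$. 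Setting up a single adversarial instance where exactly one neighbor $j$ of $i$ is spoofed and all others are legitimate makes ``correct trust vector for robot $i$'' equivalent to correctly classifying $j$, which gives the first claim.

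For the second claim — correctness for \emph{all} robots with probability at least $1/2$ — I would amplify via a union/independence argument. Construct an instance with $\Theta(n)$ disjoint (or near-disjoint) robot–neighbor pairs, each independently carrying its own hard binary test as above, and set each individual failure probability budget to $\delta = \Theta(1/n)$. If the per-pair sample count $r$ were $o(\log n / \epsilon^2)$, then each pair is misclassified with probability bounded below by a constant power of $1/n$ that is still $\omega(1/n)$ in aggregate; since the pairs' observations are independent, the probability that \emph{every} pair is classified correctly tends to $0$, contradicting the $1/2$ success requirement. Quantitatively, with $r \le c\log n/\epsilon^2$ for small enough $c$, each pair fails with probability at least $n^{-1/2}$ (say), so the probability all $\Theta(n)$ succeed is at most $(1 - n^{-1/2})^{\Theta(n)} \to 0$. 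Hence $r \ge c\log(n)/\epsilon^2$ observations are required.

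The main obstacle I anticipate is bookkeeping the constants and the exact form of the two-point bound so that a \emph{single} universal constant $c$ works for both statements simultaneously, and ensuring the worst-case distributions I pick genuinely satisfy Assumption~\ref{assumption:expectation} (which is phrased in terms of $\E{X_j^t(i) = 1 \mid \cdot}$, i.e.\ the probability of the extreme value, not merely the mean) while still having $O(\epsilon^2)$ pairwise KL. A secondary subtlety is formalizing ``does not share observations of trust with its neighbors'' so that the reduction to an isolated pairwise test is airtight — in particular arguing that such an algorithm's decision about $j$ can depend only on $\{X_j^t(i)\}_t$ and independent randomness, so that conditioning on the rest of the network does not help. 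Neither of these is deep; they are the standard care required to turn a Le Cam argument into a clean statement.
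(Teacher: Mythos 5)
Your proposal is correct and follows the same skeleton as the paper's proof: both construct the hard instance by taking observations to be Bernoulli with means exactly $1/2+\epsilon$ (legitimate) and $1/2-\epsilon$ (spoofed), reduce the per-neighbor decision of a non-collaborating algorithm to a two-sample-distribution test, and then obtain the $\log(n)/\epsilon^2$ bound for \emph{all} robots by exploiting independence across robots with a per-robot failure budget of order $1/n$ (the paper computes $(1-1/n)^n \le 1/e < 1/2$; your $(1-n^{-1/2})^{\Theta(n)} \to 0$ computation is the same idea with different constants). The one genuine difference is the testing lower bound you plug in: the paper measures the gap in total variation ($TV(\mathcal{D}^+,\mathcal{D}^-)=2\epsilon$) and cites the $(\epsilon,\delta)$-identity-testing sample-complexity theorem (\cref{thm:ICALPlower}) as a black box, whereas you run a self-contained Le Cam two-point argument via the KL divergence $\mathrm{KL}\bigl(\mathrm{Ber}(1/2+\epsilon)\,\|\,\mathrm{Ber}(1/2-\epsilon)\bigr) = O(\epsilon^2)$, tensorization, and a Bretagnolle--Huber-type bound. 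Your route is arguably the cleaner of the two here: the problem faced by robot $i$ is a simple-vs-simple test, so invoking a lower bound for the (composite-alternative) identity-testing problem requires the extra observation that for support size two the hard instance of identity testing is exactly this two-point instance, a step your direct argument does not need; it also makes the $1/\epsilon^2$ dependence transparent rather than inherited from the cited theorem. The subtleties you flag at the end (that \cref{assumption:expectation} constrains the probability of the extreme values, which your Bernoulli construction satisfies, and that a non-collaborating algorithm's decision about $j$ is a function only of $\{X_j^t(i)\}_t$ and private randomness) are exactly the points the paper handles implicitly, so nothing is missing.
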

The idea of the proof is to reduce obtaining correct trust vectors to $(\epsilon,\delta)$-testing (\cref{def:ICALPlower} and \cref{thm:ICALPlower}). The full proof can be found in \cref{sec:proofs}.

We now analyze one of the main building-blocks underlying our algorithm, \emph{FindSpoofedRobots}.
Let $\operatorname{legitimate}(j)$ be the indicator variable that is $1$ if and only if $j$ is a legitimate robot. 
\begin{lemma}[Upper bound]\label{lem:maj}
If a robot $i$ receives $r=\frac{\log(1/\delta)}{2\varepsilon^2}$ observations from another robot $j$, it will know with probability $(1-\delta)$ whether that robot $j$ is a legitimate robot by simply relying on the majority of the observations:
\[ \Pr{ v_{i,j}=\operatorname{legitimate}(j) } \geq 1-\delta.\]
\end{lemma}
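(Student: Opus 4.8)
The plan is to treat each observation $X_j^t(i)$ as a $[0,1]$-valued random variable and apply a Hoeffding-type concentration inequality to the empirical average. By \cref{eq:e1}, if $j$ is legitimate then $\E{X_j^t(i)} \geq 1/2 + \varepsilon$; by \cref{eq:e2}, if $j \in S$ then $\E{X_j^t(i)} \leq 1/2 - \varepsilon$ (equivalently $\E{1 - X_j^t(i)} \geq 1/2 + \varepsilon$). In either case the expectation of the average $\bar{X} := \frac{1}{r}\sum_{t=1}^r X_j^t(i)$ is bounded away from $1/2$ by at least $\varepsilon$, and the interim trust assignment $v_{i,j}$ sets $v_{i,j} = 1$ iff $\bar{X} \geq 1/2$. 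So $v_{i,j} \neq \operatorname{legitimate}(j)$ only if $\bar{X}$ deviates from its mean by at least $\varepsilon$ in the ``wrong'' direction.

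First I would invoke the Azuma–Hoeffding bound (\cref{thm:Azuma}) — or the plain Hoeffding inequality, since by \cref{ass:independence} the $X_j^t(i)$ are independent across $t$ — which gives, for the sum of $r$ independent variables each supported on $[0,1]$,
\begin{equation}
\Pr{ \left| \bar{X} - \E{\bar{X}} \right| \geq \varepsilon } \leq 2\exp(-2 r \varepsilon^2).
\end{equation}
Substituting $r = \frac{\log(1/\delta)}{2\varepsilon^2}$ yields $2\exp(-2r\varepsilon^2) = 2\exp(-\log(1/\delta)) = 2\delta$; one then either absorbs the factor of $2$ into constants or, more cleanly, uses the one-sided Hoeffding bound (the error is one-directional: a legitimate $j$ is misclassified only if $\bar X < 1/2$, i.e. $\bar X - \E{\bar X} < -\varepsilon$, and symmetrically for $j \in S$), which gives exactly $\exp(-2r\varepsilon^2) = \delta$. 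Either way, $\Pr{v_{i,j} = \operatorname{legitimate}(j)} \geq 1 - \delta$, which is the claim.

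I do not expect a serious obstacle here; this is the easy building block that the later analysis (the majority-of-neighbors step, \cref{thm:one}) bootstraps from. The only minor care needed is the one-sided versus two-sided constant and the boundary case $\bar X = 1/2$ — but since $\E{\bar X}$ is strictly separated from $1/2$ by $\varepsilon$, the event $\{\bar X = 1/2\}$ is already contained in the deviation event $\{|\bar X - \E{\bar X}| \geq \varepsilon\}$, so the tie-breaking convention in line 2 of \cref{FSN} is harmless. If the authors want the clean constant ``$r = \log(1/\delta)/(2\varepsilon^2)$'' with no stray factor of $2$, the one-sided Hoeffding bound is the right tool; I would state it that way.
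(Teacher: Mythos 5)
Your proposal is correct and takes essentially the same route as the paper: the paper's proof also reduces misclassification to a one-sided deviation of $\sum_{t=1}^r X_j^t(i)$ from its mean by $\varepsilon r$ and bounds it by $\exp(-2\varepsilon^2 r)=\delta$ at $r=\log(1/\delta)/(2\varepsilon^2)$. The only cosmetic difference is that the paper invokes Azuma--Hoeffding on the centered martingale $Z_\rho=\sum_{t\le\rho}\bigl(X_j^t(i)-\E{X_j^t(i)}\bigr)$, whereas you appeal to plain one-sided Hoeffding via independence; these are the same bound.
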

The proof idea is to use the Azuma-Hoeffding inequality.
Note that we cannot use the algorithm from \cref{thm:ICALPlower} in \cref{sec:proofs}, since the domain of $X_{j}^t(i)$ is not necessarily discrete. See \cref{sec:proofs} for the proof.

Moreover, in \cref{lem:lowermaj}, we  show that, up to constants, the round complexity of \cref{lem:maj} is tight.
 
We now have all buildings blocks  to prove the main theorem, \cref{thm:one}. The proof idea is to use very strong concentration bounds (\cref{thm:bounds-binomial-distribution}) to prove that each robot is very likely to obtain the correct trust vector after $r^*$ rounds. In the second part of the proof, we show that for $r\geq r^*$, the error forms a geometric series, yielding the desired bound. See \cref{sec:proofs} for the proof.

\newcommand{\mindelta}{\delta^+}
\newcommand{\magic}{2}

% \begin{proof}[Proof of \cref{thm:all}]
% By \cref{thm:one} we have that for each legitimate robot $i$, there exists a $r_i$ such that $i$'s trust vector for all its neighbors is correct, w.p. $p_i$.
% The choice $\zeta=\varepsilon^2(160 l^3)$ gives that each $i$ has correct entries w.p. at least $1-(160/(\varepsilon^2 l^3))$. Taking union bound over all $l$ robots results in all robots being correct w.h.p.. 

% \end{proof}

\subsection{Necessity of our Assumptions}\label{sec:neccessary}
Note that our algorithm \emph{FindSpoofedRobots} can easily tolerate the case that the spawning and spoofed robots $s$ exceeds the number of legitimate robots $l$.
On the other hand, it is easy to see that without the Assumptions in \cref{eq:e1} and \cref{eq:e2}, there is no hope of finding the spoofed robots. This is because spawning and spoofed robots are now undetectable and since the  legitimate neighbors are in the minority, communication becomes futile.
Hence, robot $i$ can only guess.

Moreover,  our assumption $\tau >0$ is also necessary.
Suppose $\tau < 0$, then there exists $i\in L$ and $j\in \neighbor_i$ such that $h_{i,j}$ ($i$'s hidden neighbors) is larger than $l_{i,j}$.
In this case, $i$ has, regardless of the algorithm, no way of knowing that it can trust $j$ (all hidden robots between $i$ and $j$ may not have any other connections). Hence, $i$'s trust vector entry for $j$ can only be guessed in this case.

\subsection{Unknown Upper Bound for $r^*$}
\label{sec:alg2_explanation}
Suppose we aim to solve the consensus problem given in \cref{prob:distEstimation}.
To solve this problem, it is necessary that every legitimate robot computes the correct trust vector for its neighborhood. 
Recall that $r^*$, defined in \cref{thm:one}, is a bound on the number of rounds required for \emph{FindSpoofedRobots} to succeed with probability $1-\delta$. Using the same algorithm for more rounds $r\geq r^*$ increases the success probability further.

What can we do if we do not know the parameters  of $r^*$ (e.g., $\tau$ or $l$)?
There are two solutions. The first one is to use worst-case bounds.
\footnote{If the total number of robots, or an upper bound on it, is known, then we can show (\cref{lem:maj}), that using $r = 20 \log (n/\epsilon^2)/\epsilon^2$ achieves this. However, often this is very wasteful (e.g., \cref{tab:examplevalues}). In addition, if $n$ is not known, this approach is not possible.}
However, such bounds are not always available and the approach is very conservative.

The second solution is to use an \emph{anytime} algorithm. The idea here is that one can simply estimate an upper bound, $\hat r$, on $r^*$. This estimate is then doubled in the background (until we reach the worst-case bound or forever) and we simply execute the consensus algorithm to solve \cref{prob:distEstimation}  using the current estimate $\hat r$ and assuming it's correct. 
If that estimate $\hat r$ happened to be larger than $r^*$, then the outcome will be correct.
If it was not correct, then the current solution of the consensus algorithm may be incorrect, however, eventually we
will find the correct bound through the doubling of $\hat r$. Therefore, this is an anytime algorithm in the sense that it returns a solution at any point and the solution quality can only improve given a larger number of runs.

We show an example of such an algorithm in \cref{FSN2} applied to the problem of finding an adjacency matrix.

\ifworkversion
\twocolumn
\fi

\section{Estimating  Graph  Properties  in  Adversarial  Settings}

% \textcolor{red}{In the presence of adversaries, the consensus system can be modeled as
% \begin{equation}
%     x[t+1] = Wx[t] + B_M u[t],
%     \label{eq:consensus}
% \end{equation}
% where $B_M = \begin{bmatrix} e_{j_1} & e_{j_2} & \dots \end{bmatrix}$ for all $j \in M$, $e_{j}$ is the $j^{th}$ vector of the canonical basis, and $u[t]$ is an additive error term. Each robot $i$ only has immediate access to the values of its neighbors. This denotes its output at time-step $t$ as $y_i[t] = C_ix[t]$ where $C_i = \begin{bmatrix} e_{j_1} & e_{j_2} & \dots \end{bmatrix}^\intercal$ for all $j \in \neighbor_i$.
% \begin{definition}[Observability Matrix] \label{def:observability}
% The output of robot $i$ for successive time-steps can be represented as successive multiplications by the weight matrix $W$. This is known as the \emph{observability matrix} for robot $i$ and can be denoted by $\mathcal{O}_i = \begin{bmatrix} C_i & C_i W & C_i W^2 & \dots & C_i W^{n-1} \end{bmatrix}$.
% \end{definition}
% Robot $i$ can reach consensus with the team under no adversarial influence if its observability matrix $\mathcal{O}_i$ has full column rank, which means the robot is capable of observing the values of all other robots in the system over time.}
Estimation of graph properties in adversarial settings poses a major difficulty to resilience.  Indeed, many methods achieving resilience in multi-robot networks rely on accurate calculation of graph properties such as the Laplacian matrix, adjacency matrix, observation or control matrices, among others \cite{WMSR, Sundaram}. However, the question of \emph{how to accurately compute graph properties in adversarial settings} where robots cannot be assumed to cooperate, is largely unanswered.  In this section our objective is to derive methods for using inter-robot trust observations to estimate true underlying graph topology; i.e. selectively exclude malicious robots that could provide a false sense of security if otherwise accounted for.

Determining global properties of a network, such as the weight matrix or adjacency matrix which can show the connectivity of all robots to each other, is often necessary in order to satisfy assumptions made for resilience to adversaries in consensus systems~\cite{Sundaram,BulloCyberphysSecurity_GeometricPrinciples}. However, determining a global property in a distributed system requires collaboration and cooperation of all robots in the network, therefore making it very susceptible to adversaries. Even with global information, i.e., a fully connected graph where distributed computation is not required, computing the adjacency matrix in an unreliable network is a hard problem.  During a Sybil Attack where some robots in the network are spoofed,  simply examining each robot's neighborhood without a concept of which robots are trustworthy could give robots performing distributed estimation a false sense of security. Consider the simple example in \cref{fig:with_spoofing}, similar to~\cite{WMSR}.

\begin{figure}[h!]
    \centering
    \includegraphics[width = 0.38\textwidth]{./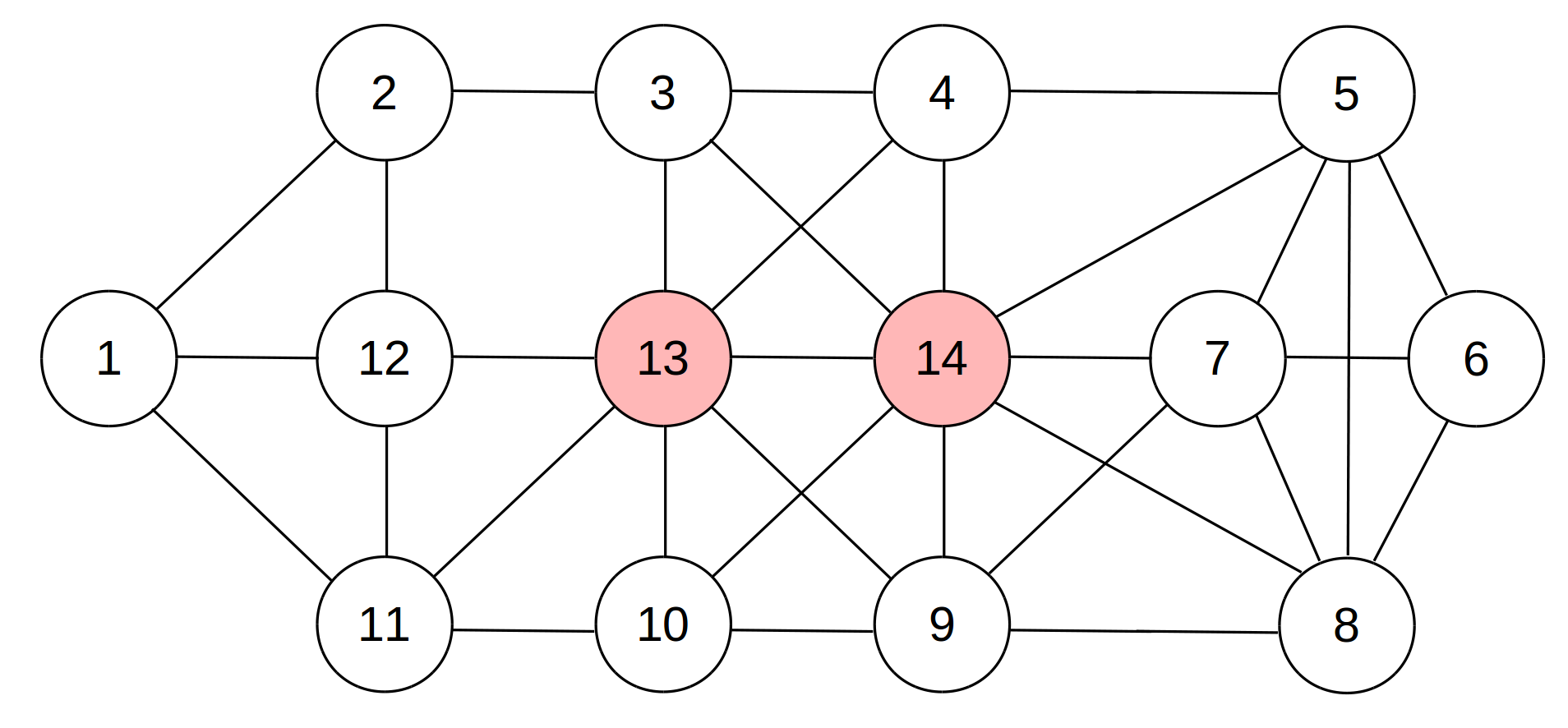}
    \caption{A 12 robot network with 2 additional spoofed robots, robots 13 and 14.}
    \label{fig:with_spoofing}
\end{figure}
\begin{figure}[h!]
    \centering
    \includegraphics[width = 0.38\textwidth]{./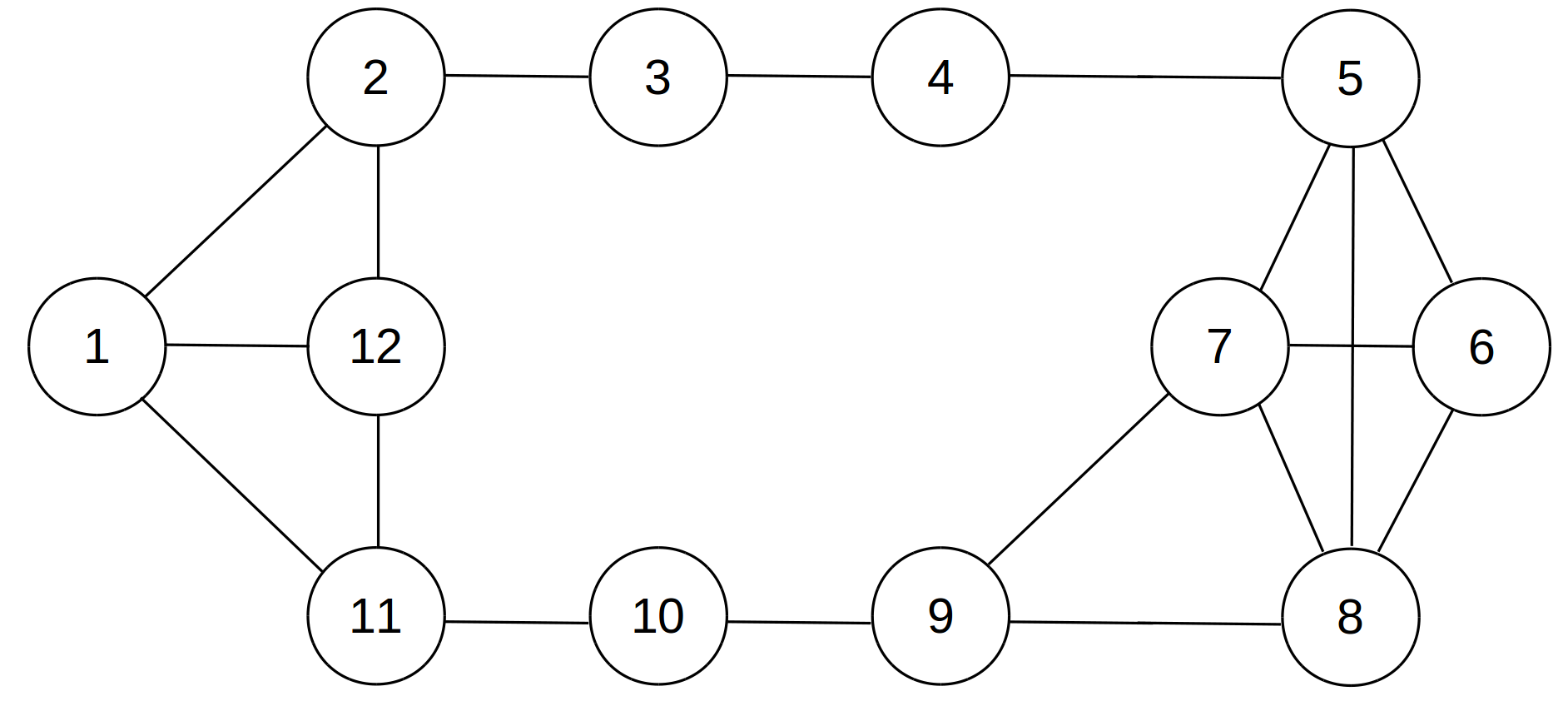}
    \caption{A 12 robot network with no spoofed robots}
    \label{fig:without_spoofing}
\end{figure}
\vspace{-3mm}
\noindent In this example, 12 robots are incorrectly perceiving their network to actually consist of 14 robots. The network is 3-connected so the algorithms in \cite{Bullo} can identify up to $m = 1$ malicious robots. Furthermore, there are three internally vertex-disjoint paths from every robot outside of $\mathcal{N}_1$ to robot 1. For example, there are three paths from robot 4 to robot 1 ($4 \rightarrow 3 \rightarrow 2 \rightarrow 1, 4 \rightarrow13 \rightarrow 12 \rightarrow 1, 4 \rightarrow 14 \rightarrow 9 \rightarrow 10 \rightarrow 11 \rightarrow 1$) all using disjoint subsets of the network. This quality means robot 1 can calculate any function of the system initial values in the presence of $m = 1$ adversary using the theory presented in \cite{Sundaram}. However, this sense of security is created by the two spoofed robots, robots 13 and 14, and this would actually fail in practice.

Consider the actual 12 robot network, with the two spoofed robots removed. This network is not ($2,2$)-robust, and therefore cannot actually handle a single adversary following the definition from~\cite{WMSR} for example. In fact, applying some resilience methodologies from the literature such as the W-MSR algorithm from~\cite{WMSR} would result in robots converging to several different values (i.e. divergence of a consensus algorithm). Indeed, there are only two internally vertex-disjoint paths from some robots outside of $\mathcal{N}_1$ to robot 1, so that robot 1 cannot calculate any function of the system initial values using existing theory from~\cite{Sundaram} for example. \cref{fig:spoofed_conn} shows how a spoofing (Sybil Attack) on the system can easily result in a false sense of security where the algebraic connectivity of a graph is assumed to be higher by the presence of malicious robots in the network.

A summary of the false security provided by the Sybil attack with respect to the mentioned works, named here as, W-MSR \cite{WMSR}, Identification \cite{Bullo}, and Distributed Function Calculation (DFC) \cite{Sundaram}, is shown in \cref{tab:false_security}, where the first row shows the number of malicious robots that are predicted to be tolerated for each resiliency algorithm, by perceiving the network in \cref{fig:with_spoofing}, and the second row shows the number of malicious robots each algorithm can actually tolerate, given the fact that the true network is what is shown in \cref{fig:without_spoofing}.
\begin{table}[h!]
\centering
\caption{Number of Adversaries Each Algorithm is Resilient to}
\begin{tabular}{c|c|c|c|c}
      & W-MSR \cite{WMSR} & Identification \cite{Bullo} & DFC \cite{Sundaram} \\ \hline
     Perceived & 1 & 1 & 1 \\ Actual & 0 & 0 & 0
\end{tabular}
\label{tab:false_security}
\end{table}

To further convey the magnitude of false security a Sybil attack can create, see \cref{fig:spoofed_conn}. In this figure, robots were placed randomly into a box which created a random communication network. Each network contained a certain amount of spoofed robots, which varied from 0 to 20. The number of legitimate robots remained constant at 20. A common way to quickly test the resilience of a network is to check its algebraic connectivity \cite{7822915, SGcontroller}. The algebraic connectivity is characterized by the second smallest eigenvalue of the graph Laplacian, and increasing values correspond to networks with greater connectivity, with a maximum connectivity value of $n$, corresponding to a fully connected graph. \cref{fig:spoofed_conn} shows that the algebraic connectivity of a graph that falls victim to a Sybil attack appears to be higher than it actually is. The actual connectivity of the network is characterized by the connectivity of the network with the spoofed robots ignored. The higher connectivity determined by the robots in a spoofed network gives the robots a false sense of the resiliency of the network to adversaries.
\begin{figure}[h!]
    \centering
    \includegraphics[width = 0.48\textwidth]{./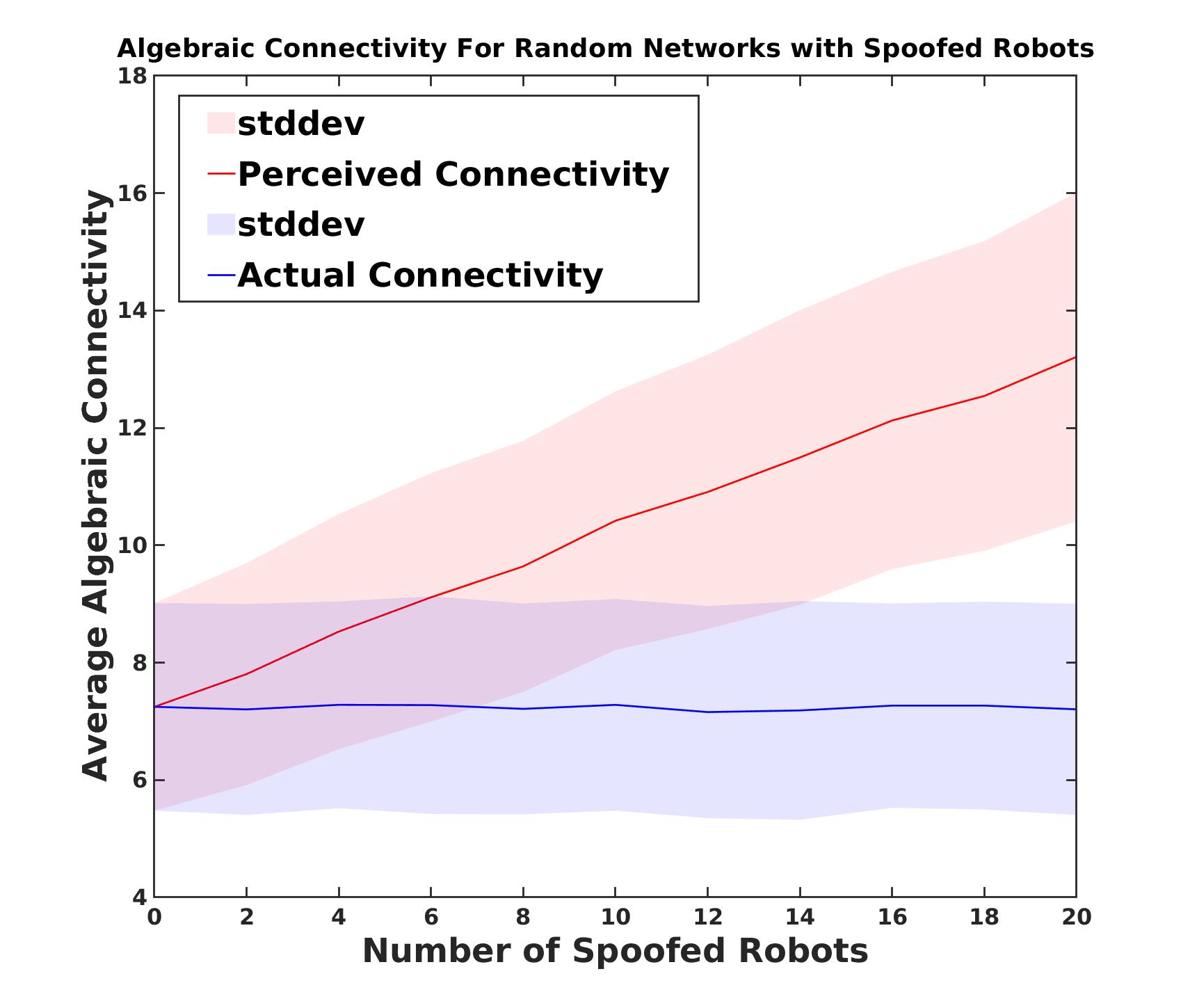}
    \caption{Random robot networks were created in the presence of a Sybil attack and the algebraic connectivity was calculated for each. Then the spoofed robots were removed and the algebraic connectivity was calculated again. The number of legitimate robots was held constant at 20 while the number of spoofed robots varied from 0 to 20. 1000 simulations were run for each number of spoofed robots and the average algebraic connectivity is shown.} 
%     %For this example, edges were determined using the following rule: communication occurs between robots $i$ and $j$ with probability 1 if $||p_i - p_j|| \leq 1$, with probability $1/||p_i - p_j||$ when $1 < ||p_i - p_j|| < 4$, and with probability 0 otherwise.}
    \label{fig:spoofed_conn}
\end{figure}

% *******************************
Sybil attacks can limit the resilience capabilities of \cite{WMSR, Bullo, Sundaram} by creating a false sense of security in the graph as was shown above. In the next subsection we use the trust vectors derived in the previous sections via \emph{FindSpoofedRobots} to recover true graph properties in the presence of a Sybil Attack with high probability.

\subsection{An Algorithm for Finding Resilient Graph Properties}
\label{sec:dist_estimation}

In this section we develop a new algorithm that allows a robot to utilize the trusted network that \emph{FindSpoofedRobots} provides in order to determine the true adjacency matrix of a graph in a distributed manner with high probability, even in the presence of malicious non-cooperating robots.

Determining global properties of a network, such as the weight matrix or adjacency matrix which can show the connectivity of all robots to each other, has been shown to be necessary in order to satisfy assumptions made for resilience to adversaries in consensus systems \cite{WMSR, Bullo, Sundaram, 7822915}. However, determining a global property in a distributed system requires collaboration and cooperation of all robots in the network, therefore making it very susceptible to adversaries. Even with global information, i.e., a fully connected graph where distributed computation is not required, computing the adjacency matrix in an unreliable network is a hard problem. Utilizing the \emph{FindSpoofedRobots} algorithm, robots can distributedly determine the correct adjacency matrix of the consensus system in the presence of a Sybil attack. This can be done using a new algorithm, \emph{FindResilientAdjacencyMatrix} (FRAM), which is shown in \cref{alg:FRAM}.
\begin{algorithm}[h!]
\caption{FindResilientAdjacencyMatrix
\\
Input: trust vector $\mathbf{v}_i^*$
\\
Output: adjacency matrix $\mathbf{V}_i^* = \mathcal{G}$, which is accurate with probability $1-\delta$.} \label{alg:FRAM}
\begin{algorithmic}[1]

\State Each robot runs \emph{FindSpoofedRobots} to determine their final trust vectors.

\State Each robot $i$ broadcasts its vector $\mathbf{v}_i^*$ to all neighboring robots.

\State Let $\mathbf{V}_i^* := (\mathbf{v}_1^* , \mathbf{v}_2^* , \dots \mathbf{v}_n^* )$ be a $n\times n$ matrix where the $k_{th}$ row is the vector $\mathbf{v}_k^*$ if $k \in \neighbor_i$ and is a row filled with ``$-$" entries otherwise, to indicate no data.

\State For each $k \in \neighbor_i$, change any ``$-$" in $\mathbf{v}_k^*$ to a $0$, since the lack of information in that index means the corresponding robot to that index is not a neighbor of robot $k$.

\While{$\mathbf{V}_i^*$ has at least 1 row of ``$-$" entries \textbf{or} $r \leq n-1$}

\State Each robot $i$ continues to broadcast $\mathbf{V}_i^*$ and compares its matrix $\mathbf{V}_i^*$ with its neighbors, replacing any row with ``$-$" entries with the corresponding true row as new information disperses through the network.

\EndWhile

\State Finally each robot $i$ returns its determined adjacency matrix $\mathbf{V}_i^* = \mathcal{G}$, which is accurate with probability $1-\delta$.
\end{algorithmic}
\end{algorithm}

\begin{algorithm}[h!]
\caption{FindSpoofedRobotsUnknown$r^*$ at robot $i$\phantom{xxxx}
\\
Input: neighborhood $\neighbor_i$, TrustVectorProblem that has to be solved.
%\\
%Output: network matrix
}\label{FSN2}
\begin{algorithmic}[1]
\State $\hat r=1$
%\State $\mathbf{v}_i \gets$ all zero vector
\State FindResilientAdjacencyMatrix($\mathbf{v}_i^*)$ 
 \While{$\hat r\leq 20 \log (n/\epsilon^2)/\epsilon^2 $}
\State $\hat r=2\cdot \hat r$
\State $\mathbf{v}_i \gets $ FindSpoofedRobots($\hat r,\neighbor_i$)
\If {$\mathbf{v}_i$ changed}
\State FindResilientAdjacencyMatrix($\mathbf{v}_i^*)$
\EndIf
\EndWhile
\end{algorithmic}
\end{algorithm}

The FRAM algorithm is demonstrated with a simple example. 

Consider the system in \cref{fig:estimation_ex}.
\begin{figure}[h!]
    \centering
    \includegraphics[width = 0.30\textwidth]{./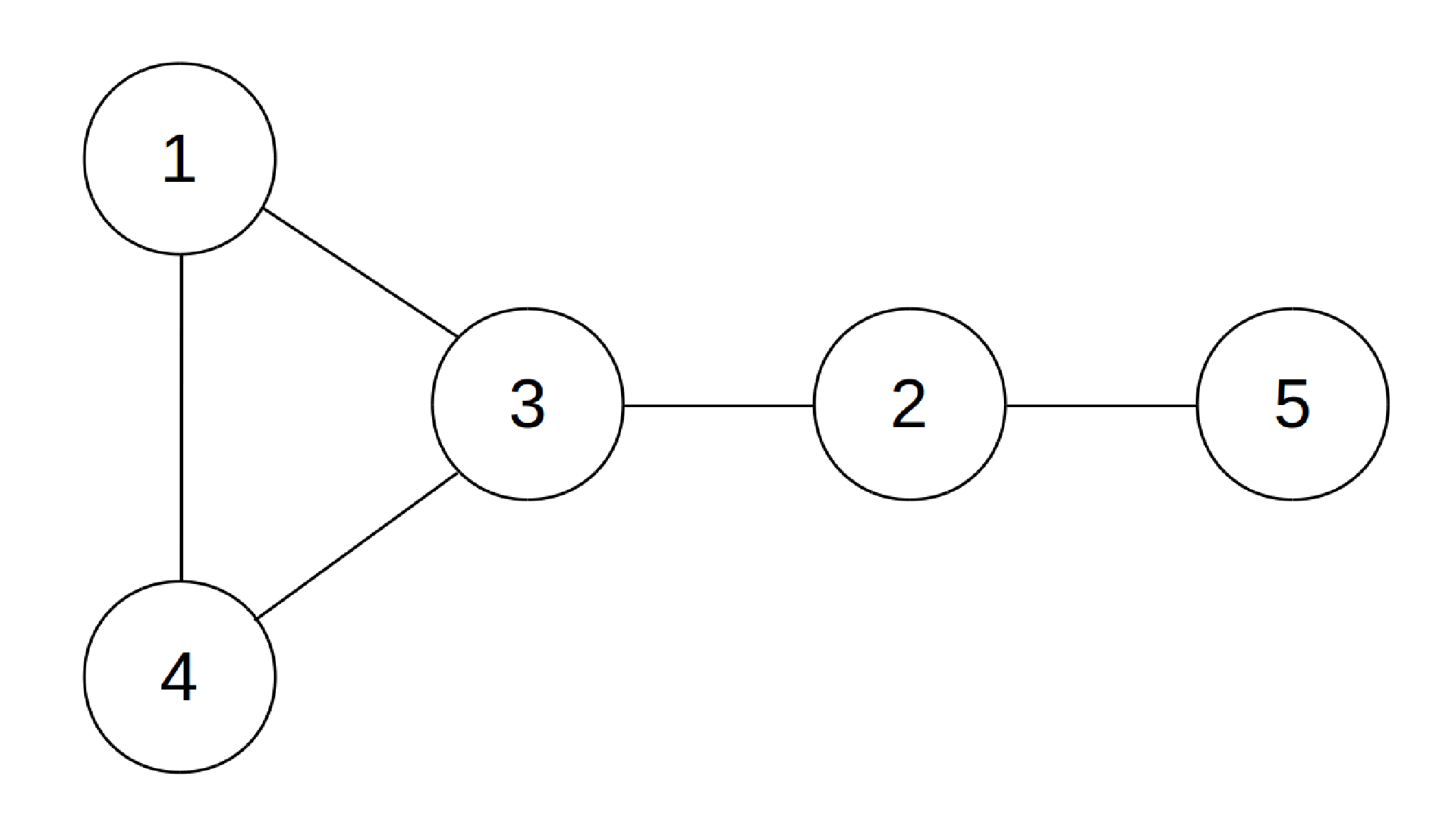}
    \caption{A simple 5 robot network to demonstrate the use of \cref{alg:FRAM}.}
    \label{fig:estimation_ex}
\end{figure}
Assume the graph to be undirected and have no adversaries with adjacency matrix
\begin{equation}
    \mathcal{G} = \begin{bmatrix} 1 & 0 & 1 & 1 & 0 \\ 0 & 1 & 1 & 0 & 1 \\ 1 & 1 & 1 & 1 & 0 \\ 1 & 0 & 1 & 1 & 0 \\ 0 & 1 & 0 & 0 & 1 \end{bmatrix}.
\end{equation}
Using our FRAM algorithm, each robot will broadcast to its neighbors and compute a trust vector
\begin{align*}
    \mathbf{v}_1^* &= \begin{bmatrix} 1 & $-$ & 1 & 1 & $-$ \end{bmatrix}, \quad \mathbf{v}_2^* = \begin{bmatrix} $-$ & 1 & 1 & $-$ & 1 \end{bmatrix}, \\ \mathbf{v}_3^* &= \begin{bmatrix} 1 & 1 & 1 & 1 & $-$ \end{bmatrix}, \quad \mathbf{v}_4^* = \begin{bmatrix} 1 & $-$ & 1 & 1 & $-$ \end{bmatrix}, \\ \mathbf{v}_5^* &= \begin{bmatrix} $-$ & 1 & $-$ & $-$ & 1 \end{bmatrix},
\end{align*}
where ``$-$" denotes no information since the robot could not communicate to all other robots. After computing the trust vector, each robot then broadcasts its vector $\mathbf{v}_i^*$ to its neighbors, allowing each robot $i$ to form the matrix $\mathbf{V}_i^*$. Only $\mathbf{V}_1^*$ is shown here for brevity:
\begin{equation}
    \mathbf{V}_1^* = \begin{bmatrix} 1 & $-$ & 1 & 1 & $-$ \\ $-$ & $-$ & $-$ & $-$ & $-$ \\ 1 & 1 & 1 & 1 & $-$ \\ 1 & $-$ & 1 & 1 & $-$ \\ $-$ & $-$ & $-$ & $-$ & $-$ \end{bmatrix}.
\end{equation}
Robot 1 still has no information about robots 2 or 5, but one can notice that rows 1, 3, and 4 of $\mathbf{V}_1^*$ are exactly rows 1, 3, and 4 of $\mathcal{G}$ (replacing the $-$'s in those rows with zeros which will happen in the next step). It is important to note that since the graph is \emph{sufficiently connected} i.e., robot 2 gathers information from robot 5 and robot 3 gathers information from robot 2, that after several iterations robot 1 would gain access to the vectors $\mathbf{v}_2^*$ and $\mathbf{v}_5^*$ and could obtain a final matrix $\mathbf{V}_1^* = \mathcal{G}$. All other robots could also follow the same process to determine their final matrix $\mathbf{V}_i^* = \mathcal{G}$.

If the system is a consensus system with a row stochastic weight matrix, then calculating the weight matrix from the adjacency matrix is straightforward. However, any general weight matrix can be calculated by extending the results from \cite{obs_estimation} for a robot estimating its observability matrix, $\mathcal{O}_i$. 

The authors in \cite{obs_estimation} outline how a robot can distributedly estimate its observability matrix, when there are no adversaries present. By utilizing our FRAM algorithm, a robot could also determine its observability matrix by using its filtered output of only trusted values and the algorithm in \cite{obs_estimation}. Assume that each robot $i$ only has immediate access to the values of its neighbors. Denote its output at time-step $t$ as $y_i[t] = C_ix[t]$ where $C_i = \begin{bmatrix} e_{j_1} & e_{j_2} & \dots \end{bmatrix}^\intercal$ for all $j \in \neighbor_i$. Then:

\begin{definition}[Observability Matrix] \label{def:observability}
The output of robot $i$ for successive time-steps can be represented as successive multiplications by the weight matrix $W$. This is known as the \emph{observability matrix} for robot $i$ and can be denoted by $\mathcal{O}_i = \begin{bmatrix} C_i & C_i W & C_i W^2 & \dots & C_i W^{n-1} \end{bmatrix}$.
\end{definition}

This matrix, $\mathcal{O}_{i,0:n-1}$, is made up of observations from robot $i$ from some initial time $0$ until time-step $n-1$. Now, allow the robot to continue for one more observation and form another matrix $\mathcal{O}_{i,1:n}$. Recall from \cref{def:observability} that the output of robot $i$ for successive time-steps can be represented as successive multiplications by the weight matrix $W$. By utilizing this representation, it is clear that $\mathcal{O}_{i,0:n-1}$ and $\mathcal{O}_{i,1:n}$ are related by $W$ as $\mathcal{O}_{i,0:n-1} W = \mathcal{O}_{i,1:n}$. Therefore, when the two observability matrices have full column rank, the weight matrix can be calculated directly from the relationship using least-squares. Furthermore, robot $i$ does not necessarily need to make $n$ observations to compute $W$, rather it only needs to run for enough time-steps to ensure the observability matrix has full column rank. This is characterized in \cite{obs_estimation} by the parameter $\mathcal{L}_i$. Specifically, the observability matrix estimation only needs to run for $\mathcal{L}_i + 1$ time-steps to guarantee full column rank, where $\mathcal{L}_i$ satisfies $0 \leq \mathcal{L}_i < |\mathcal{P}_i|-|\neighbor_i|$. $\mathcal{P}_i$ is defined as $\mathcal{P}_i = \left\{ j|\text{There exists a path from } j \text{ to } i \text{ in } \mathcal{G} \right\} \cup i$. With this in mind, it is actually sufficient to form $\mathcal{O}_{i,0:\mathcal{L}_i+1}$ and $\mathcal{O}_{i,1:\mathcal{L}_i+2}$ and solve $\mathcal{O}_{i,0:\mathcal{L}_i+1} W = \mathcal{O}_{i,1:\mathcal{L}_i+2}$ for $W$.

\section{Resilient Flocking and Dynamic Tracking}

In this section we apply our findings from previous sections to the problem of multi-robot flocking. A team of robots is tasked with arranging around, and following a moving target that is trying to escape. As long as the robot team can travel at least as fast as the target, the target would not normally be able to escape due to the flocking control law. This motivates a malicious attack where adversaries try to push the legitimate robots away from the target and disrupt the flocking control. Note that our finite time results for \emph{FindSpoofedRobots} can be leveraged for problems with a dynamic nature such as this one where trust values can be learned and applied within a finite time window.

\subsection{Characterization of the Threat on Flocking Control}

\begin{figure}[b!]
    \centering
    \includegraphics[width = 0.44\textwidth]{./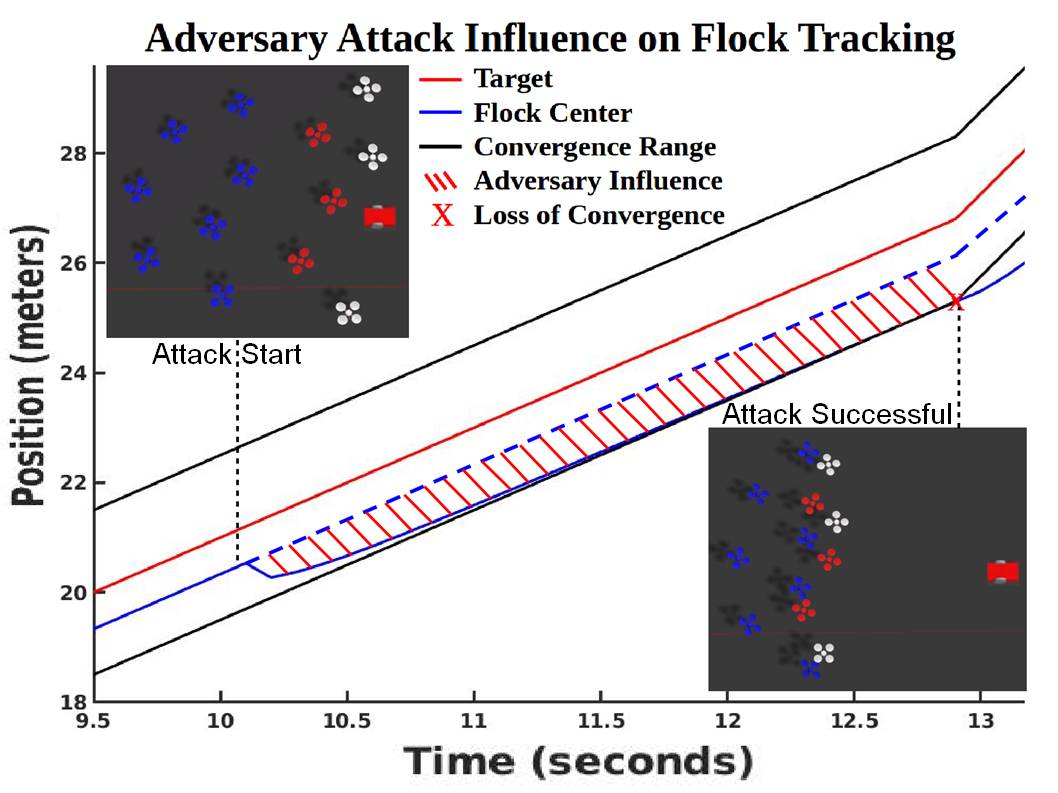}
    \caption{The effect an adversarial attack can have on the actual position of the robots versus the desired position is depicted by the red area. At the red 'x' the flock diverges far enough from the target that the target can escape.}
    \label{fig:adv}
\end{figure}
Malicious robots are able to attack and affect the system if the system trusts the information the robots are transmitting. In order to force the flock to fail and allow the target to escape, the malicious robots must work to strategically drive the legitimate robots out of the Convergence Range as defined in \cref{def:convergence_range}. The way a malicious robot can affect the overall formation of robots is by virtue of the $u_i^{avoid}$ term in Equation \cref{eq:reynolds}. Examples of how the terms in the flocking controller affect each robot are shown in \cref{fig:attack}.

For a maximal attack on the system, malicious robots can try to gain the highest influence on the flocking control scheme; i.e., $u_i \approx u_i^{avoid} >> (u_i^{ref} + u_i^{match})$. Once the malicious robots cause the $u_i^{avoid}$ term to have the highest influence in the controller, they can manipulate the flock however needed. For the case of target tracking, the best attack would be to force the flock in the opposite direction of where the target is moving such as in \cref{fig:attack}.

Next, we analyze how a malicious robot can be positioned strategically to inflict the most damage to the system performance. We do this by analyzing the $u_i^{avoid}$ term in Equation \cref{eq:reynolds}. The term is rewritten in a way that is more insightful about its qualities

\begin{equation}
    u_i^{avoid}[t] = \sum_{j\in\neighbor} \frac{k_{avoid}}{||p_i[t] - p_j[t]||} \frac{\left( p_i[t] - p_j[t] \right)}{||p_i[t] - p_j[t]||}.
    \label{eq:avoid_term}
\end{equation}
From \cref{eq:avoid_term} it is obvious that the velocity input aimed at collision avoidance consists of a magnitude term and a direction term. This implies that in order for any malicious robot to affect a legitimate robot the most, it should occupy a space in the formation that would effectively influence legitimate robots to move to the areas of the formation farther away from the tracked vehicle. A spawning adversary could amplify the effect of this malicious behavior by spoofing robots to greater influence the legitimate robots.

\subsection{Resilient Controller}
\label{sec:conv_results}

Derivations for exponential convergence for flocking control were inspired by the works in \cite{su2007flocking1} and \cite{su2007flocking2}. The flock center converges on the target as $\Bar{u} \rightarrow \text{vel}_\text{cen}$, where
% should I explain why avoid and matching terms go away?
\begin{equation}
    \Bar{u}[t] = \frac{\sum_{i=1}^n u_i[t]}{n} = k_{ref}(p_\text{cen}[t] - \Bar{p}[t]).
\end{equation}
This can be written as a set of difference equations for $\Bar{p}[t] \text{ and } p_\text{cen} [t]$:

\begin{align}
    \frac{\Bar{p}[t+1]-\Bar{p}[t]}{t_s} &= \Bar{u}[t] = k_{ref}(p_\text{cen}[t] - \Bar{p}[t]), \\
    \frac{p_{\text{cen}}[t+1]-p_{\text{cen}}[t]}{t_s} &= \text{vel}_\text{cen},
\end{align}
where $t_s$ is the sampling time between time-steps. Solving this set of difference equations for $\Bar{p}[t]$ gives
\begin{equation}
    \Bar{p}[t] = p_\text{cen}[t] - \frac{\text{vel}_\text{cen}}{k_{ref}} + \left( \Bar{p}[0] - p_\text{cen}[0] + \frac{\text{vel}_\text{cen}}{k_{ref}} \right) \left( 1 - k_{ref}t_s \right)^t.
\end{equation}
Notice that as time-steps progress $\Bar{p}[t]$ converges exponentially to $p_\text{cen}[t] - \frac{\text{vel}_\text{cen}}{k_{ref}}$ assuming $k_{ref}t_s < 1$. This result guarantees exponential convergence of the flock center to a position near the target so long as the robots are within the Convergence Range, \cref{def:convergence_range}. This is demonstrated one-dimensionally in \cref{fig:const_velo}. In the figure, 10 robots start at random positions behind a target moving across the positive y-axis. They converge to within $\pm 0.75$ meters of the target exponentially as determined and settle to a final position of $p_\text{cen}[t] - \frac{\text{vel}_\text{cen}}{k_{ref}}$. However, this result fails to acknowledge the maximum velocity capability of the robots, $\text{vel}_\text{max}$. To incorporate $\text{vel}_\text{max}$, the control is bounded by a maximum control $u_\text{max} = \text{vel}_\text{max}$. This yields a new result:
\begin{equation}
\Bar{u} =  
 \begin{cases} 
     u_\text{max},  & k_{ref}(p_\text{cen}[t] - \Bar{p}[t]) > u_\text{max} \\
      k_{ref}(p_\text{cen}[t] - \Bar{p}[t]), & k_{ref}(p_\text{cen}[t] - \Bar{p}[t]) \leq u_\text{max}.
   \end{cases}
\end{equation}
\begin{figure}[t!]
    \centering
    \includegraphics[width = 0.48\textwidth]{./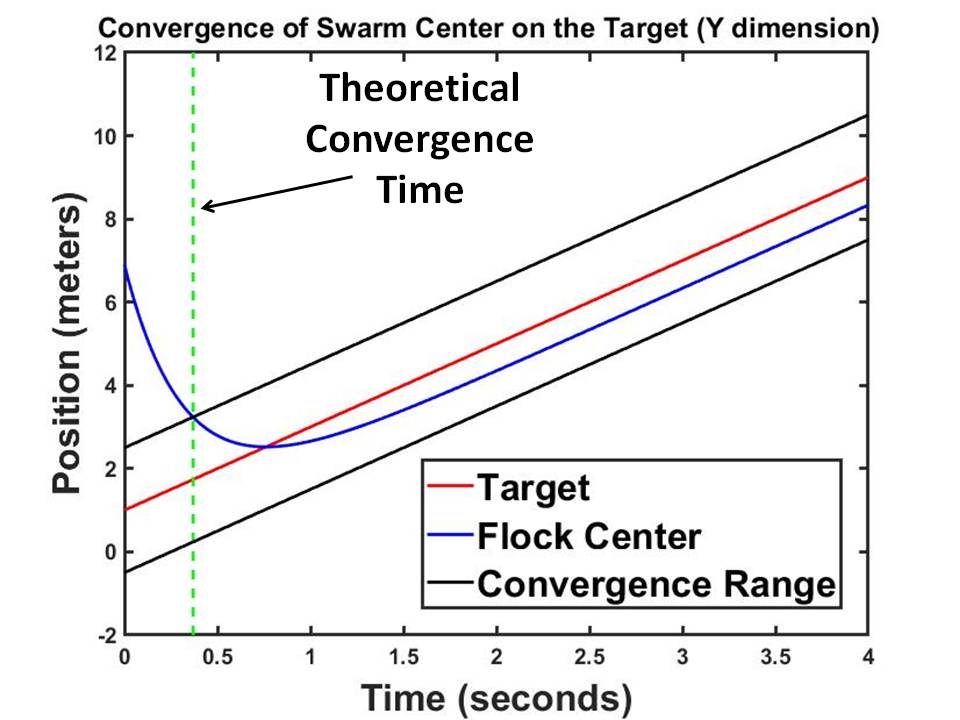}
    \caption{The swarm converges exponentially on the tracked target in the nominal case of no attack}
    \label{fig:const_velo}
\end{figure}
With this more realistic result, we can determine necessary conditions for exponential convergence of the flock center on the target. In order to achieve exponential convergence, the control input must remain within the bounds, namely $|\Bar{u}[t]| \leq u_\text{max}$ for all time-steps $t \geq 0$. This yields a maximum distance the flock center can be from the target at any time

\begin{equation}
    \text{max} |p_\text{cen}[t] - \Bar{p}[t]| = \frac{u_\text{max}}{k_{ref}}.
    \label{eq:exp_conv_range}
\end{equation}
%Since we are attaining convergence of the flock center on the target, it is reasonable to assume that the quantity $|p_\tau(t) - \Bar{p}(t)|$ is decreasing with time, and therefore the maximum value will be obtained at $t = 0$. This allows us to re-write the condition for exponential convergence as
%\begin{equation}
%    |p_\tau(0) - \Bar{p}(0)| \leq \frac{u_\text{max}}{k_{ref}}
%    \label{eq:exp_conv_restricted}
%\end{equation}
%This new result is demonstrated in \cref{fig:velo_restricted}. This time, with a restricted velocity, when the flock starts outside the exponential convergence range shown in equation \ref{eq:exp_conv_restricted}, the flock can no longer converge exponentially. 
%The fact that the quantity $|p_\tau(t) - \Bar{p}(t)|$ must be decreasing with time in order for the flock to converge on $C_F$ also alludes to the necessary condition that $u_\text{max} > v_\tau$. 
Another condition necessary to ensure convergence is $u_\text{max} > \text{vel}_\text{cen}$. This condition is obvious because if the target is moving faster than the robots can, they will never converge on it.

In the case where adversaries are present, the flock may start to diverge from the target. In order to ensure the flock can still track the target, it is desirable to keep the flock close enough to the target that it can still recover exponential convergence once all adversaries are detected and rejected. From the condition developed in Equation \cref{eq:exp_conv_range}, this means the flock center must not stray more than $\frac{u_\text{max}}{k_{ref}}$ meters from $p_\text{cen}$. It has been shown that while the flock is initially converged on the target, it trails at a distance of $\frac{\text{vel}_\text{cen}}{k_{ref}}$ meters, which means the remaining distance it can stray from the target before the exponential convergence is compromised is $\frac{u_\text{max}}{k_{ref}} - \frac{\text{vel}_\text{cen}}{k_{ref}}$ meters. If the flock is pushed away from the target at its maximum speed, $u_\text{max}$, then the distance between the flock and the target is increasing at a rate of $u_\text{max} + \text{vel}_\text{cen}$ m/s. This information can be used to derive a minimum time required for any adversarial attack to compromise flock tracking of the target:

\begin{equation}
    \Delta = \frac{u_\text{max} - \text{vel}_\text{cen}}{k_{ref} \left( u_\text{max} + \text{vel}_\text{cen} \right)}.
    \label{eq:t_min}
\end{equation}
This phenomenon is represented in \cref{fig:time_to_comp}. As long as \emph{FindSpoofedRobots} can find and reject all spoofed robots in less time than this allowable time window, $\Delta$, the flock can always recover from an attack and avoid losing the target.

An important distinction to note is that the time for \emph{FindSpoofedRobots} to run is denoted in \emph{rounds}, while the time for $\Delta$ is in seconds. It has been stated, in \cref{def:round}, that a round is the amount of time it takes for any message broadcasted to reach all other neighboring robots. This time varies on different robots depending on how the signal is transmitted and the maximum distance robots are from each other. So, the flock can recover from a maximal attack as long as the following inequality holds:
\begin{equation}
    t[r^*]-t[0] \leq \Delta,
\end{equation}
where $t[r^*]-t[0]$ represents the amount of time, in seconds, required for $r^*$ rounds to transmit.

\begin{figure}[h!]
    \centering
    \includegraphics[width = 0.42\textwidth]{./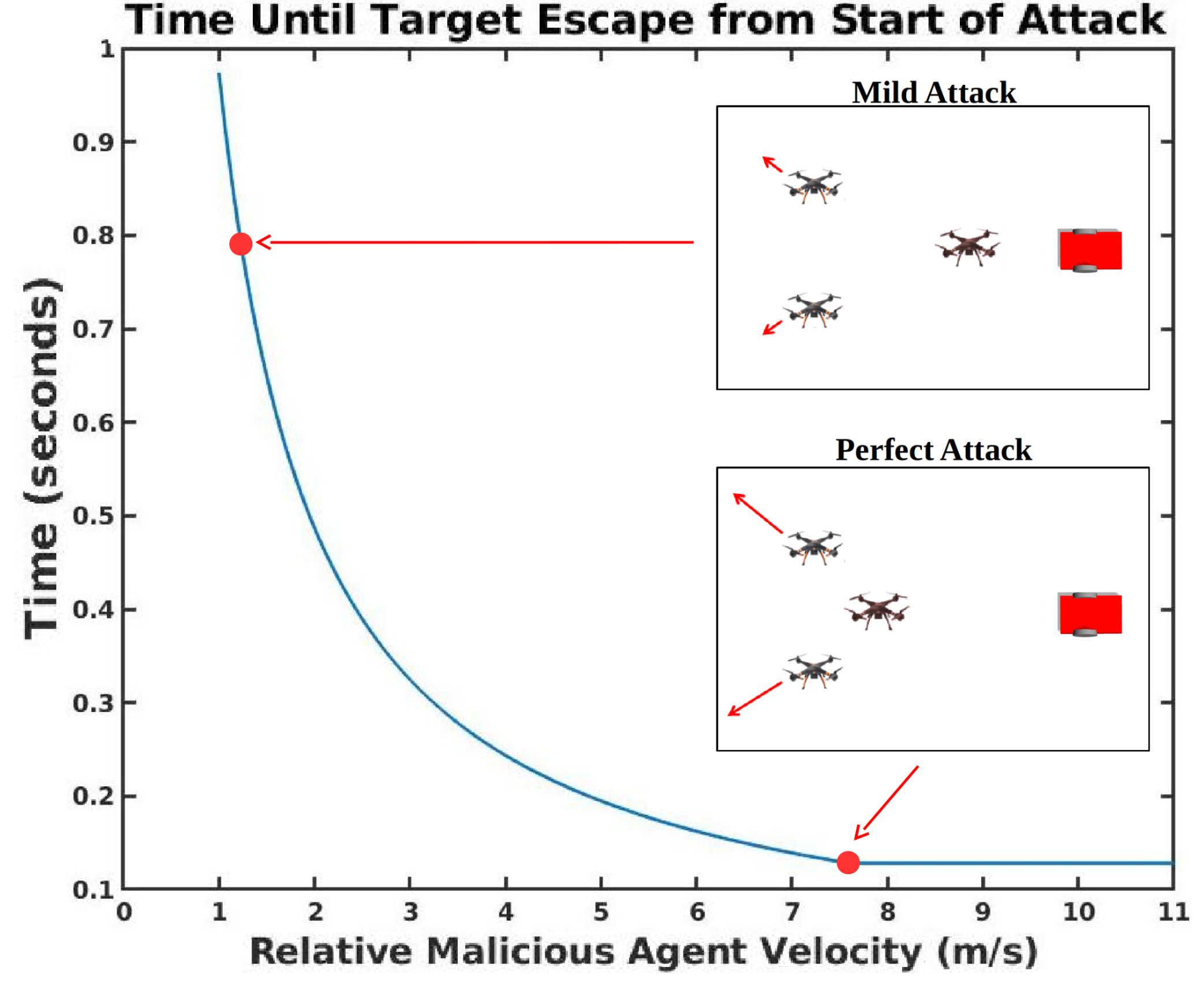}
    \caption{The target cannot escape the flock arbitrarily fast. There is a lower bound on the time required for escape. A strong attack involves the adversary impinging closely on the legitimate robots to drive them away quickly, represented by the longer red arrows in the \emph{Perfect Attack} box.}
    \label{fig:time_to_comp}
\end{figure}

Simulation results are shown below in \cref{sec:simulations} where a comparison to other algorithms such as W-MSR is studied.

\section{Simulation Results}
\label{sec:simulations}

In this section we test the effectiveness of \emph{FindSpoofedRobots} compared to the Baseline spoof resiliency algorithm \cite{AURO} from our previous work. Other qualities of the \emph{FindSpoofedRobots} algorithm with respect to the $\tau$-triangularity of a network are also demonstrated. The presented algorithms are used to show applications to existing theory \cite{WMSR, Sundaram} as well as the specific example of multi-robot flocking with dynamic target tracking. Simulations were developed using the Robot Operating System (ROS) with the Gazebo simulator and results were analyzed in MATLAB.

\subsection{Observing the Added Benefit of Opinion Dynamics}

\cref{fig:comp_to_BM} represents the results of simulations that were run using the Baseline algorithm \cite{AURO} and \emph{FindSpoofedRobots} to demonstrate the improvement in speed of \emph{FindSpoofedRobots} for correctly identifying spoofed robots. The Baseline algorithm \cite{AURO} uses the same observations over the wireless channel, but does not utilize opinions from a robot's neighborhood. This is similar to running only lines 1 and 2 of \emph{FindSpoofedRobots}, i.e., before neighbor opinions are used. Both algorithms run a small number of rounds and produce an output trust vector. The output trust vector of each algorithm is compared with the true trust vector (the vector that would be obtained if there were perfect observations with $\varepsilon = 0.5$ in \cref{eq:e1}, \cref{eq:e2}), and the result is deemed a success if the output trust vector matches the true trust vector. Both algorithms continue to run, each time with more rounds than the last, until they terminate with a successful output trust vector. The plot shows the added benefit of utilizing the opinion of trusted neighbors, as \emph{FindSpoofedRobots} needs to run for less rounds before successful termination at each level of observation quality. This benefit is more pronounced when there are more neighbors for each robot to utilize, as shown in the bottom plot in \cref{fig:comp_to_BM}.

\begin{figure}[t!]
    \centering
    \includegraphics[width = 0.38\textwidth]{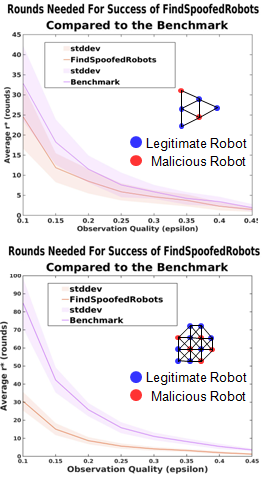}
    \caption{The average amount of rounds needed for \emph{FindSpoofedRobots} and the Baseline algorithm \cite{AURO} to terminate successfully over 1000 simulations with different levels of observation quality. A successful termination occurs when the output vector, $\mathbf{v}_i^*$, for all $i \in L$ robots are equal to the ground truth trust vectors. The sub-networks inside each plot represent the average neighborhood a legitimate robot will see during the simulations. The proportion of legitimate to malicious robots remains similar, but robots in the bottom plot enjoy a larger neighborhood to compare opinions with, demonstrating the utility of \emph{FindSpoofedRobots}.}
    \vspace{-3mm}
    \label{fig:comp_to_BM}
\end{figure}
\vspace{-3mm}

\subsection{A Study on $\tau$-Triangular Networks}

An important aspect of any network running \emph{FindSpoofedRobots} is how $\tau$-triangular the network is. A network with a minimum $\tau$ of 2, i.e., any graph with a chain in it, would require much more rounds of communication to confidently determine the true trust vectors, since the robots in the chain could not utilize information of any neighbors to fortify their opinion on any other neighbors in the chain. However, as the number of $\tau$-triangular neighbors increases, the power of trusted neighbor opinions becomes more evident in fortifying a robot's opinion. This result is reflected in \cref{fig:rstar_vs_tau} where the average number of rounds needed in simulation to converge to the true trust vectors decreases as the minimum $\tau$-triangularity of the graph increases. This idea is further conveyed with \cref{fig:delta_vs_tau_m_const} which shows an improved success rate for \emph{FindSpoofedRobots} as the minimum $\tau$ increases when the number of rounds remains constant.

\begin{figure}[h!]
    \centering
    \includegraphics[width = 0.45\textwidth]{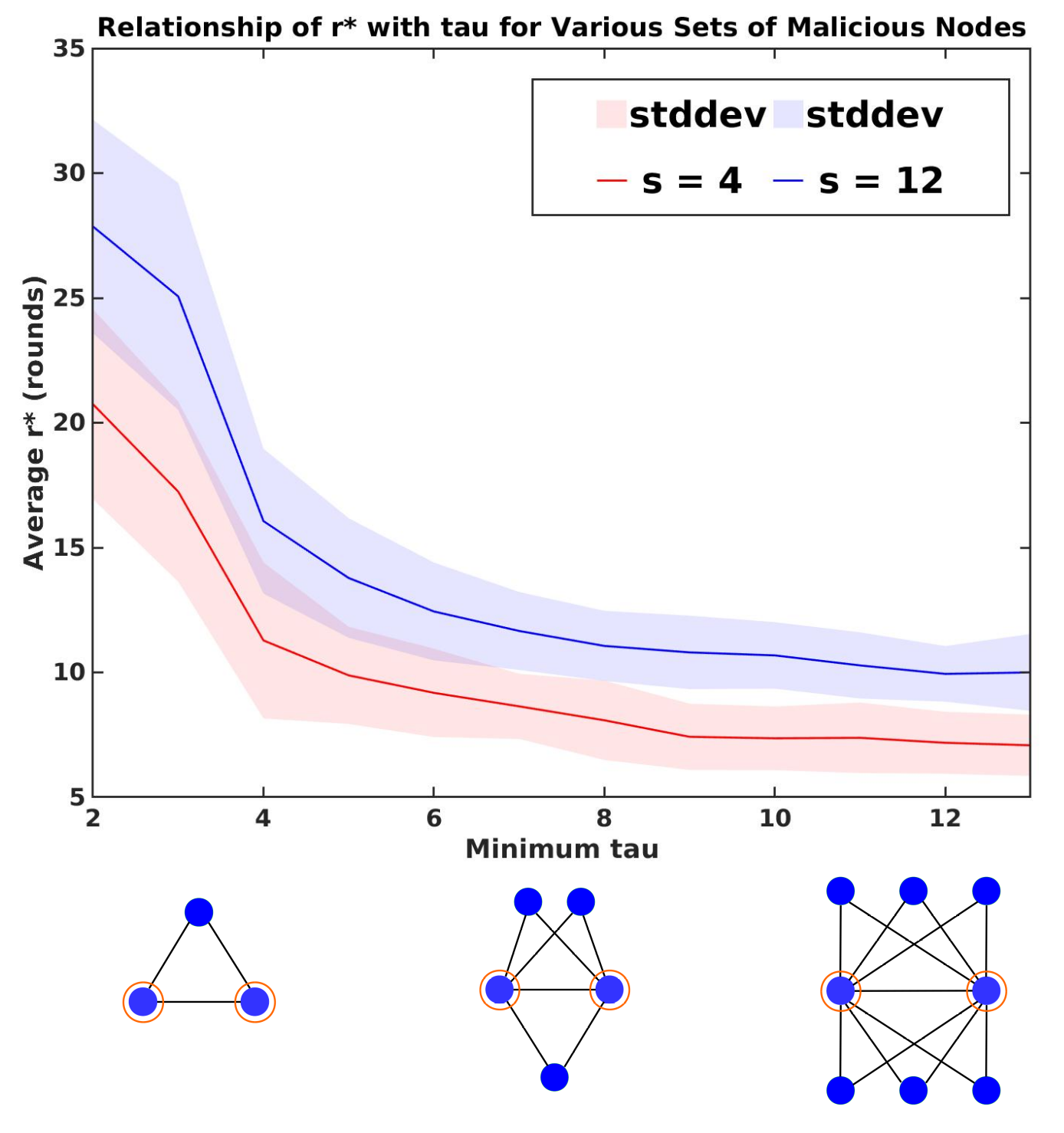}
    \caption{The relationship of r* with $\tau$ as the minimum $\tau$ in the network increases. 
    %The three sub-networks underneath the plot represent the minimum $\tau$ of increasing $\tau$-triangular networks. From left to right the blue robot pair circled in orange are the minimally $\tau$-triangular robots and have common neighbors, representing a $\tau$ of 3, $\tau = 5$, and $\tau = 8$, respectively. 
    1000 simulations were run at each level of minimum $\tau$-triangularity for a network with $s = 4$ spawning and spoofed robots, and $s = 12$ spawning and spoofed robots, with $\varepsilon = 0.1$.}
    \label{fig:rstar_vs_tau}
\end{figure}
\vspace{-4mm}
\begin{figure}[h!]
    \centering
    \includegraphics[width = 0.45\textwidth]{./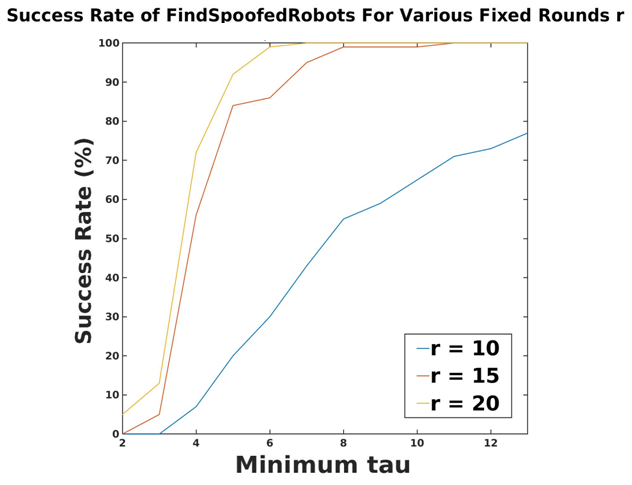}
    \caption{As the minimum $\tau$ increases the success rate of \emph{FindSpoofedRobots} increases for a fixed number of rounds. 1000 simulations were run for a fixed number of rounds $r = 10$, $r = 15$, and $r = 20$ and each minimum $\tau$ value, with $s = 8$ malicious robots and $\varepsilon = 0.2$.}
    \label{fig:delta_vs_tau_m_const}
\end{figure}

Since an increased minimum $\tau$-triangularity is desired for faster results, it is important to understand how the $\tau$-triangularity may vary in different networks. It is reasonable to assume that in a flocking scenario where the robots tend to condense into a formation, that communication triangles are likely to occur. However, \cref{fig:minTau_vs_N} shows that these communication triangles occur in random networks as well. In \cref{fig:minTau_vs_N}, robots are placed into boxes of various sizes at random locations, forming a random communication network. Inter-robot communication for this study was stochastic with communication happening at a probability of 1 if the robots were inside a desired communication radius, $||p_i - p_j|| \leq 1$, a probability of $\frac{1}{||p_i - p_j||}$ when the distance is more than 1 meter away but less than 4 meters away, and a probability of 0 otherwise. Simulations suggest that with a low number of robots placed inside the box, there are bound to be less communication triangles, but as the number of robots in the box increases, the number of triangles also increases. Furthermore, the approximately constant slopes of the curves in \cref{fig:minTau_vs_N} suggest that there is a near direct relationship between the number of robots in the box and the minimum $\tau$ of the network, if we assume that communication opportunities increase with smaller inter-robot distances, which is reasonable to assume. This means larger networks would naturally have a higher minimum $\tau$ value.

\begin{figure}[h!]
    \centering
    \includegraphics[width = 0.45\textwidth]{./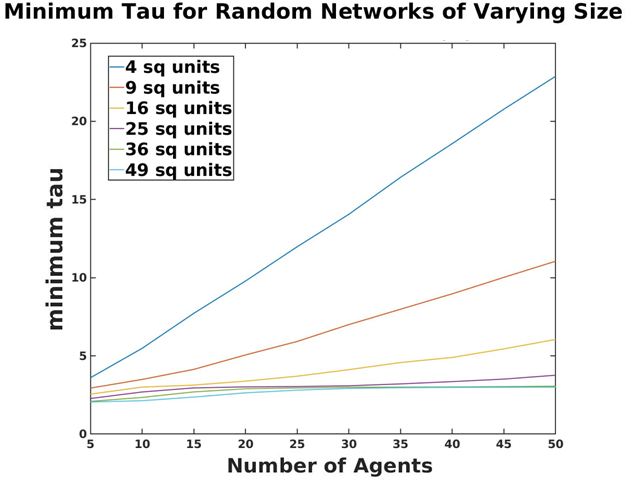}
    \caption{The robots are confined to boxes of different sizes. The relationship between the number of robots randomly placed in the box and the average minimum $\tau$-triangularity of the resultant network is observed. 1000 simulations were run for each number of robots at each box size. For this example, edges were determined using the following rule: communication occurs between robots $i$ and $j$ with probability 1 if $||p_i - p_j|| \leq 1$, with probability $1/||p_i - p_j||$ when $1 < ||p_i - p_j|| < 4$, and with probability 0 otherwise.}
    \label{fig:minTau_vs_N}
\end{figure}
\vspace{-4mm}
\subsection{Reinforcing the W-MSR Algorithm With \emph{FindSpoofedRobots}}

Simulations were run with the example robot network shown earlier in \cref{fig:with_spoofing} to demonstrate how \emph{FindSpoofedRobots} can add resilience against spoofed adversaries. Recall in \cref{sec:flocking_consensus} that robots often may want to share information with neighbors in order to reach an agreement. This agreement could be on a position, desired goal, etc. For this example, consider a problem where a team of robots must agree upon the position of a static target in order to arrange around it. All robots can determine a noisy estimation of the target position, but should cooperate to strengthen their estimations and agree on a common value. The robots can utilize theory from \cite{WMSR} to try to add some resiliency to adversaries.

In these simulations, the robots will use the W-MSR algorithm \cite{WMSR} to gain some resiliency to adversaries. The network in \cref{fig:with_spoofing} with two spoofed adversaries is perceived to be ($2,2$)-robust, so W-MSR can handle up to 1 adversary. The two spoofed adversaries is too much for W-MSR alone to handle. This is demonstrated in \cref{fig:W-MSR_1adv}.
\begin{figure}[h!]
    \centering
    \includegraphics[width = 0.45\textwidth]{./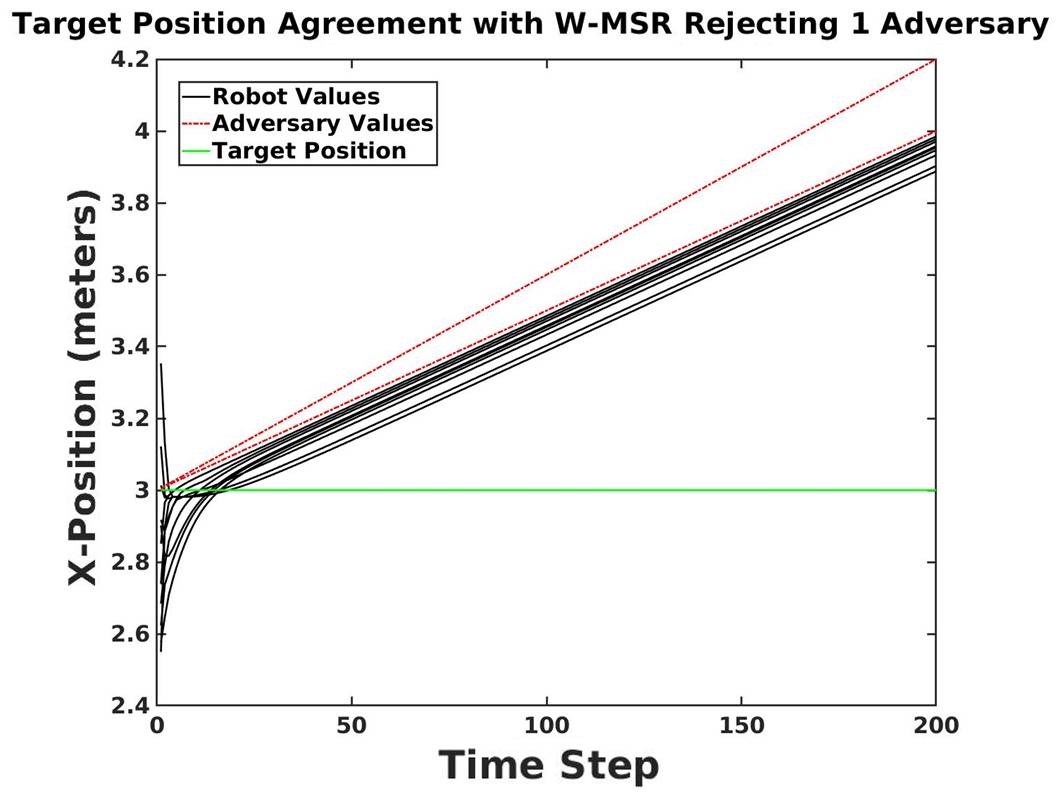}
    \caption{The flock tends to the value of the second spoofed adversary}
    \label{fig:W-MSR_1adv}
\end{figure}
In \cref{fig:W-MSR_1adv}, the robots successfully reject the influence of one adversary, but they still tend toward the value of the second adversary, which drives the flock to consensus on a target position that is increasingly farther from the true position. However, if \emph{FindSpoofedRobots} is used first to determine the trustworthy network in \cref{fig:without_spoofing}, then the robots can reject the influence of all spoofed adversaries, and achieve consensus on the target as shown in \cref{fig:no_adv}.
\begin{figure}[h!]
    \centering
    \includegraphics[width = 0.45\textwidth]{./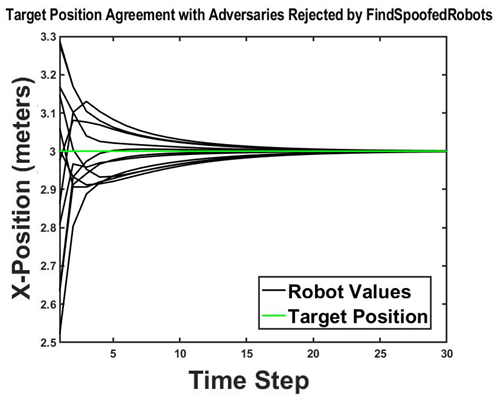}
    \caption{With adversaries ignored, the flock can accurately track the target}
    \label{fig:no_adv}
\end{figure}
\vspace{-5mm}
%In the previous two figures, the target was assumed to be stationary so the consensus only needs to be achieved once, as opposed to a rolling consensus. However, tracking a target usually implies that the target is moving. In this case, W-MSR or any consensus algorithm needs to run continuously, where the robot consensus values update via the same update rule, but also incorporate new target position estimates as the target moves. This was simulated by the robots updating their value based on a new position estimate every $\sigma$ time-steps, and using the consensus algorithm otherwise. The results are similar to the stationary case and are shown in figures \ref{fig:W-MSR_1adv_moving} and \ref{fig:no_adv_moving} with $\sigma = 5$.
%\begin{figure}[h!]
%    \centering
%    \includegraphics[width = 0.45\textwidth]{./}
%    \caption{The flock is influenced by the value of the second adversary}
%    \label{fig:W-MSR_1adv_moving}
%\end{figure}
%\begin{figure}[h!]
%    \centering
%    \includegraphics[width = 0.45\textwidth]{./}
%    \caption{With adversaries ignored, the flock can accurately track the target}
%    \label{fig:no_adv_moving}
%\end{figure}
%This time the W-MSR algorithm still rejects the influence of one adversary, while the other adversary is able to pull the consensus of the flock away from the true target position. Once all adversarial influence is rejected, the flock can still follow the target.

\subsection{Flocking with Dynamic Target Tracking Results}

\cref{fig:no_escape} tests the flock tracking with adversaries. This simulation was run with $n = 13$ robots. The gains used were $k_{ref} = 3$, $k_{avoid} = 1$, and $k_{match} = 0.2$. The ratio of $k_{ref}:k_{avoid}$ was chosen experimentally to maintain a desired spacing between converged robots. The target moves along the x-axis at 2 m/s from an initial position of ($1$, $1$, $0$) meters. The robots start at random initial positions behind the target. The target and the robots are limited to a maximum velocity of 4.5 m/s. At $t = 10$ seconds the three right-most robots are hacked. Each hacked robot then spawns one spoofed robot near the closest legitimate robot to it. The 6 malicious robots then start to slowly push the legitimate robots out of the exponential convergence range (shown as black lines bordering the target position). If \emph{FindSpoofedRobots} were not used, the malicious robots could successfully allow the target to escape, as the legitimate robots would not want to crash into the other robots. This result was shown earlier in \cref{fig:adv}, with the time of convergence loss denoted with a red `x'. \cref{fig:no_escape} depicts what happens when \emph{FindSpoofedRobots} is used and is able to find and reject the spoofed robots in time before the target escapes the Convergence Range. Once the spoofed robots are rejected, the legitimate robots can successfully evade the three hacked robots and continue to track the target, since the three hacked robots cannot occupy enough space to successfully block the other robots from advancing.

\begin{figure}[h!]
    \centering
    \includegraphics[width = 0.42\textwidth]{./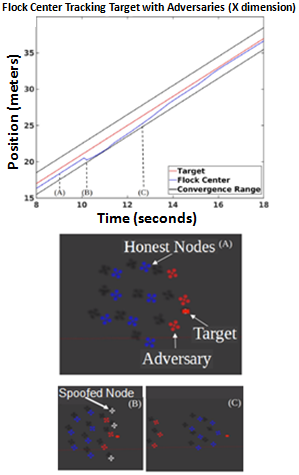}
    \caption{The flock successfully rejects spoofed robots quickly and never loses the target}
    \label{fig:no_escape}
\end{figure}
\vspace{-4mm}
\section{Conclusion} 
This paper presents novel trust algorithms that utilize neighbor's opinions to strengthen convergence to the true set of legitimate and adversarial robots. We develop a theoretical convergence time in \emph{rounds} for the method with respect to the size of the multi-robot network, the size of the threat, and the quality of information gathered. The utility of the algorithms is presented using a multi-robot flocking and dynamic target tracking example. The algorithms are also compared to several pre-existing solutions in the literature, and their relevance is discussed. Furthermore, a convergence requirement for the flock to track a target was derived. The results of the algorithms were extensively tested and simulation shows its effectiveness and importance for multi-robot network resilience to adversaries.
% if have a single appendix:
%\appendix[Proof of the Zonklar Equations]
% or
%\appendix  % for no appendix heading
% do not use \section anymore after \appendix, only \section*
% is possibly needed

% use appendices with more than one appendix
% then use \section to start each appendix
% you must declare a \section before using any
% \subsection or using \label (\appendices by itself
% starts a section numbered zero.)
%

\appendix
\subsection{Proofs}\label{sec:proofs}

\begin{proof}[Proof of \cref{lem:maj}]
Let $Y_{\rho}=\sum_{t=1}^\rho X_{j}^t(i)$. By assumption the $\{X_{j}^t(i)\}_{t\geq 1}$ are independent.
Without loss of generality assume $j$ is illegitimate (the other case is analogous).
Thus $\E{ Y_\rho } = r(1/2-\varepsilon)$. 
% {\color{gray}
% Let $x:= \frac{ r\varepsilon }{\E{ X }} =  \frac{\varepsilon}{1/2-\varepsilon} \leq 1$, for $\varepsilon \leq 1/4$. The case $\varepsilon \in (1/4,1/2]$ is analogous.
% The probability that the majority is incorrect is Chernoff bound,
% \begin{equation} \begin{split} & \Pr { X \geq \frac{r}{2} } = \Pr{ X \leq \E {X } + r\varepsilon }\\ & = \Pr { X \geq (1+x)\E{ X } } \leq \exp\left(- \frac{\E{ X }x^2}{3} \right) \\
% & = \exp\left(- \frac{ r^2 \varepsilon^2}{2 r(\frac12-\frac14) } \right)    \leq \delta 
% \end{split}
% \end{equation}}
Our goal is to use Azuma-Hoeffding (\cref{thm:Azuma}), which requires the underlying variable be a Martingale. Note that $Y_\rho$ is not a Martingale, however
$Z_\rho = Y_\rho - \sum_t^\rho \E{X_{j}^t(i)} $ is. Thus, we will apply Azuma-Hoeffding to $Z_\rho$. Note that $|Z_t-Z_{t-1}|\leq 1$.
\vspace{-2mm}
\begin{align*}
    &\Pr{\text{$i$ believes $j$ is legitimate}}= \Pr{Y_\rho  \geq \frac{r}{2}} \\
     &= \Pr{Z_\rho \geq \E{Y_\rho}+\varepsilon r} \leq \exp{\left(- \frac{2(\varepsilon r)^2}{ r }\right) }\\
     &=\exp(-2\varepsilon^2 r)= \delta.
\end{align*}
The other case, where $j$ is legitimate, is analogous.
\end{proof}

\begin{proof}[Proof of \cref{lem:lowermaj}]
Consider a legitimate robot $i$.
We set the $X_j^t(i)$  such that
for $j\in L$, $X_i^t(j)=1$ w.p. $1/2+\varepsilon$ and $0$ otherwise.
Let $\mathcal{D}^+$ denote this probability distribution.
Furthermore, for $j\in A_{spoof} \cup A_{spawn}$, $X_j(i)^t=1$ w.p. $1/2-\varepsilon$ and $0$ otherwise.
Let $\mathcal{D}^-$ denote this probability distribution.
Note that this satisfies the assumptions \cref{eq:e1} and \cref{eq:e2}.
Therefore, in order for $i$ to compute w.p. at least $1-\delta$, the correct trust vector entry for $j$ having probability distribution $\mathcal{D}^{unknown}$, $i$ needs to distinguish between the case that
$\mathcal{D}^{unknown} =  \mathcal{D}^-$ and the case
$\mathcal{D}^{unknown} = \mathcal{D}^+$  w.p. at least $1-\delta$.

Note that the total variation distance between $\mathcal{D}^+$ and 
$\mathcal{D}^-$ is
\[ TV(\mathcal{D}^+,\mathcal{D}^-)=\frac12 \cdot 2|(1/2+\varepsilon - (1/2-\varepsilon) ) )= 2\varepsilon\]

Therefore, by \cref{thm:ICALPlower}, this requires $c \log (1/\delta)/\epsilon^2$ rounds, for a suitable constant $c$. Note that in our specific setting, the domain is indeed discrete. 
Let $\delta=1/n$.
Hence, the probability that all robots have the correct final trust vector is, by using the independence,   $(1-\delta)^n = (1-1/n)^n \leq 1/e $.
Hence more than $\log(n)/\epsilon^2$ rounds are necessary.
\end{proof}

\begin{proof}[Proof of \cref{thm:one}]
Consider a legitimate robot $i$.
% {\color{teal}
% Let $\delta'_{k,k'} = \frac{\tau_{k,k'} l_{k,k'}}{e 20  (m_{k,k'}+l_{k,k'}) l |\neighbor_k|} $.
% %and $\delta_i = 2\left( \delta' \right)^{\frac{4}{\tau_i l_i}}$.
% Let $i^* = \arg\min_{k,k' \in L} \delta'_{k,k'}$. Therefore, for all we have $k,k'\in L$ $\mindelta' \leq \delta'_{k,k'}$.
% Let $\mindelta = \max_{k,k'} 2\left( \mindelta' \right)^{\frac{8}{\tau_{k,k'} l_{k,k'}}}$.
% }

Let $\delta'=\frac{\delta \varepsilon^2 \tau}{ 2 l n}$ and $\mindelta = \frac{  {\delta'}^{\frac{\magic}{\tau}}}{2}  \left(\frac{\tau}{2e^e d_{L}}\right)^2$.
By our assumption $r^*\geq   \log(1/\mindelta)/(2\varepsilon^2)$ and hence, by \cref{lem:maj}, each interim trust vector entry is correct w.p. at least $1-\mindelta$. Note that, it is unlikely that all entries are correct, though.
%
% Thus for $r \geq r^*=$, we have,
% that at least one legitimate robot has an incorrect trust vector is,
% by union bound,
% bounded by $\sum_{i \in L} \delta_i^c \leq 1/l^3$.
%
% It also easy to see that from here on all $r$, the trust vectors will be correct w.h.p..
%
%Say that it stays low.
%
% 2\left(  \frac{\tau_i l}{3 \sqrt[\ell]{20}  (m+l)} \right)^{\frac{4}{\tau}}$
%
%
Recall that $v_{i,k}$ is the indicator variable that is $1$ if $i$ trusts robot $k$.
%Let $Y_{u,v}$ be the number of robots reporting that 1) $u$ believes and 2) that claim that $v$ is legitimate (in round $r+1$).
%First assume that $v$ is an honest robot.
%Then 
%\[  \sum_{w\in H} (1-v_{u,w}) \geq   a_{hid} +  \sum_{w \in M} v_{u,w}. \]
In order for $v_{i,j}^*$ to be set correctly, a sufficient condition is that
\begin{align}\label{eq:thathere}  
\sum_{k\in \neighbor_i \cap \neighbor_j\cap L} \hspace{-0.3cm} v_{i,k}    v_{k,j} 
> \hspace{-0.7cm}
\sum_{ k \in \neighbor_i \cap \neighbor_j\cap A_{hid}} \hspace{-0.5cm}1 
+ \sum_{ \hspace{0.3cm} k \in \neighbor_i \cap \neighbor_j\cap \left(A_{spawn} \cup A_{spoof}\right)}  \hspace{-1cm} v_{i,k}  v_{k,j} .
\end{align}
In the following we will show \cref{eq:thathere}
holds with probability at least $p:= 1-2\delta'$. %1-2\delta' $.
Recall, $l_{i,j} = |\neighbor_i \cap
\neighbor_j \cap L|$, 
$s_{i,j} = |\neighbor_i \cap \neighbor_j\cap \left(A_{spawn} \cup A_{spoof}\right)|$
and $h_{i,j}=|\neighbor_i \cap
\neighbor_j \cap A_{hid}|$.

By the above, the probability that either of the values $v_{i,k}$ or $v_{k,j}$ is set incorrectly is at most $2\mindelta$, by union bound.
Let $\mathcal{X}\sim \operatorname{BIN}(l_{i,j}, 1-2\mindelta )$ be drawn from the binomial distribution. Note that
$\sum_{k\in \neighbor_i \cap \neighbor_j\cap L} v_{i,k}  v_{k,j} $ majorizes $\mathcal{X}$. 
Let $\mathcal{Y}\sim \operatorname{BIN}(m_{i,j}, \mindelta)$. 
Note that
$\mathcal{Y}$ majorizes $ \sum_{k \in \neighbor_i \cap \neighbor_j\cap \left(A_{spawn} \cup A_{spoof}\right)} v_{i,k}  v_{k,j}$  and hence it is sufficient to prove that with probability $p$
\[ \mathcal{X} >   h_{i,j} + \mathcal{Y}, \]
since this implies \cref{eq:thathere}.
Let $\mathcal{X}'\sim \operatorname{BIN}(l_{i,j},2\mindelta)$.
We have $\E{\mathcal{X} } = l_{i,j} (1-2\mindelta)$, $\E{ \mathcal{X}' } =2\mindelta l_{i,j}$ and
$\E{\mathcal{Y} }  = \mindelta m_{i,j}$. 
Let $\alpha = \tau/(2l_{i,j}) \geq  \tau/(2d_{L}) $.
Observe that 
$2\mindelta\leq  1/\left(\alpha/e^e \right)^2 $.

Therefore,
$1/(2\mindelta) \geq \left(e^e/\alpha \right)^2 $.
Hence,
$1/(\sqrt{2\mindelta)} \geq e^e/\alpha $
and thus
$\log(1/(\sqrt{2\mindelta)}) \geq (\log(1/\alpha) + e) $ and therefore
$\log(1/(2\mindelta)) \geq 2(\log(1/\alpha) + e) $.
By, \cref{thm:bounds-binomial-distribution},  we get
\begin{equation} \begin{split} & \Pr {\mathcal{X} \geq l_{i,j} -\tau/2) } 
= \Pr{ \mathcal{X}' \leq \tau/2 }= \Pr{ \mathcal{X}' \leq \alpha l_{i,j} }\\ 
& < \exp\left(-  \alpha l_{i,j} \log\left(\frac{1}{2\mindelta}\right) 
    + \alpha l_{i,j} \log\left(\frac{1}{\alpha}\right) 
    + e\alpha l_{i,j} \right) 
\\ &
\leq \exp\left(-  \frac{\tau \log(1/(2\mindelta))}{\magic}  \right)
\leq \exp\left(-  \frac{\tau \log(1/{\delta'}^{(\magic/\tau)})}{\magic}  \right)\\
&= \delta'.
\end{split} \end{equation}
Let $\alpha' = \frac{\tau}{2 m_{i,j}}$.
Similarly, as before, $\log(1/(2\mindelta)) \geq 2(\log(1/\alpha') + e) $.
By, \cref{thm:bounds-binomial-distribution},
\begin{equation} \begin{split} 
&\Pr {  \mathcal{Y} \leq \tau/2} =  \Pr {  \mathcal{Y} \leq \alpha'm_{i,j}}  \\
& < \exp\left(-  \alpha' m_{i,j} \log\left(\frac{1}{\mindelta}\right) + \alpha' m_{i,j} \log\left(\frac{1}{\alpha'}\right)  + e\alpha'  m_{i,j} \right) 
\\ &
\leq \exp\left(-  \frac{\tau \log(1/\mindelta)}{\magic}  \right)
\leq \exp\left(-  \frac{\tau \log(1/{\delta'_i}^{(\magic/\tau)})}{\magic}  \right)\\
&= \delta'.
\end{split} \end{equation}
By assumption, 
$\tau >0 $
Thus, by Union bound, with probability at least $ p$,
\[
 \mathcal{X}> l_{i,j} -\tau/2  \geq h_{i,j} + \tau/2 >  h_{i,j} +  \mathcal{Y},
\]
where we used that $l_{i,j} - h_{i,j} \geq \tau$, by definition of $\tau$.

We have shown that for any pair of legitimate robots $i$ and any neighboring robot $j$, $i$th final trust vector entry for $j$ after round $r^*$ is correct w.p. $1-2\mindelta$. 
Taking union bound over all of all $i\in L$ and all robots $j$ yields that \emph{every} legitimate robot has a correct final trust vector w.p. at least 
$1-2ln\mindelta$.

It remains to argue that the results remains true for all $r > r^*$. 
%To see this, consider $\zeta[t]=2^t\zeta $.
%
In the above, we have argued that the failure probability is 
$\delta^* := 2ln\delta'$ after $r^*=  \log( 1/\delta') f + g$ rounds, where $f$ and $g$ are appropriate functions that do not depend on $\delta'$. 
From this, we can derive that in order to reduce the error to $\delta^*/e^t$, we need
$\log(e^t/\delta') f + g=(\log(1/\delta')+\log(e^t))f +g=
r^* + t\cdot f
$ rounds. 
Thus, at round $r=r^*+ kf$ we
get an error probability of $\delta^*/e^k$.
This allows us to consider intervals of length $2/\epsilon^2$
% Substituting, we get that at round $r^*+t$ the error is $\delta/e^{\epsilon^2 t/2}=\frac{\delta}{\left(e^{\epsilon^2/2}\right)^t}$.
%
Taking union bound over all rounds $r \geq r^*$ we get an error probability of 
\[ \sum_{k}  \frac{\delta^*}{e^k} f= 
f \delta^*
\sum_{k} \frac{1}{e^k} \leq 2f ln\delta' = \delta
. \]

\end{proof}

\vspace{-2mm}
\subsection{Auxiliary Claims}

The following is a slight modification  of Thm. 5.2 in \cite{dubhashi2009concentration}.
\begin{theorem}[Azuma-Hoeffding inequality - general version~\cite{dubhashi2009concentration}] \label{pro:hoeff}\label{thm:Azuma}	
	Let $Y_0, Y_1,\dots$ be a martingale with respect ot the sequence $X_0,X_1,\dots$. Suppose also that $Y_i$ satisfies $a_i \leq Y_i -Y_{i-1}\leq b_i$ for all $i$.
	Then, for any $t$ and $n$
	\[
		\Pr{Y_n - Y_0 \geq t },
		\Pr{Y_0 - Y_n \geq t } \leq  \exp\left(\frac{-2t^2}{\sum_{i=1}^n (b_i -
		a_i)^2}\right).
		\]
\end{theorem}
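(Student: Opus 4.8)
The plan is to run the classical exponential-moment (Chernoff) argument, replacing the independence that underlies the i.i.d.\ Hoeffding bound by iterated conditioning along the martingale filtration. Write $D_i := Y_i - Y_{i-1}$ for the martingale differences, so that $\E{D_i \mid X_0,\dots,X_{i-1}} = 0$, that $a_i \leq D_i \leq b_i$, and that $Y_n - Y_0 = \sum_{i=1}^n D_i$. First I would fix $\lambda > 0$ and apply Markov's inequality to $e^{\lambda(Y_n - Y_0)}$, obtaining $\Pr{Y_n - Y_0 \geq t} \leq e^{-\lambda t}\,\E{e^{\lambda(Y_n - Y_0)}}$.

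Next I would bound the moment generating function by peeling off one difference at a time via the tower property: conditioning on $X_0,\dots,X_{n-1}$ gives $\E{e^{\lambda(Y_n - Y_0)}} = \E{e^{\lambda \sum_{i=1}^{n-1} D_i}\,\E{e^{\lambda D_n}\mid X_0,\dots,X_{n-1}}}$. The heart of the proof is the conditional Hoeffding lemma: if $Z$ is (conditionally) mean-zero and supported in an interval of length $\ell$, then $\E{e^{\lambda Z}}\leq e^{\lambda^2\ell^2/8}$. I would establish this by convexity of $x\mapsto e^{\lambda x}$ on $[a,b]$, which bounds $e^{\lambda Z}$ above by the secant line $\frac{b-Z}{b-a}e^{\lambda a}+\frac{Z-a}{b-a}e^{\lambda b}$; taking conditional expectations and using $\E{Z}=0$ reduces the bound to $\psi(u):=\log\!\big(\tfrac{b}{b-a}e^{\lambda a}-\tfrac{a}{b-a}e^{\lambda b}\big)$ regarded as a function of $u:=\lambda(b-a)$, and one checks $\psi(0)=\psi'(0)=0$ and $\psi''(u)\leq 1/4$, so Taylor's theorem yields $\psi(u)\leq u^2/8 = \lambda^2(b-a)^2/8$. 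Applying this conditionally with $Z=D_n$, $\ell=b_n-a_n$ gives $\E{e^{\lambda D_n}\mid X_0,\dots,X_{n-1}}\leq e^{\lambda^2(b_n-a_n)^2/8}$, and iterating over $i=n,n-1,\dots,1$ yields $\E{e^{\lambda(Y_n-Y_0)}}\leq \exp\!\big(\tfrac{\lambda^2}{8}\sum_{i=1}^n(b_i-a_i)^2\big)$.

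Combining the two bounds gives $\Pr{Y_n - Y_0 \geq t}\leq \exp\!\big(-\lambda t + \tfrac{\lambda^2}{8}S\big)$ for every $\lambda>0$, where $S:=\sum_{i=1}^n(b_i-a_i)^2$; optimizing over $\lambda$ at $\lambda=4t/S$ produces the claimed bound $\exp(-2t^2/S)$. The reverse tail $\Pr{Y_0 - Y_n \geq t}$ follows by applying the identical argument to the martingale $(-Y_i)_i$, whose differences lie in $[-b_i,-a_i]$, an interval of the same length $b_i-a_i$, so the same bound results. I expect the only genuinely delicate step to be the elementary calculus inside the conditional Hoeffding lemma — in particular the substitution $p:=-a/(b-a)$ (which lies in $[0,1]$ because mean-zero forces $a\leq 0\leq b$), after which $\psi(u)=-pu+\log(1-p+pe^u)$ and $\psi''(u)=q(1-q)\leq 1/4$ with $q:=pe^u/(1-p+pe^u)$; everything else is routine bookkeeping with conditional expectations.
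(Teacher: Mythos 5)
The paper does not actually prove this statement: it is imported, with a citation, as ``a slight modification of Thm.\ 5.2'' of \cite{dubhashi2009concentration}, so there is no in-paper proof to compare against. Your argument is the standard and correct exponential-moment proof of Azuma--Hoeffding (Markov's inequality applied to $e^{\lambda(Y_n-Y_0)}$, the conditional Hoeffding lemma via the secant bound and $\psi''\leq 1/4$, iteration through the tower property, and optimization at $\lambda=4t/S$), including the correct reduction of the lower tail to the martingale $(-Y_i)_i$; this is essentially the textbook proof the cited reference gives.
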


\begin{definition}[\cite{ICALPlower}]\label{def:ICALPlower}
Given a target distribution $q$ with
domain $D$ of size $N$, parameters $0 < \epsilon, \delta < 1$, and sample access to an unknown distribution
$p$ over the same domain, we want to distinguish with probability at least $1-\delta$ between the
following cases:
 $p = q$ and $TV (p, q)\geq \epsilon$, where $TV(\cdot,\cdot)$ denotes the total variation distance.
We call this the problem of $(\epsilon,\delta$)-testing identity to $q$. 
\end{definition}
\begin{theorem}[\cite{ICALPlower}]\label{thm:ICALPlower}
There exists a computationally efficient $(\epsilon,\delta)$-identity tester
for discrete distributions of support size $n$ with sample complexity
$\Theta\left(\frac{1}{\epsilon}\left( 
\sqrt{n\log(1/\delta)} + \log(1/\delta)
\right)\right)$
Moreover, this sample size is information-theoretically optimal, up to a constant factor, for
all $n,\epsilon,\delta$.
\end{theorem}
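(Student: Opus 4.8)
This statement is the sample-complexity characterization of $(\epsilon,\delta)$-identity testing taken from \cite{ICALPlower}, so the plan is to reconstruct both an algorithm meeting the bound and a matching information-theoretic lower bound. For the \textbf{upper bound} I would first Poissonize: draw $\mathrm{Poisson}(m)$ samples from $p$, so that the bin counts $X_1,\dots,X_n$ become mutually independent with $X_k\sim\mathrm{Poisson}(mp_k)$. The tester is a thresholded weighted chi-squared statistic $Z=\sum_{k=1}^{n}\big((X_k-mq_k)^2-X_k\big)/(mq_k)$, which is an unbiased estimator of $m\sum_k(p_k-q_k)^2/q_k$; by Cauchy--Schwarz this quantity equals $0$ when $p=q$ and is $\Omega\big(m\,TV(p,q)^2\big)$ when $TV(p,q)\ge\epsilon$, so accepting exactly when $Z$ falls below a threshold placed halfway between the two regimes is the natural rule. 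Correctness then reduces to showing that, when $m$ equals the claimed budget $m^\star=\Theta\big(\tfrac1\epsilon(\sqrt{n\log(1/\delta)}+\log(1/\delta))\big)$, the statistic $Z$ deviates from its mean by less than the gap between the two regimes except with probability at most $\delta$.

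For the \textbf{lower bound} I would treat the two additive terms separately. The term scaling with $\log(1/\delta)$ and no $n$ already appears at support size $n=2$: distinguishing a $\tfrac12$-biased coin from a $(\tfrac12+\epsilon)$-biased coin with error at most $\delta$ needs the two $m$-fold product laws to be separable with error $\delta$, and since their $\mathrm{KL}$ divergence is $\Theta(m\epsilon^2)$, the Bretagnolle--Huber inequality (which lower bounds any test's error by a quantity decreasing in the divergence) forces $m=\Omega(\log(1/\delta)/\epsilon^2)$. For the term scaling with $\sqrt{n\log(1/\delta)}$ I would use a Paninski-style hard family: take $q$ uniform on $[n]$, pair up its coordinates, and form each alternative by independently shifting each pair by $\pm$ a quantity of order $\epsilon/n$, so every member is $\Theta(\epsilon)$-far from $q$ in total variation. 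One then bounds the Hellinger affinity (equivalently the $\chi^2$-divergence) between the Poissonized null law and the uniform mixture over the alternatives; it behaves like $\exp(-\Theta(m^2\epsilon^4/n))$, and requiring this to be no smaller than roughly $\delta$ --- the level that, via Le Cam, is needed to keep every test's error above $\delta$ --- yields $m=\Omega(\sqrt{n\log(1/\delta)}/\epsilon^2)$. Taking the maximum of the two bounds gives optimality.

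The step I expect to be the \textbf{main obstacle} is the high-probability tail analysis in the upper bound. One cannot simply apply Bernstein's inequality to $Z$: the summands $\big((X_k-mq_k)^2-X_k\big)/(mq_k)$ are only sub-Poissonian, so bins with tiny $q_k$ contribute terms with heavy (essentially $e^{-\sqrt t}$) upper tails, and the crude fix of running a constant-error tester $\Theta(\log(1/\delta))$ times and taking a majority wastes a factor, producing $\sqrt n\log(1/\delta)/\epsilon^2$ rather than $\sqrt{n\log(1/\delta)}/\epsilon^2$. The route I would take is to bucket the domain into $O(\log n)$ groups on which $q_k$ is constant up to a factor of two (so the Poisson rates are comparable within a bucket), truncate each bin count at a level $\Theta(\log(1/\delta))$ so the statistic becomes effectively bounded while the truncation bias, summed over all $n$ bins via Poisson tail estimates, stays well below the signal gap, then apply a Bernstein-type bound inside each bucket and union-bound over the $O(\log n)$ buckets. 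Choosing the truncation level and balancing the per-bucket variance against the sample budget is exactly what produces the two-term expression $m^\star$; on the lower-bound side, the analogous difficulty is sharpening the classical Paninski/Le Cam divergence estimate so that it controls the likelihood ratio at confidence level $\delta$ rather than only at constant confidence, which is itself one of the contributions of \cite{ICALPlower}.
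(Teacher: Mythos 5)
The paper does not prove this statement: \cref{thm:ICALPlower} is imported verbatim from \cite{ICALPlower} and used as a black box in the proof of \cref{lem:lowermaj}, so there is no in-paper argument to compare your proposal against. Taken on its own terms, your sketch follows the standard route to this result and matches the cited reference in spirit: a Poissonized chi-squared statistic for the upper bound, whose expectation separates the two regimes via Cauchy--Schwarz, together with a two-point (biased-coin) argument for the $\log(1/\delta)/\epsilon^2$ term and a Paninski-style mixture with a Hellinger/$\chi^2$ computation for the $\sqrt{n\log(1/\delta)}/\epsilon^2$ term. You also correctly identify the crux --- obtaining $\sqrt{n\log(1/\delta)}$ rather than $\sqrt{n}\,\log(1/\delta)$, which naive majority amplification cannot deliver --- and your truncation-plus-Bernstein plan is a viable way to tame the heavy-tailed summands, though the balancing of truncation bias against the signal gap is precisely where the cited paper's technical work lies and your sketch leaves it open. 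One discrepancy worth flagging: all of your bounds scale as $1/\epsilon^2$, which is the correct dependence and the one this paper actually invokes in the proof of \cref{lem:lowermaj} (where it deduces $c\log(1/\delta)/\epsilon^2$ observations), whereas the statement as transcribed reads $\frac{1}{\epsilon}\left(\sqrt{n\log(1/\delta)}+\log(1/\delta)\right)$; the $1/\epsilon$ prefactor appears to be a transcription slip, and your version is the one consistent with both the source and its use here.
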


\begin{theorem}\cite[Equation 10]{HR90}\label{thm:bounds-binomial-distribution}
Let $Y=\sum_{i=1}^m Y_i$ be the sum of $s$ independent and identically distributed random variables with
$\Pr { Y_i=1} =p$ and $\Pr { Y_i=0 } =1-p$. We have for any
$\alpha \in (0,0.8]$ that
\begin{multline}
\Pr {Y \geq \alpha \cdot m } \\ \leq \exp\left(  -\alpha m  \log (1/p) + \alpha m  \log (1/\alpha)   + e\alpha(1-\alpha) m  \right).
\end{multline}
\end{theorem}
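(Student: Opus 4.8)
The plan is to obtain this as the exact (relative-entropy) Chernoff bound for a binomial and then to absorb the leftover term into the $e\alpha(1-\alpha)m$ slack with one elementary one-variable inequality. Here $Y\sim\operatorname{BIN}(m,p)$, so $\E{Y}=mp$. The case $\alpha<p$ is vacuous: the exponent on the right-hand side then equals $\alpha m\bigl(\log(p/\alpha)+e(1-\alpha)\bigr)$, which is strictly positive, so the claimed bound exceeds $1$. Hence I would assume $\alpha\ge p$ for the rest.

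First, run the Markov step on the moment generating function: for every $\lambda\ge0$,
\[\Pr{Y\ge\alpha m}\le e^{-\lambda\alpha m}\,\E{e^{\lambda Y}}=e^{-\lambda\alpha m}\bigl(pe^{\lambda}+1-p\bigr)^{m}.\]
Minimizing the exponent over $\lambda$ gives $e^{\lambda^{\star}}=\tfrac{\alpha(1-p)}{p(1-\alpha)}$, which is $\ge1$ precisely because $\alpha\ge p$, so $\lambda^{\star}\ge0$ is admissible; plugging it in collapses the bound to
\[\Pr{Y\ge\alpha m}\le\exp\!\bigl(-m\,D(\alpha\,\|\,p)\bigr),\qquad D(\alpha\|p)=\alpha\log\tfrac{\alpha}{p}+(1-\alpha)\log\tfrac{1-\alpha}{1-p}.\]

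Next I would expand $-mD(\alpha\|p)$. Using $\log\tfrac{\alpha}{p}=\log\alpha-\log p$, the ``$\alpha$-part'' reproduces $-\alpha m\log(1/p)+\alpha m\log(1/\alpha)$ verbatim, so the theorem reduces to $-(1-\alpha)m\log\tfrac{1-\alpha}{1-p}\le e\alpha(1-\alpha)m$; dividing by $m(1-\alpha)>0$ (legitimate since $\alpha\le0.8<1$) this is $\log\tfrac{1-p}{1-\alpha}\le e\alpha$. Since $1-p\le1$ it suffices to show $-\log(1-\alpha)\le e\alpha$ on $(0,0.8]$, which I would verify by a one-line monotonicity argument on $g(\alpha):=e\alpha+\log(1-\alpha)$: one has $g(0)=0$, and $g'(\alpha)=e-(1-\alpha)^{-1}$ is positive then negative, so the minimum of $g$ over $[0,0.8]$ is attained at an endpoint, where $g(0)=0$ and $g(0.8)=0.8e+\log0.2>0$.

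The only genuine subtlety---and where I would spend the most care---is the range $\alpha\le0.8$. The relaxed moment-generating-function bound obtained from $1+x\le e^{x}$, namely $-mD(\alpha\|p)\le-\alpha m\log(1/p)+\alpha m\log(1/\alpha)+m(\alpha-p)$, only yields the claimed estimate when $\alpha-p\le e\alpha(1-\alpha)$, i.e.\ (taking $p$ near $0$) when $\alpha\le1-1/e\approx0.632$; to cover $\alpha$ up to $0.8$ one must keep the \emph{exact} relative-entropy bound and then use $-\log(1-\alpha)\le e\alpha$, an inequality that remains valid up to roughly $\alpha\approx0.9$ and therefore comfortably handles the stated interval.
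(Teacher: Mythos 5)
Your proof is correct and follows essentially the same route as the paper's: both start from the exact relative-entropy Chernoff bound $\bigl((p/\alpha)^{\alpha}((1-p)/(1-\alpha))^{1-\alpha}\bigr)^{m}$ (which you derive from the moment generating function rather than citing \cite[Equation 10]{HR90}), drop $1-p\le 1$, and absorb the remaining term via $-\log(1-\alpha)\le e\alpha$ on $(0,0.8]$. Your additional care with the vacuous case $\alpha<p$ and the explicit monotonicity check of $e\alpha+\log(1-\alpha)$ are sound refinements of steps the paper leaves implicit, but they do not change the argument.
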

\begin{proof}
We have, by {\cite[Equation 10]{HR90}} and simple calculations,
\begin{equation} 
\begin{split}%
& \Pr{ Y \geq \alpha \cdot m } \\ &\leq \left(\left(\frac{p}{\alpha} \right)^\alpha \left(\frac{1-p}{1-\alpha} \right)^{1-\alpha}\right)^m \leq  
 \left(\left(\frac{p}{\alpha} \right)^\alpha \left(\frac{1}{1-\alpha} \right)^{1-\alpha}\right)^m \\
 & \leq
 \exp\left(  
 \alpha m  \log \left(\frac{p}{\alpha}\right)  + (1-\alpha) m \log \left(\frac{1}{1-\alpha}\right) \right)  \\
 &=  \exp \left(-\alpha m  \log \left(\frac{1}{p}\right) + \alpha m  \log \left(\frac{1}{\alpha}\right) %\\ &\qquad \quad
 + (1-\alpha) m \log\left(\frac{1}{1-\alpha} \right) \right) \\
 & < \exp\left(  -\alpha m  \log \left(\frac{1}{p}\right) + \alpha m  \log \left(\frac{1}{\alpha}\right)   + e\alpha(1-\alpha) m  \right),
 \end{split} \end{equation}
 where we used that $\log(1/(1-x))\leq ex $ for $x\in (0,0.8]$.
\end{proof}

\begin{table*}[h!]
\centering
\caption{Notation}
\begin{tabular}{|c|c|c|c|} \hline
     $X$ & Observations over Wireless Channels & $i,j,k$ & Indices \\ \hline
     $t$ & Time & $P$ & Set of Positions \\ \hline
     $p_i[t]$ & Position of Robot $i$ at time t & $\text{VEL}$ & Set of Velocities \\ \hline
     $\text{vel}_i[t]$ & Velocity of Robot $i$ at time t & $n$ & Number of Robots \\ \hline 
     $\text{vel}_\text{max}$ & Maximum Robot Velocity & $\neighbor_i$ & Neighborhood of Robot $i$ \\ \hline 
     $L$ & Set of Legitimate Robots & $M$ & Set of Malicious Robots  \\ \hline
     $l$ & Cardinality of $L$ & $m$ & Cardinality of $M$  \\ \hline
     $A_{spawn}$ & Set of Spawning Adversaries & $A_{spoof}$ & Set of Spoofed Robots \\ \hline 
     $S$ & Set of Detectable Adversaries ($A_{spawn} \cup A_{spoof}$) & $A_{hid}$ & Set of Hidden Adversaries \\ \hline 
     $s$ & Cardinality of $S$ & $h$ & Cardinality of $A_{hid}$ \\ \hline
     $\tau$ & Gap Between Legitimate and Hidden Adversaries  & $\varepsilon$ & Quality of Observation \cref{eq:e1} \\ \hline 
     $\mathbf{v}$, $\mathbf{v}^*$ & Trust Vector (interim, final) & $v$, $v^*$ & Entries in (interim, final) Trust Vector \\ \hline
     $r$ & Round & $r^*$ & Number of Rounds to Reach Consensus \\ \hline
     $\delta$ & Failure Probability & $x$ & Consensus Value \\ \hline
     $W$ & Weight Matrix & $B$ & Input Matrix \\ \hline
     $u$ & Input & $e$ & Canonical Vector and Euler Constant \\ \hline $C$ & Output Matrix & $y$ & Output Consensus Value \\ \hline $\mathcal{O}$ & Observability Matrix & $\text{cen}$ & Target (center) \\ \hline 
     $k_{ref},k_{avoid},k_{match}$,$c$ & Constants & $\Delta$ & Minimum Time for Convergence Loss \\ \hline 
     $\mathbf{V}^*$ & Matrix of Trust Vectors & $\mathcal{G}$ & Graph Adjacency Matrix \\ \hline
     $\mathcal{L}$ & Observability Column Rank Parameter & $\mathcal{P}$ & Paths from $j$ to $i$ \\ \hline
\end{tabular}
\label{tab:notation}
\end{table*}

% use section* for acknowledgment
\section*{Acknowledgment}
The authors gratefully acknowledge partial funding support from the MIT Lincoln Laboratories Line Project.

%\section*{Playground}
%1)make sure we have more reliable neighbors than others left
%2)amplify by using triangles
%Potential problem: more bad triangles than good triangles
%Let's say we have at least tau good triangles for each pair though. Then all we need to do is to remove enough triangles to have less than (1-eps)tau bad triangles at the end. 
%Three types of triangles: good, detectable and undetectable
%Extrapolating from previous result: we need to get every pair of robots right.  
%Before we took union bound over all robots. Probably the same now.
%the probability of failure will drop from exponentially small to size of honest robots in neighborhoods 

% Can use something like this to put references on a page
% by themselves when using endfloat and the captionsoff option.
\ifCLASSOPTIONcaptionsoff
  \newpage
\fi

% trigger a \newpage just before the given reference
% number - used to balance the columns on the last page
% adjust value as needed - may need to be readjusted if
% the document is modified later
%\IEEEtriggeratref{8}
% The "triggered" command can be changed if desired:
%\IEEEtriggercmd{\enlargethispage{-5in}}

% references section

% can use a bibliography generated by BibTeX as a .bbl file
% BibTeX documentation can be easily obtained at:
% http://mirror.ctan.org/biblio/bibtex/contrib/doc/
% The IEEEtran BibTeX style support page is at:
% http://www.michaelshell.org/tex/ieeetran/bibtex/
%\bibliographystyle{IEEEtran}
% argument is your BibTeX string definitions and bibliography database(s)
%\bibliography{IEEEabrv,../bib/paper}
%
% <OR> manually copy in the resultant .bbl file
% set second argument of \begin to the number of references
% (used to reserve space for the reference number labels box)
\bibliographystyle{IEEEtran}
\bibliography{references}

% biography section
% 
% If you have an EPS/PDF photo (graphicx package needed) extra braces are
% needed around the contents of the optional argument to biography to prevent
% the LaTeX parser from getting confused when it sees the complicated
% \includegraphics command within an optional argument. (You could create
% your own custom macro containing the \includegraphics command to make things
% simpler here.)
\begin{IEEEbiography}[{\includegraphics[width=1in,height=1.25in,clip,keepaspectratio]{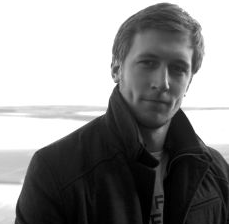}}]{Frederik Mallmann-Trenn}
% or if you just want to reserve a space for a photo:
%\begin{IEEEbiography}{Michael Shell}
is an assistant professor (UK lecturer) working on stochastic processes. He received his PhD from \'Ecole normale su\'erieure where he focused on stochastic processes in the context of distributed computing.
\end{IEEEbiography}
\vspace{-0.1in}
\begin{IEEEbiography}[{\includegraphics[width=1in,height=1.25in,clip,keepaspectratio]{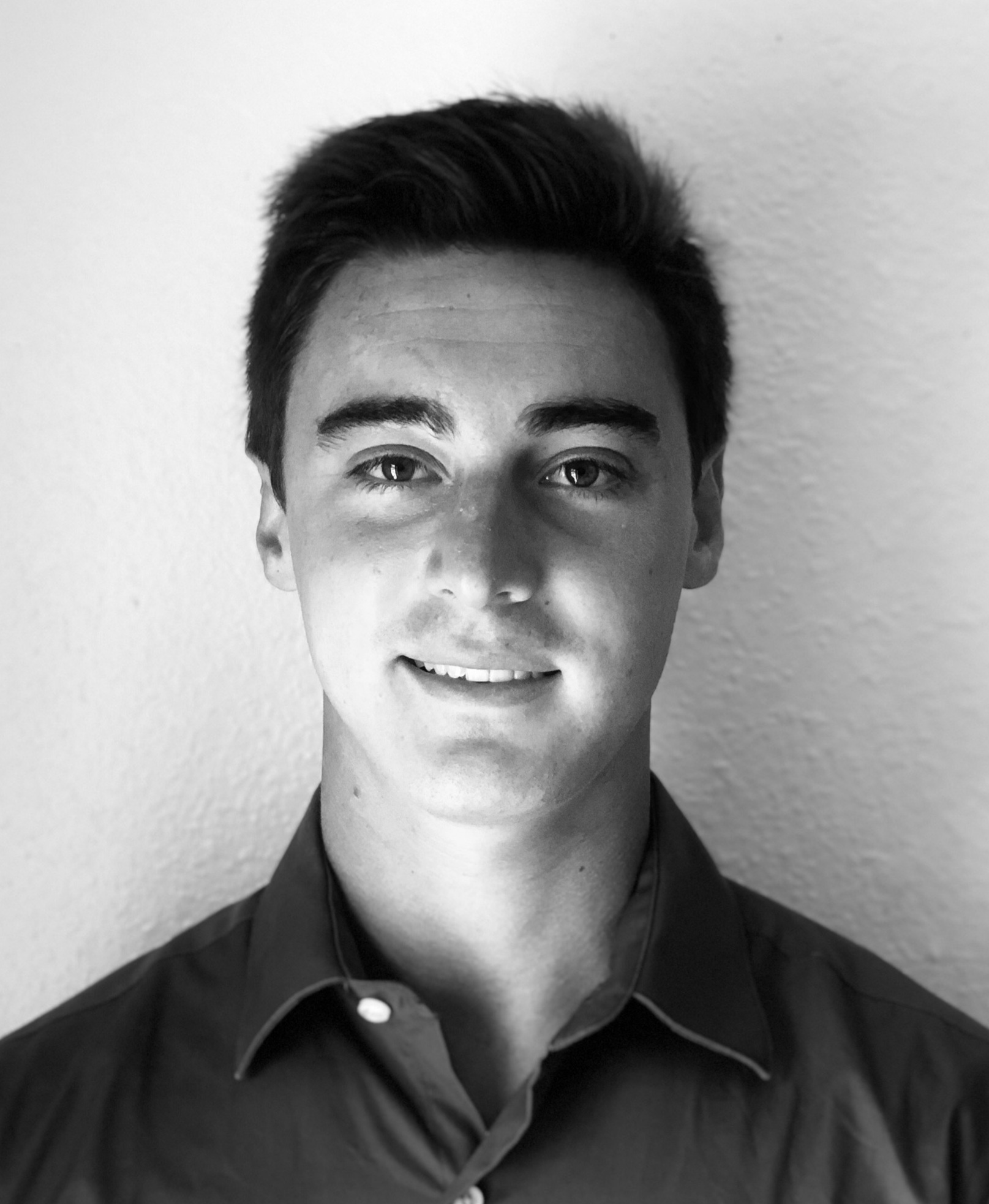}}]{Matthew Cavorsi}
is a Mechanical Engineering Ph.D. student in the School of Engineering and Applied Sciences at Harvard University, advised by Prof. Stephanie Gil. He works on multi-robot coordination and resilience to adversaries in multi-robot systems. He received his Bachelor’s degree in Aerospace Engineering from the Pennsylvania State University in 2017.
\end{IEEEbiography}
\vspace{-0.1in}
\begin{IEEEbiography}[{\includegraphics[width=1in,height=1.25in,clip,keepaspectratio]{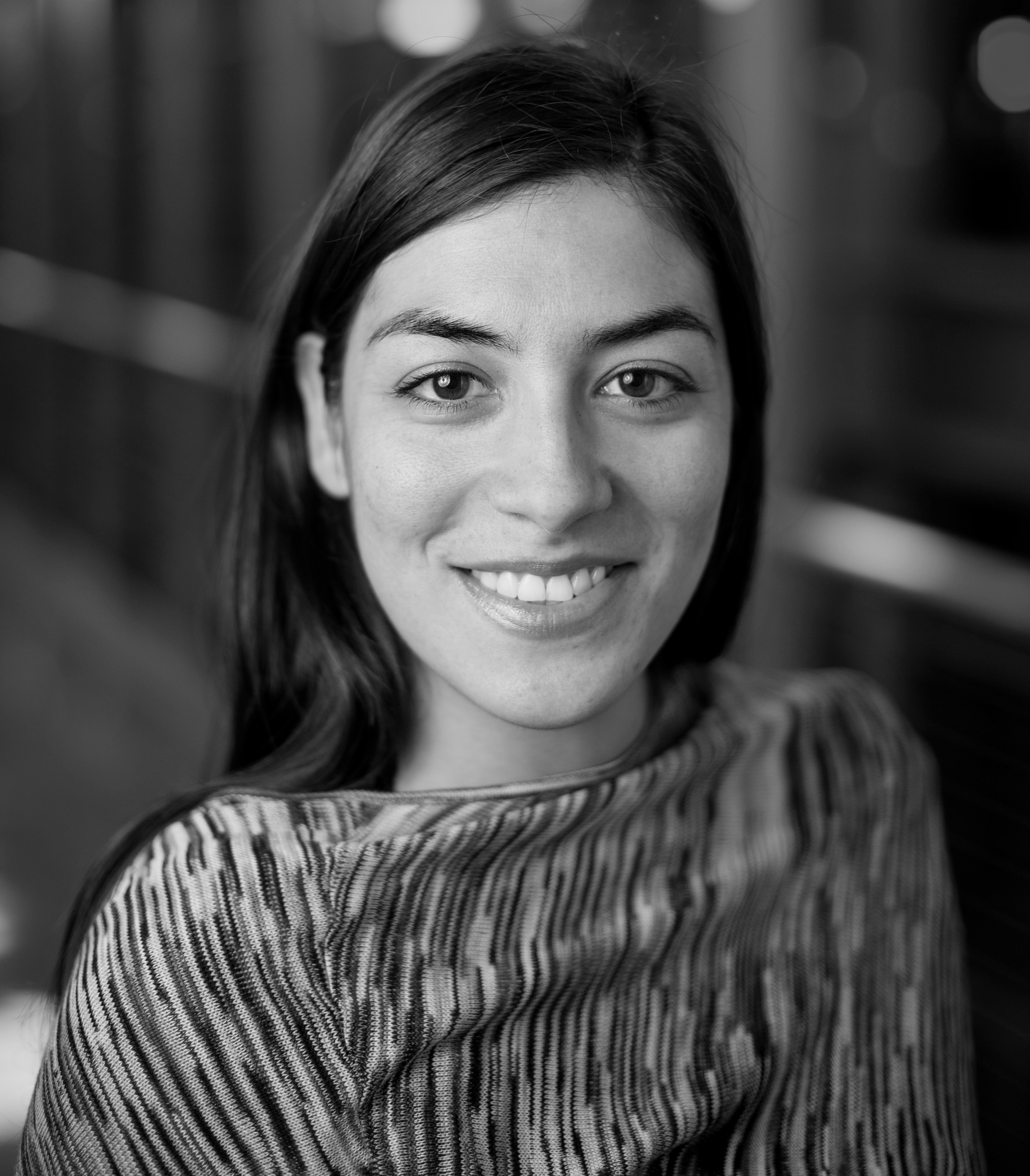}}]{Stephanie Gil}
is an Assistant Professor in the Computer Science Department at the School of Engineering and Applied Sciences at Harvard University where she directs the Robotics, Embedded Autonomy and Communication Theory (REACT) Lab. Prior she was an Assistant Professor at Arizona State University.  Her research focuses on multi-robot systems where she studies the impact of information exchange and communication on resilience and trusted coordination. She is the recipient of the 2019 Faculty Early Career Development Program Award from the National Science Foundation (NSF CAREER), and has been selected as a 2020 Alfred P. Sloan Fellow.  She obtained her PhD from the Massachusetts Institute of Technology in 2014. 
\end{IEEEbiography}

% You can push biographies down or up by placing
% a \vfill before or after them. The appropriate
% use of \vfill depends on what kind of text is
% on the last page and whether or not the columns
% are being equalized.

%\vfill

% Can be used to pull up biographies so that the bottom of the last one
% is flush with the other column.
%\enlargethispage{-5in}

%\tableofcontents
\end{document}